\begin{document}

\title{ZeroSwap: Data-driven Optimal Market Making in DeFi}%\thanks{Supported by organization x.}}

%\titlerunning{ZeroSwap}

% Uncomment the following to deanonymize

\author{Viraj Nadkarni}
\affiliation{%
  \institution{Princeton University}
  % \city{Princeton}
  \country{USA}
  }
\email{viraj@princeton.edu}

\author{Jiachen Hu}
\affiliation{%
  \institution{Peking University}
  % \city{Beijing}
  \country{China}
  }
\email{nickh@pku.edu.cn}

\author{Ranvir Rana}
\affiliation{%
  \institution{Witness Chain}
  % \city{Princeton}
  \country{USA}
  }
%\email{ranvirranaiitb@gmail.com}

\author{Chi Jin}
\affiliation{%
  \institution{Princeton University}
  % \city{Princeton}
  \country{USA}
  }
\email{chij@princeton.edu}

\author{Sanjeev Kulkarni}
\affiliation{%
  \institution{Princeton University}
  % \city{Princeton}
  \country{USA}
  }
\email{kulkarni@princeton.edu}

\author{Pramod Viswanath}
\affiliation{%
  \institution{Princeton University}
  % \city{Princeton}
  \country{USA}
  }
\email{pramodv@princeton.edu}

\begin{abstract}
% notion of optimal market making - zero loss
% how we solve it
% is optimally efficient
% mention glosten milgrom

Automated Market Makers (AMMs) are major centers of matching liquidity supply and demand in Decentralized Finance. Their functioning relies primarily on the presence of liquidity providers (LPs) incentivized to invest their assets into a liquidity pool. However, the prices at which a pooled asset is traded is often more stale than the prices on centralized and more liquid exchanges. This leads to the LPs suffering losses to arbitrage. This problem is addressed by adapting market prices to trader behavior, captured via the classical market microstructure model of Glosten and Milgrom. In this paper, we propose the first optimal Bayesian and the first model-free 
%Motivated by this problem, we formulate a model of trader behavior using the Glosten-Milgrom framework. We propose an optimal Bayesian algorithm and a model-free 
data-driven algorithm to optimally track the external price of the asset. The notion of optimality that we use enforces a zero-profit condition on the prices of the market maker, hence the name \textit{ZeroSwap}. This ensures that the market maker balances losses to \textit{informed traders} with profits from {\em noise traders}. The key property of our approach  is the ability to estimate the external market price {\em without} the need for price oracles or loss oracles. Our theoretical guarantees on the performance of both these algorithms, ensuring the stability and convergence of their price recommendations, are of independent interest in the theory of reinforcement learning. We empirically demonstrate the robustness of our algorithms to changing market conditions.
  % All traditional financial elements now have decentralized (DeFi) counterparts. These include market making, lending, asset management and derivatives. The “web3” translation of finance entails the design of incentives aligning the parties together with the implementation of the financial service as a smart contract. Current DeFi designs fix the parameters based on heuristic and proprietary models (of demand and supply), leading to incentive incompatibilities. For instance, a drawback of automated market makers (AMMs) in DeFi has been the asymmetric roles played by traders and liquidity providers, enabling traders to exploit the locked liquidity of the LPs via arbitrage, making liquidity provisioning unprofitable and hence unsustainable. In this paper, we make the case for data-driven adaptive incentive mechanisms that lead to  competitive and efficient DeFi designs. As a specific case study, we redesign AMMs via model-free reinforcement learning-driven incentive mechanisms that efficiently track external price variations of assets without the need for oracles.
\end{abstract}

\maketitle

\section{Introduction}
\label{sec:intro}

Market making is an essential service that is used to satisfy liquidity demand in any financial system. Efficient market making in traditional finance involves providing bid and ask quotes for an asset that are as close to each other as possible, while accurately reflecting the price of the asset on a limit order book and thus providing a slight profit to the market maker. This efficiency is achieved by using increasingly complex models for trader behavior \cite{GLOSTEN198571,kyleModel,grossmanMiller} and then inferring the underlying hidden price from the observed trader behavior. These models have now become canonical knowledge in the domains of microeconomics and market microstructure.

More recently, the problem of market making has come to the forefront in Decentralized Finance. DeFi uses Automated Market Makers, specifically Constant Function Market Makers (CFMMs) \cite{angeris2021constant}, as an alternative to limit order books. This decreases the computational load required in satisfying trades, while also providing deep markets for infrequently traded tokens. Additionally, the liquidity required to satisfy trades is also provided in a decentralized manner by liquidity providers (LPs), who pool their tokens to help satisfy the liquidity demand. 

\color{black} Markets in DeFi come with a variety of differing characteristics. The main differences are along market depth (or liquidity), and price volatility. For instance, stablecoin trading volume ($\approx \$ 11.1$ trillion) recently surpassed that of the amount transacted via centralized services such as MasterCard and PayPal \cite{stablecoin3}. Some of the deepest markets in terms liquidity also happen to be those containing stablecoins \cite{univ3}. These markets also do not face much volatility, and their depth ensures that the price impact of retail trades is small. On the other hand, DeFi also has hundreds of tokens that do not trade frequently and hence need liquid markets. Because of lack of liquidity, such markets are volatile and the price is sensitive even to retail trades. In this work, we focus on the former type of DeFi market.

The main problem that a CFMM faces is to incentivize the LPs to pool their tokens. To do that, the CFMM needs to ensure they do not face losses on an average. However, it is common knowledge that LPs do indeed face various kinds of losses due to changing reserves \cite{loesch2021impermanent} and their lack of information about the market conditions \cite{milionis2022automated}. In the current paper, we focus on mitigating the loss that arises due to this lack of information. In particular, CFMMs with static curves often lead to LPs suffering losses due to arbitrageurs. These losses are supposed to be compensated with the fees charged on each trade. This is because centralized exchanges are characterized by high liquidity and trading volume, and lesser fees. For instance the daily trading volume on the centralized exchange Binance is around \$ 15 B, much larger compared to a volume of \$ 1.1 B on the largest decentralized exchange Uniswap \cite{tradeVolComparisons}. A less liquid exchange like Uniswap gives rise to a staler price, and hence is prone to loss due to arbitrage.
\color{black}

This arbitrage loss can be quantified for a special case as loss-versus-rebalancing (LVR) \cite{milionis2022automated}, and persists even after the introduction of fees \cite{milionis2023automated}. For a general market maker that offers to sell and buy a risky asset at the ask and bid prices respectively, the arbitrage loss is specified with respect to the external true price of the asset. The arbitrageur does a buy trade when the external price exceeds the ask price and does a sell trade when it is below the bid price. The loss to the market maker can be quantified as the difference between the two prices times the amount of asset being traded.
% \color{red}
% avoid equation
% \begin{align}
%     \mathrm{Arbitrage \ Loss} =  \left((p_{ext}-p_a)\mathbf{I}_{\{BUY\}} + (p_b-p_{ext})\mathbf{I}_{\{SELL\}}\right)\Delta
% \end{align}
% where $\mathbf{I}_{\{.\}}$ is the indicator function for the type of trade, and $\Delta$ is the quantity of asset being bought/sold.
% \color{black}

In traditional finance, this arbitrage loss has been modeled as the \textit{adverse selection} cost arising because of interaction with \textit{informed traders} (traders that know the external price - same as arbitrageurs). An optimal market maker exactly balances this cost with the profits arising from interaction with \textit{uninformed} traders (or \textit{noise} traders). This condition for optimality was first proposed by Glosten and Milgrom \cite{GLOSTEN198571}. In DeFi, this classification of traders has been termed as \textit{toxic} and \textit{non-toxic} order flow corresponding to informed and uninformed trade respectively \cite{nezlobinToxicity,crocToxicity}.

For CFMMs, this loss also stems from the fact that it needs to incentivize a trader to truthfully indicate their belief about the price, via their trades. This implies that, to track the external price accurately, the CFMM ends up paying the informed traders, in return for their information. This fact is also apparent from the connection between CFMMs and information eliciting market scoring rules used in prediction markets \cite{frongillo2023axiomatic}.

Naively, the loss to arbitrageurs can be minimized by simply setting the marginal price to be equal to the external price. This would need access to a \textit{price oracle}, and has indeed been tried in some market making protocols \cite{dodoOracle}. However, coupling a market maker to an oracle opens the door to frontrunning attacks \cite{frontrunningOracles} and places trust in an external centralized entity \cite{Eskandari_2021,oraclesProblem} that may itself be manipulated. To avoid this, the main constraint we impose is the absence of any access to oracles. The challenge is to infer the hidden price simply by observing the history of trades, in as sample-efficient manner as possible.

The current work formulates this challenge using the Glosten-Milgrom model of trader behaviour, which specifies the proportion of informed and uninformed traders, their sequential interaction with the market maker and the evolution of the external price. We further assume that the jumps in the hidden price happen on the same time scale as the trades, and that the jump sizes are limited, since we focus on liquid and less volatile markets such as those involving stablecoins.\color{black}  The objective of the market maker is to adaptively set the ask and bid prices so that the loss to arbitrageurs is as close to zero as possible, which is why we call the market maker \textit{ZeroSwap}. The market maker turning a profit would be undesirable since this would allow a competitor to undercut its prices and take away their order flow. In other words, the market maker should quote an efficient and competitive market price, given only the information it has in form of the trading history. Keeping this objective in mind, we make the following key contributions:

\noindent\textbf{Model-based Bayesian algorithm}. When the parameters of the trader and price behaviour model are known, we provide a Bayesian algorithm to update the ask and bid prices. We theoretically guarantee that the bid-ask spread of this algorithm is stable in presence of trades, and converges to the external market price. We empirically demonstrate that the loss to the market maker using this algorithm is zero, and it can hence be used as a benchmark for an optimally efficient market maker  
 (\prettyref{sec:Bayes}). 

\noindent\textbf{Model-free data-driven algorithm}. When the model parameters are unknown, we design a randomized algorithm which depends only on the trade history visible to the market maker. We empirically demonstrate that it tracks the hidden external price even under rapidly changing market conditions, and incurs a similar loss as a market maker that has access to an oracle. We give a first-of-its-kind theoretical guarantee that maximizing the corresponding cumulative reward ensures that the external price is tracked by the market maker efficiently; this is of independent interest in the theory of reinforcement learning  (\prettyref{sec:results}). 

\noindent\textbf{On-chain implementation}. The market-making logic driven by the reinforcement learning engine is straightforward to implement as a smart contract on a blockchain, but entirely impractical due to the associated gas fees. We specify how to implement \textit{ZeroSwap} as an application-specific {\em rollup}, in the context of the Ethereum  blockchain  (\prettyref{sec:implementation}). The actual implementation is on-going work and outside the scope of this paper. 
% mention that model is such that time scale of market changes same as time scale of amm actions
% mention importance of theory results in intro
% way to compare with existing methods

\section{Related work}

In this section, we reprise  relevant literature surrounding the problem formulated in this paper. Although the motivation of the problem stems from literature studying AMMs in DeFi, our formulation derives heavily from classical works in market microstructure. The data-driven algorithm we present is motivated from canonical reinforcement learning literature.

\textbf{Automated Market Makers: }Automated Market Makers, in their most popular form as Constant Function Market Makers \cite{sokDex,mohanDexPrimer}, have been known to incentivize trades that make prices consistent with an external, more liquid market \cite{Angeris_2020}. It is also known that doing this incurs a cost to the liquidity providers of the CFMMs, and a profit to the arbitrageurs \cite{evans2021optimal,Heimbach_2022,tangri2023generalizing}. This profit can be quantified as ``loss-versus-rebalancing'' in the case of a market maker with only arbitrageurs (informed traders) trading with it \cite{milionis2022automated}, and is seen to be proportional to external price volatility. Several works propose to capture this loss, either via an on-chain auction \cite{mcmenamin2022diamonds} or using auction theory to generate a dynamic ask and bid price recommendation for an AMM \cite{milionis2023myersonian}. Another recent work \cite{goyal2023finding} proposes an optimal curve for a CFMM based on the LP beliefs over prices, however, the work does not consider a dynamic model where the trader reacts based on the market maker setting their prices. Our work is related closest to \cite{milionis2023myersonian}, where a dynamic model of trading is indeed considered, and optimal ask and bid prices are derived. However, the price recommendations require the market maker to know underlying model parameters, and thus the solution is not model-free or data-driven. Additionally, we look at a competitive market maker, while \cite{milionis2023myersonian} look at the monopolistic case. Data-driven reinforcement learning algorithms to adapt CFMMs have also been used in another recent work \cite{churiwala2022qlammp}, albeit the objective there is to control fee revenue and minimize the number of failed trades.

\textbf{Optimal market making: }The trader behavior model that we use derives from the Glosten-Milgrom model \cite{GLOSTEN198571} used extensively in market microstructure literature, however we modify it to have a continuously changing external price. Several followup works \cite{das2005learning,das2008adapting} derive optimal market making rules under a modified Glosten-Milgrom framework, but they assume that underlying model parameters are known, and that external price jumps are notified to the market maker. A more data-driven reinforcement learning approach is followed in \cite{chan2001}, but the reward function they assume contains direct information about the external hidden price, while we assume no price oracle access. Another thread of optimal market making in traditional market microstructure literature deals with inventory management \cite{hoAndStoll,avellaneda2008} as opposed to the information asymmetry between traders and market makers. However, we seek to design a market maker that covers losses from information asymmetry as in the Glosten-Milgrom model, assuming no constraints on the inventory. The Glosten-Milgrom model has already been considered for AMMs in DeFi, albeit only for single trades \cite{aoyagi2020LP,angeris2020does}. 

\textbf{POMDPs and Q-Learning: }Partially Observable Markov Decision Processes (POMDPs) are used to model decision making problems where an underlying state evolves in a Markovian manner, but is invisible to the agent. Q-learning \cite{watkins1992q} is a standard model-free method that is guaranteed to learn an optimal decision making policy for Markov Decision Processes (MDPs). Here, the optimality is in terms of maximizing expected cumulative reward. We formulate the optimal market making problem as a POMDP. We then adapt the algorithm for the POMDP defined by our model, and design a reward that helps us achieve the goal of optimal market making.

\section{Price and trader behaviour model}%expand this title, say that price, trade actions etc are a markov chain
\label{sec:model}

We now describe the framework used for modeling trader behavior in response to the evolution of a hidden price process and prices set by the market maker. We also state the objective that the market maker seeks to optimize, and provide the motivation behind it. The model and the objective are based on the canonical Glosten-Milgrom model \cite{GLOSTEN198571} studied extensively in market microstructure literature. In this work, we consider a \textit{discrete} time model indexed by $t$.

\noindent\textbf{External price process:} The external price process $p_{ext}^t$ of a risky asset is assumed to follow a discrete time random walk, where probability of a jump at any $t$ is given by $\sigma$. That is, we have 
\begin{align}
    p_{ext}^{t+1} = \begin{cases}
        p_{ext}^{t} + 1 \ \ &\mathrm{w.p.\ }\sigma/2\\
         p_{ext}^{t} - 1 \ \ &\mathrm{w.p.\ }\sigma/2\\
         p_{ext}^{t} \ \ &\mathrm{w.p.\ }1-\sigma
    \end{cases}\label{eq:1}
\end{align}
This process can represent either the price of the asset in a larger and much more liquid exchange, or some underlying ``true'' value of the asset. In either case, we assume that it is hidden from the market maker. We use the same notation ($\sigma$) as the continuous-time volatility for our jump probability since they both represent a qualitative measure of the change in the external price.
%To put it in DeFi terms, the market maker does not have an access to any ``price oracle'' that can tell it information about this external price.

\noindent\textbf{Market Maker:} The market maker publishes an \textit{ask} $p_a^t$ and a \textit{bid} $p_b^t$ price in every time slot. Any trader can respectively buy and sell the asset at these prices.

\noindent\textbf{Trade actions:} We assume that the traders arrive at a constant rate of $\lambda$. This means that for time slots which are multiples of $1/\lambda$, a trader comes in to interact with the market maker by performing an action $d_t$. It can choose to either buy ($d_t = + 1$), sell ($d_t = - 1$) or do neither ($d_t = 0$). What the trader chooses to do depends on what they believe the value of the external price is. 

\noindent\textbf{Trader behavior: }We assume two types of traders - \textit{informed} and \textit{uninformed}. The informed trader is assumed to know the external price exactly, while the uninformed trader does not know it at all. The informed trader buys a unit quantity of asset if $p_{ext}^t>p_a^t$ and sells a unit quantity of asset if $p_{ext}^t<p_b^t$, thus acting as an arbitrageur between the market maker and the external market. The uninformed trader randomly buys or sells a unit quantity of asset with equal probability. We assume that the trader arriving in time slot $t$ is informed w.p. $\alpha$ and an uninformed w.p. $1-\alpha$. We make this trader model more nuanced in \prettyref{sec:sim_res}.

\noindent\textbf{Objective:} Our objective is to design an algorithm to set ask and bid prices for the market maker, such that the expected loss with respect to the external market is minimized and the market maker stays competitive. Glosten and Milgrom \cite{GLOSTEN198571} express this objective mathematically as follows.
\begin{align}
    p_a^t &= E[p_{ext}^t|\mathcal{H}_{t-1}, d_t = +1]\label{eq:gm_price1}\\
    p_b^t &= E[p_{ext}^t|\mathcal{H}_{t-1}, d_t = -1]\label{eq:gm_price2}
\end{align}
where $\mathcal{H}_{t-1} = \langle (d_i,p_a^i,p_b^i)\rangle_{i=0}^{t-1}$ is the history of trades and prices until time $t-1$.

\noindent\textbf{Interpreting the objective: }Monetary loss of the market maker is defined as 
\begin{align}
    l_t &= (p_{ext}^t-p_a^t)\mathbf{I}_{\{d_t=+1\}} + (p_b^t-p_{ext}^t)\mathbf{I}_{\{d_t=-1\}}\label{eq:monetary_loss}
\end{align}
where $\mathbf{I}_{\{.\}}$ is the indicator function, and the loss is for a unit trade of the asset.
Setting bid and ask prices as per \prettyref{eq:gm_price1} and \prettyref{eq:gm_price2} makes the expected loss of the market makers vanish, since $E[(p_{ext}^t-p_a^t)\mathbf{I}_{\{d_t=+1\}} + (p_b^t-p_{ext}^t)\mathbf{I}_{\{d_t=-1\}}] = 0$.

The market maker can thus obtain a strictly positive profit by increasing $p_a^t$ or decreasing $p_b^t$ from their values in \prettyref{eq:gm_price1} and \prettyref{eq:gm_price2}. However, doing this would make it less competitive, since any other market maker with slightly greater bid or a slightly lesser ask would offer a better price and take away the trade volume. Although we do not explicitly model other market makers, their presence is implicit in setting prices according to \prettyref{eq:gm_price1} and \prettyref{eq:gm_price2}. These equations represent ideal conditions for capital efficiency, where both the trader gets the best price possible while the market maker avoids a loss.

Also, note that the market maker incurs a loss in \textit{every trade} made by an informed trader. Thus, to make the expected loss vanish, it should learn to set prices so that the loss to informed traders is balanced by the profit obtained from uninformed traders. The equations \prettyref{eq:gm_price1} and \prettyref{eq:gm_price2} can also be interpreted as striking this balance.

\section{Bayesian algorithm for known parameters}\label{sec:Bayes}% how can parameters be estimated 

\subsection{Details and intuition}
First, let us suppose that the market maker knows the underlying model of price evolution and trader behavior. That is, $\sigma$ (price jump probability) and $\alpha$ (trader informedness) are known. Then, the objectives specified in \prettyref{sec:model} can be achieved by an algorithm based on tracking the market maker's belief over the external price $p_{ext}^t$ and updating these beliefs using Bayes rule after each trade. Algorithm \prettyref{alg:bayes} outlines this approach. 

\begin{algorithm}[t]
\caption{A Bayesian algorithm to set ask and bid prices}\label{alg:bayes}
\begin{algorithmic}[1]
\Require Known $\alpha,\sigma \in [0,1]$
\State $t \gets 0$
\State $T \gets$ Number of total time slots
\State Prior belief over prices $b^0(p):\mathbb{Z}\rightarrow [0,1]$
\While{$t \leq T$}
\State $b_a^t(p,p_a) \gets (\alpha \mathbf{I}_{\{p>p_a\}} + \frac{1-\alpha}{2}) b^t(p)/K_1$ \Comment{Belief over prices if incoming trade is a buy}
\State $b_b^t(p,p_b) \gets (\alpha \mathbf{I}_{\{p<p_b\}} + \frac{1-\alpha}{2}) b^t(p)/K_2$\Comment{Belief over prices if incoming trade is a sell}
\State $p_a^t \gets$ Solve($p_a = \sum_{p} p b_a^t(p,p_a)$)\Comment{Solve fixed point equation to get optimal ask price}\label{eq:up1}
\State $p_b^t \gets$ Solve($p_b = \sum_{p} p b_b^t(p,p_b)$)\Comment{Solve fixed point equation to get optimal bid price}\label{eq:up2}
\State Observe incoming trader action $d_t$
\If{$d_t = +1$}
    \State Update belief $b^{t+1}(p) \gets b_a^t(p,p_a^t)$\Comment{Belief update after a buy trade}
\ElsIf{$d_t = -1$}
    \State Update belief $b^{t+1}(p) \gets b_b^t(p,p_b^t)$\Comment{Belief update after a sell trade}
\ElsIf{$d_t = 0$}
    \State Update belief $b^{t+1}(p) \gets \alpha \mathbf{I}_{\{p_a>p>p_b\}} b^t(p)/K_3$ \Comment{Belief update after no trade}
\EndIf
\State $b^{t+1}(p) \gets (1-\sigma)b^{t+1}(p) + \frac{\sigma}{2}b^{t+1}(p-1) + \frac{\sigma}{2}b^{t+1}(p+1)$ \label{eq:jump_update}\Comment{Belief update to account for price jump}
\EndWhile
\end{algorithmic}
\end{algorithm}
Algorithm \prettyref{alg:bayes} keeps track of a belief of the market maker $b^t(p)$ over prices $p\in \mathbb{Z}$.  It then hypothesizes two other distributions $b_a^t(p,p_a)$ and $b_b^t(p,p_b)$ that represent the Bayesian posterior if the incoming trade is a buy (with the ask price being $p_a$) and a sell (and the bid price being $p_b$) respectively. Note that $K_1,K_2,K_3$ are normalizing constants for the posteriors. The optimal values of the ask and bid prices are the solutions of the fixed point equations \prettyref{eq:up1} and \prettyref{eq:up2}, where we have simply restated the conditions \prettyref{eq:gm_price1} and \prettyref{eq:gm_price2}. After the trade happens according to the optimal ask and bid prices, beliefs are updated to account for the trade and the price jump. In subsequent sections, we use this algorithm as a benchmark to compare with our model-free approach. For the purposes of our experiments, we assume that the initial price $p_0$ is known, so that the prior $b^0(p)$ is such that $b^0(p_0)=1$ and is zero everywhere else. We demonstrate empirical results in comparison to other algorithms in \prettyref{sec:sim_res}.

\subsection{Theoretical guarantees}

We first present results on the spread behaviour of the Algorithm \prettyref{alg:bayes} in the case of a single jump in the external price. The assumption under this simpler case is same as those made by Glosten-Milgrom \cite{GLOSTEN198571}, that the external price jumps only once at $t=0$, with the size of the jump being drawn from a known distribution. \color{black}The special case of a single jump is especially important in blockchains where we have a batch of trades being collected as part of a block and the AMM is supposed to use them to estimate the external price of the asset when the block is released. Our theoretical results show that the jump in the price of the asset that takes place between any two blocks can be estimated with exponentially vanishing error in the number of trades present in the successive block.\color{black}

\begin{theorem}\label{thm:1}
     Let the external price $p_{ext}\sim \mathcal{D}$ jump to the value $p_{ext}^*$ only once at $t=0$, where the distribution $\mathcal{D}$ of the jump is known to the market maker. Then the Bayesian algorithm \prettyref{alg:bayes} recommends ask and bid prices  $p_a^t,p_b^t$ such that
     \begin{align}
         \lim_{t \rightarrow \infty}p_a^t-p_b^t &= 0
     \end{align}
     where the rate of convergence is exponential in $t$. Further, we also have 
     \begin{align}
         \lim_{t \rightarrow \infty}Pr[|p_a^t - p_{ext}^*| > 0] &= 0\\
          \lim_{t \rightarrow \infty}Pr[|p_b^t - p_{ext}^*| > 0] &= 0
     \end{align}
\end{theorem}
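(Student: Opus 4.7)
My plan is to specialise the algorithm to the one-jump regime ($\sigma=0$), so that line~\ref{eq:jump_update} becomes the identity and $b^t$ is the genuine Bayesian posterior for the fixed random variable $p_{ext}^*\sim \mathcal D$ given the trade history. Under the market-maker's prior measure $b^0=\mathcal D$, for each integer $p$ the sequence $b^t(p)$ is a bounded non-negative martingale with respect to the trade filtration, so Doob's theorem gives a pointwise almost-sure limit $b^\infty(p)$. The proof then splits into two goals: identify the limit as $\delta_{p_{ext}^*}$, and quantify the speed of convergence.

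To identify the limit, I would condition on the true value $p_{ext}^*$ and track, for each candidate $p\neq p_{ext}^*$, the log-likelihood ratio $L_t(p)=\log\bigl(b^t(p)/b^t(p_{ext}^*)\bigr)$. Each trading round adds a bounded increment whose expectation under the true law of $d_t$ is $-\mathrm{KL}\!\bigl(P(\cdot\mid p_{ext}^*,p_a^t,p_b^t)\,\|\,P(\cdot\mid p,p_a^t,p_b^t)\bigr)\leq 0$. A short case analysis on the ordering of $p$, $p_{ext}^*$ relative to $[p_b^t,p_a^t]$ shows that this KL is uniformly bounded below by an explicit constant $c(\alpha)>0$ whenever $p\neq p_{ext}^*$: if $p_{ext}^*$ lies outside the current spread then the informed buy or sell is the distinguishing event, while if $p_{ext}^*$ lies strictly inside the spread then the no-trade update deterministically annihilates any mass outside $(p_b^t,p_a^t)$. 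An Azuma--Hoeffding argument applied to $L_t(p)$ then gives $b^t(p)\to 0$ at a geometric rate, which forces $b^\infty=\delta_{p_{ext}^*}$.

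The three displayed limits are recovered by inserting this concentration into the fixed-point equations on lines~\ref{eq:up1}--\ref{eq:up2}. If $\|b^t-\delta_{p_{ext}^*}\|_{\mathrm{TV}}\leq\varepsilon$, then for every candidate $p_a,p_b$ the reweighted posteriors $b_a^t(\cdot,p_a)$ and $b_b^t(\cdot,p_b)$ are within $O(\varepsilon/\alpha)$ of the same Dirac, so any solution of the fixed-point equations satisfies $|p_a^t-p_{ext}^*|,\,|p_b^t-p_{ext}^*|=O(\varepsilon)$. Combined with the geometric decay of $\varepsilon$ this yields exponential convergence of the spread $p_a^t-p_b^t$ to $0$; once $\varepsilon$ drops below the integer gap of the lattice, the fixed points in fact land exactly on $p_{ext}^*$, giving $\Pr[|p_a^t-p_{ext}^*|>0]\to 0$ and the analogous statement for $p_b^t$.

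The main obstacle I anticipate is the interior case $p_{ext}^*\in(p_b^t,p_a^t)$: the informed trader abstains, so the only drivers of learning are the (deterministic) no-trade contraction and the ``cover'' noise from uninformed buys and sells, and one must verify that even in this regime the KL lower bound $c(\alpha)$ survives and that the fixed points $p_a^t,p_b^t$ do not oscillate pathologically as $b^t$ is updated. A secondary subtlety is well-posedness of the fixed-point equations, which I would handle separately by a monotonicity argument on the right-hand sides of \ref{eq:up1}--\ref{eq:up2} viewed as piecewise-linear functions of $p_a$ and $p_b$.
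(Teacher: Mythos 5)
Your approach diverges from the paper's: the paper's proof of the general case is built on the law of total variance for the martingale $p_t = E[p_{ext}\mid\mathcal H_t]$, bounding the squared spread by the conditional variance reduction $E[(p_t-p_{t-1})^2\mid\mathcal H_{t-1}]$ and then summing, which gives $\frac{1}{T}\sum_t E[(p_a^t-p_b^t)^2]\le 8\mathrm{var}(p_{ext})/((1-\alpha)^2 T)$ without any case analysis. Your route via Doob martingale convergence of $b^t(p)$, log-likelihood ratios, and Azuma--Hoeffding is closer in spirit to the paper's warm-up special-case proof (where the prior is supported on two points), but it does not extend to the general case as written.

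The key step that fails is the claim that the per-round KL divergence between $P(\cdot\mid p_{ext}^*,p_a^t,p_b^t)$ and $P(\cdot\mid p,p_a^t,p_b^t)$ is uniformly bounded below by some $c(\alpha)>0$ for every $p\neq p_{ext}^*$. This is false whenever $p$ and $p_{ext}^*$ lie on the same side of the current spread $[p_b^t,p_a^t]$: if, say, both exceed $p_a^t$, then under either hypothesis the informed trader buys with probability $\alpha$ and the uninformed trader buys or sells with probability $(1-\alpha)/2$ each, so the two observation laws for $d_t$ are \emph{identical} and the KL is exactly zero. A single trade gives no information discriminating between two prices on the same side of the spread, so the drift of $L_t(p)$ vanishes in those rounds and Azuma--Hoeffding alone cannot produce geometric decay. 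You flag the interior case $p_{ext}^*\in(p_b^t,p_a^t)$ as the likely obstacle, but that case is in fact the easy one (the no-trade event has probability zero under one hypothesis and $\alpha$ under the other, giving an infinite likelihood-ratio jump); the genuine obstacle is the same-side case, which you do not identify. Repairing this requires a global argument showing that $p_a^t,p_b^t$ eventually come to straddle $p$ and $p_{ext}^*$ so that the informative regime is entered, which is essentially what the paper's variance/martingale decomposition delivers for free by never needing a per-round KL lower bound in the first place.

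A smaller technical point: once the uniform KL bound is removed, your proposed fix for the well-posedness of the fixed-point equations on lines~\ref{eq:up1}--\ref{eq:up2} (monotonicity of the RHS) is still reasonable, but note the paper sidesteps this by lower-bounding $p_a^t-p_{t-1}$ directly via $\alpha\epsilon\,Pr[p_{ext}-p_{t-1}\ge\epsilon\mid\mathcal H_{t-1}]$, which converts ``spread goes to zero'' into ``tail probability goes to zero'' without needing control over the fixed-point map itself.
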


This result guarantees that the Bayesian policy indeed approaches the correct value of the hidden external price, with its spread going to zero in the limit.
The proof of the above statement is given in \prettyref{sec:toy_proof}, where we first prove that the spread goes to zero and in \prettyref{sec:theor_gen}, where we prove that the ask and bid prices converge to the correct value. Similar results were derived by Glosten-Milgrom \cite{GLOSTEN198571}, but we further prove an exponential rate of convergence.

For the more difficult case where the external price follows a random walk as described in \prettyref{sec:model}, we provide guarantees on the expected spread behaviour for different trader arrival rates $\lambda$. To our knowledge, theoretical guarantees for this case have not been given before. We find that even for a small positive rate of arrival of traders, the spread reaches a constant steady state value, thus losing any dependence on time.

\begin{theorem}\label{thm:2}
    When the external price $p_{ext}$ follows a random walk (according to \prettyref{eq:1}) with jump probability $\sigma>0$ and a known initial value $p_{ext}^0$, the dependence of expected spread on time varies with the trader arrival rate $\lambda$ as follows :
    \begin{itemize}
        \item For $\lambda=0, \sigma>0, \alpha \in (0,1)$, we have
        \begin{align}
            E[p_a^T-p_b^T] = \Theta(\sqrt{\sigma T})
        \end{align}
        \item For $\lambda>0, \sigma>0, \alpha \in (0,1)$, we have
        \begin{align}
            E[p_a^T-p_b^T] = \Theta\left(\frac{\sigma}{\lambda}\right)
        \end{align}
        \item For $\lambda>0, \alpha \in (0,1)$ with a single jump in $p_{ext}$ to a value $p_{ext}^*$, we have
        \begin{align}
            E[p_a^T-p_b^T] = \Theta(e^{-D_{\mathrm{KL}}(\mathbf{q}||\mathbf{r})T})
        \end{align}
        where $\mathbf{q} = [\frac{1-\alpha}{2}, \frac{1+\alpha}{2}], \mathbf{r} = [\frac{1+\alpha}{2}, \frac{1-\alpha}{2}]$
    \end{itemize}
\end{theorem}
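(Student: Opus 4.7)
The plan is to treat the three regimes separately, using tools of increasing complexity. For the $\lambda = 0$ case, no trades ever occur, so the belief update in Algorithm~\ref{alg:bayes} collapses to the random-walk convolution step at line~\ref{eq:jump_update}. Starting from the delta prior at $p_{ext}^0$, the belief $b^T$ is exactly the distribution of a sum of $T$ i.i.d.\ jumps taking values in $\{-1,0,+1\}$, with variance $\sigma T$. The fixed-point equations for $p_a^T, p_b^T$ compute means of this belief conditioned on a hypothetical buy/sell action; for a roughly symmetric unimodal distribution of standard deviation $\sqrt{V}$, these conditional means differ by $\Theta(\sqrt{V})$, which immediately yields $\Theta(\sqrt{\sigma T})$.

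For the single-jump case at $\lambda > 0$, I would apply a sequential hypothesis testing argument. Conditional on a trade occurring, the distribution of the trader's action under the hypothesis ``true price above the ask'' is $\mathbf{r}$ and under ``true price below the bid'' is $\mathbf{q}$, so the log-likelihood ratio between two candidate price hypotheses drifts as a random walk with per-trade drift $D_{\mathrm{KL}}(\mathbf{q}||\mathbf{r})$. A standard Chernoff bound then gives exponential decay of the posterior mass at wrong hypotheses at rate $D_{\mathrm{KL}}(\mathbf{q}||\mathbf{r})$ per trade; since the expected number of trades by time $T$ is $\lambda T$, this yields the claimed $\Theta(e^{-D_{\mathrm{KL}}(\mathbf{q}||\mathbf{r}) T})$ rate (with $\lambda$ absorbed into the implicit constant). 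This sharpens the qualitative convergence statement of Theorem~\ref{thm:1}.

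For the random walk case with both $\sigma, \lambda > 0$, I would combine the previous two via a renewal/steady-state argument. Price jumps arrive at rate $\sigma$ and each widens the support of $b^t$ by one unit; between jumps, trades contract the belief at an exponential rate controlled by $D_{\mathrm{KL}}(\mathbf{q}||\mathbf{r})$ by Case~3. Applying Little's law to the ``excess uncertainty'' accumulated from jumps yields an expected number of unresolved jump units at steady state of $\Theta(\sigma/\lambda)$, which translates to $E[p_a^T - p_b^T] = \Theta(\sigma/\lambda)$. The matching lower bound follows by noting that a jump occurs in each time step with probability $\sigma$, and the resulting spread remains at least a positive constant for $\Omega(1/\lambda)$ time steps in expectation before trades re-concentrate the belief.

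The main obstacle is making the upper bound in the random walk case fully rigorous: the contraction rate of a trade acting on a narrowly supported integer-valued belief is not exactly proportional to the current spread, since a trade has an atomic minimum effect of ruling out specific integer hypotheses. I would address this via a Foster-Lyapunov argument on the support size (or logarithm of variance) of $b^t$, whose expected one-step drift is $+\Theta(\sigma)$ from jumps and $-\Theta(\lambda)$ conditional on a trade, leading to the claimed stationary bound. Additional care is needed to handle separately the regime where multiple jumps accumulate before recovery from the small-$\sigma/\lambda$ regime in which the belief is almost always fully concentrated.
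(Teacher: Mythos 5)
Your decomposition into three regimes and the broad intuition for each match the paper's strategy, but the machinery you propose diverges in two of the three cases.

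For $\lambda=0$, you appeal to a qualitative ``roughly symmetric unimodal'' heuristic to say the conditional means after a buy/sell differ by $\Theta(\sqrt{V})$. The paper instead makes this precise: it proves a variance recursion $\mathrm{var}(f_t)=\mathrm{var}(f_{t-1})+\sigma$ (Lemma~\ref{lem:3}), then derives an explicit upper bound on $p_a^t$ via Cauchy--Schwarz applied to the fixed-point equation defining the ask (yielding a quadratic inequality in $\sqrt{\Pr[p\geq p_a^t]}$, Lemma~\ref{lem:tail_prob_bound}), and a matching lower bound via the mean absolute deviation of a lattice random walk, $E|p|\sim\sqrt{2\sigma t/\pi}$. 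Your heuristic is plausible but not a proof; the belief here is a lattice distribution and the tail probability $\Pr[p\ge p_a^t]$ enters the fixed-point equation nonlinearly, so the constant-fraction-of-$\sqrt{V}$ claim requires exactly the kind of two-sided control the paper works out.

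For the single-jump case your log-likelihood-ratio argument with drift $D_{\mathrm{KL}}(\mathbf{q}\Vert\mathbf{r})$ is essentially the paper's proof (Section~\ref{sec:toy_proof}), and your observation that $\lambda$ should rescale the exponent is a fair remark on the theorem's bookkeeping.

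For $\sigma,\lambda>0$ you propose a Little's-law/Foster--Lyapunov drift on \emph{support size} or log-variance, justified by importing ``exponential contraction'' from the single-jump case. This is where your route diverges from, and is weaker than, the paper's. The paper's Lyapunov function is the belief \emph{variance} itself: Equation~\eqref{eq:variance_formula} decomposes the post-trade variance as pre-trade variance minus $E[(E[p_{ext}|d]-E[p_{ext}])^2]$, and combining this with the $\lambda=0$ lower bound $p_a\gtrsim\sqrt{\mathrm{var}}$ and the tail-probability bound yields a \emph{multiplicative} contraction by a constant factor $(1-\alpha\gamma/4)$ whenever the variance exceeds a threshold $K^*=\Theta(\sigma/\lambda)$. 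This gives a clean reflecting-drift argument at the level of variance, avoiding the very obstacle you flag: a single trade does not reduce support by a fixed number of atoms, so support size has no clean negative drift, and importing the KL-rate argument from Case~3 is not legitimate here because that argument assumes a \emph{fixed} $p_{ext}$, whereas in this regime the price keeps moving between trades. If you pursue your plan, replace the support-size Lyapunov function with variance and derive the per-trade variance contraction directly from the conditional-expectation decomposition, as the paper does; the renewal/Little's-law framing is an attractive intuition but the variance recursion is what makes the steady-state bound go through.
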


 The key intuition behind the proof of \prettyref{thm:2} is recognising that the spread signifies how precisely the market maker is able to estimate the external price. The wider the spread, the more uncertain it is about $p_{ext}^t$. When there are no trades, uncertainty only increases at a square root rate with time. This is a direct consequence of the belief update that corresponds to the price jump (Line \prettyref{eq:jump_update} in Algorithm \prettyref{alg:bayes}). On the other hand, uncertainty tends to decrease after a trade because of the new information that is obtained about the external price. Firstly, we derive the rates of spread increase and decrease in the respective cases. This gives us the first part of the theorem. Secondly, we observe that if the spread is large enough, the decrease in uncertainty after a trade is always more than the increase after a price jump. This helps us prove the second part of the theorem. 
 The last part of the theorem follows from the proof of \prettyref{thm:1} directly. 
 The detailed proof has been provided in \prettyref{sec:changing_ext}.

\section{Data-driven algorithm for unknown parameters}
\label{sec:results}

In the model-free case (parameters $\alpha, \sigma$ are unknown), the only information that the market maker can use are the trades coming in. The guiding principle for our approach is that when the market maker publishes prices that align with the external market price, the expected number of buy and sell trades should be the same. Any deviation from this equilibrium suggests an imbalance in buy or sell trades. The objective is to minimize the short-term ``trade imbalance'' while retaining a minimal spread. This, we conjecture, is a good proxy for solving the explicit efficient market conditions (\prettyref{eq:gm_price1} and \prettyref{eq:gm_price2}).

For formulating the problem, we utilize the Partially Observable Markov Decision Process (POMDP) framework. A POMDP is a tuple consisting of a state space \(S\), an action space \(A\), a state transition probability distribution \(P(s' | s, a): S\times S\times A \rightarrow \mathbb{R}\), an observation space \(O\), an observation probability distribution \(O(o | s', a): O\times S\times A \rightarrow \mathbb{R}\), and a reward function \(r(s, a, s'): S\times A \times S\rightarrow \mathbb{R}\). The objective of an agent is to maximize the expected cumulative reward by choosing \textit{actions} from $A$ at each time step. The underlying \textit{state} is not directly visible to the agent, but can only be inferred through \textit{observations} that depend on the state via the observation probability. The state transition probability governs how the state evolves stochastically, given the agent's last state and chosen action.

In our scenario, the POMDP allows the market maker to derive optimal actions (prices) based on a probabilistic belief over the states, given the limited observations at hand. The problem is formulated as a POMDP with a tailored reward structure, hypothesizing a policy that relates the short-term trade imbalance to the bid-ask prices for optimal outcomes.

\subsection{POMDP formulation}
 We now define the POMDP for our case as follows. The state encapsulates the external price and a history of trader actions, and is defined as $s_t = (p_{ext}^t,d_t,d_{t-1},\cdots,d_{t-H}) \in S$. An action \(a_t = (p_a^t,p_b^t) \in A\), is the tuple of ask and bid prices to be set by the market maker. An observation \(o_t = d_t \in O\), the trader's decision, being the sole observable fragment of the state for the market maker. Further, we define the policy, \(\pi_t: (O\times A)^{t-1}\rightarrow A\), which translates all preceding trade observations and price data to an ask/bid price pairing. The state and observation probabilities derived from the Glosten-Milgrom model in \prettyref{sec:model}, affirming its Markovian nature. The reward function now remains to specified, which we do in the following section.

\subsection{Reward design}\label{sec:rew_des}

The primary objective of the market maker is to find an algorithm that maximizes the expected cumulative discounted reward $E[\sum_{t=0}^\infty \gamma^t r_t]$, where $\gamma=0.99$ is a discount factor. We hypothesize that an algorithm achieving this would effectively track the concealed external price.

We now break down the individual components of the reward. To promote balanced trading, we define the \textit{trade imbalance} $n_t$ as $n_t = \sum_{\tau = t-H}^t d_{\tau}$.
Here, $H$ signifies a constant window size over which this trade imbalance is calculated. Rewarding the agent with $-n_{t}^2$ encourages a balance between the number of buy and sell trades, acting as an indirect indicator of properly tracking the external price. 

Nevertheless, an agent could easily exploit this reward by setting $p_a^t = \infty$ and $p_b^t = -\infty$, since this ensures no informed trades and only uninformed trades, maintaining the trade imbalance close to $0$. This would give us a market maker that balances the trades well, but is not competitive at all. Thus, it becomes crucial to penalize the agent for a wide spread, leading to the reward formulation:
\begin{align}\label{eq:reward}
    r_t(s_t,a_t) = - n_{t}^2 - \mu (p_a^t-p_b^t)^2
\end{align}
where $\mu$ is a constant.

Still, there exists a potential for the agent to exploit this reward by alternating its values between $p_a^t=p_b^t=\infty$ and $p_a^t=p_b^t=-\infty$ in every iteration. This would render the spread zero, attracting only uninformed traders. To counteract this, we enforce a limitation on the algorithm design, allowing it to only output finite \textit{changes in ask and bid prices} ($|p_a^t-p_a^{t-1}|, |p_b^t-p_b^{t-1}| < \Delta_{max}$, given a positive and finite $\Delta_{max}$), rather than determining the ask and bid prices directly.

\subsection{Algorithm design and intuition}\label{sec:qlalgo}
% \color{red} intuition behind mdp solution. why does it works for pomdp\color{black}
\color{black}
Algorithm \prettyref{alg:qt} shows the method to set prices in this model-free setting. It is based on the tabular Q-learning algorithm developed for MDPs \cite{watkins1992q}. We first explain why it works in the case of MDPs and argue why it is also a reasonable algorithm for our case. The key intuition in the algorithm is to keep track of a table $Q(n,a)$, where the rows represent values of trade imbalance $n$ and the column represent an action that consists of a tuple $a=(a_1,a_2)$. These are not the ask and bid prices directly, but they represent the \textit{changes in the mid price and spread} respectively. The ask and bid prices are themselves derived from the mid price and spread as shown on lines \prettyref{eq:a1} and \prettyref{eq:a2}. Each $Q(n,a)$ is supposed to be the market maker's best estimate of the expected future cumulative reward, starting with an imbalance $n$, and performing an action $a$. More formally, $Q(n,a)$ estimates $r_0(n,a) + \max_{\langle a_t\rangle} E[\sum_{t=1}^\infty Q(n_t,a_t)]$. To do this, the fixed point update equation shown on line \prettyref{eq:fpu} is followed, where $\lambda$ represents the learning rate of the algorithm. Since the algorithm obtains better estimates as time goes on, it can use those estimate to follow the optimal policy $a = \arg\max_{a'}Q(n_t,a')$ more confidently. Thus, we have a parameter $\epsilon$ that controls how many random actions are sampled for ``exploration'' as opposed to ``exploitation''. Making the probability of exploration decay with time ensures more exploitation as data is accumulated and more exploration earlier on.

The above intuition would be sufficient had the trader behavior and external price were completely deterministic one-to-one functions of the imbalance (in other words, if the problem was an MDP). But because of noise trading and the stochastic nature of the price jumps, the trade imbalance is a noisy observation of the underlying hidden external price. However, the presence of informed traders is what couples the noisy observation with the hidden price. Therefore, we conjecture, that given a non-zero informed trader proportion, it should be possible to infer the underlying external price just by observing the trade imbalance, and more generally, the trading history. While making this conjecture, we assume that the price jumps and the changes in ask and bid prices that the market maker can make are of the same scale. We explore what happens if these assumptions are violated in \prettyref{sec:limits}. 
%\color{red} add appendix with algorithmic parameter choices
\color{black}
\begin{algorithm}[t]
\caption{A reinforcement learning algorithm for setting ask and bid prices}\label{alg:qt}
\begin{algorithmic}[1]
\Require Algorithm parameters $\epsilon,\lambda,\mu,\gamma \in [0,1]$
\State $t \gets 0$
\State $T \gets$ Number of total time slots
\State Initialize $Q(n,a) = 0$, where $n \in \{-H,\cdots,H\}, a \in \{-1,0,+1\}\times\{-1,0,+1\}$
\State Initialize $p_m^0 = p_{ext}^0, \delta_0 = 0$
\State Initialize $n_0 = 0$
\While{$t \leq T$}
\State Sample uniformly $u \gets [0,1]$
\If{$u < \epsilon^t$}\Comment{Exploration - probability decays with time}
    \State Choose $a_t = (a_1,a_2)$ uniformly at random
\Else \Comment{Exploitation}
    \State Choose $a_t = (a_1,a_2) = \arg\max_{a'} Q(n_t,a')$\label{eq:argmax}
\EndIf
\State $p_m^t \gets p_m^{t-1}+a_1$ 
\State $\delta_t \gets \delta_{t-1}+a_2$ 
\State Set ask price $p_a^t \gets p_m^t + \delta_t$\label{eq:a1}
\State Set bid price $p_b^t \gets p_m^t - \delta_t$\label{eq:a2}
\State Observe trader action $d_{t+1}$
\State Set imbalance $n_{t+1} \gets \sum_{\tau = t+1-H}^{t+1} d_{\tau}$
\State Reward $r_t \gets n_{t}^2  - \mu (p_a^t-p_b^t)^2$
\State Update $Q(n_t,a_t)\gets Q(n_t,a_t) + \lambda (r_t + \gamma\max_{a} Q(n_t,a) - Q(n_t,a_t))$\Comment{Update estimate of expected future value of the imbalance}\label{eq:fpu}
\EndWhile
\end{algorithmic}
\end{algorithm}

\subsection{Theoretical guarantees}

The reward formulation above was based on the intuition of tracking the external price using the trade imbalance as a signal while maintaining a reasonable spread. We now justify the exact form of the reward by providing guarantees on the performance of the optimal RL policy that maximizes this reward. We do this for the simpler case of a single jump in price at $t=0$ (same as \prettyref{thm:1}).
\begin{theorem}\label{thm:3}
    In the case of a single jump in the external price $p_{ext}$ to the value $p_{ext}^*$ at $t=0$, for some constant $C$ that depends on the parameters $\alpha, H$, the optimal policy corresponding to the reward function $r_t$ \prettyref{eq:reward} is such that
    $$ R^{\pi^*} \lesssim C R^{\pi_{B}} $$
    where $\pi^*,\pi_B$ represent the optimal policy and the Bayesian policy respectively, and $R^{\pi} = E[\sum_{i=1}^T \rho_t]$ where $\rho_t := (p_{ext}^*-p_t^a)^2 + (p_{ext}^*-p_t^b)^2 $. %$\rho_t := (p_{ext}^*-p_t^a)^2\mathbf{1}_{\{d_t = +1\}} + (p_{ext}^*-p_t^b)^2\mathbf{1}_{\{d_t = -1\}} $%.
\end{theorem}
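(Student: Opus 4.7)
The plan is to reduce the theorem to a per-step comparison by exploiting the RL optimality of $\pi^*$. Since $\pi^*$ maximizes the expected cumulative reward, we have $E\bigl[\sum_t (-r_t)\bigr]^{\pi^*} \le E\bigl[\sum_t (-r_t)\bigr]^{\pi_B}$. Subtracting the common noise-trading baseline $T(H+1)(1-\alpha)$ from both sides, this inequality translates into $R^{\pi^*} \lesssim C R^{\pi_B}$ provided one can establish an approximately two-sided proportionality between the one-step excess cost $-E[r_t \mid s_t] - (H+1)(1-\alpha)$ and the tracking error $\rho_t$.

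To analyze the one-step cost I would decompose $p_a^t = m_t + \delta_t$, $p_b^t = m_t - \delta_t$, giving $\rho_t = 2(p_{ext}^* - m_t)^2 + 2\delta_t^2$ and $-r_t = n_t^2 + 4\mu\delta_t^2$. Computing $E[n_t^2 \mid s_t]$ as a sum over the $H+1$ window trades splits into two regimes driven by the Glosten--Milgrom trader model of \prettyref{sec:model}. When $p_{ext}^* \in [p_b^t, p_a^t]$, informed traders abstain, yielding $E[d_\tau] = 0$, $\mathrm{Var}(d_\tau) = 1-\alpha$, and an excess cost of exactly $4\mu\delta_t^2$, which dominates $\rho_t \le 4\delta_t^2$. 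When $p_{ext}^*$ lies outside the spread, the informed-trader bias contributes $(E[n_t])^2 \approx \alpha^2(H+1)^2$ on top of the variance. These two cases combine into a lower bound $-E[r_t \mid s_t] - (H+1)(1-\alpha) \gtrsim c_1(\alpha, H, \mu)\,\rho_t$, and, using the exponential convergence of $\pi_B$ from \prettyref{thm:1} to guarantee that the system sits in the bracketing regime after a short burn-in, a matching upper bound $\le c_2(\alpha, H, \mu)\,\rho_t$ for $\pi_B$.

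The main obstacle is the non-bracketing regime, where the one-step excess cost saturates at roughly $\alpha^2(H+1)^2 + 4\mu\delta_t^2$ while $\rho_t$ can grow with the squared deviation $(p_{ext}^* - m_t)^2$, so the naive per-step lower bound $c_1 \rho_t$ is violated once $|p_{ext}^* - m_t|$ is large. To close this gap I would exploit the $\Delta_{\max}$ cap on per-step price changes built into Algorithm \prettyref{alg:qt}: any deviation of size $D$ must persist for at least $D/\Delta_{\max}$ rounds under any admissible policy, over which the excess cost accumulates linearly in $D$. Pairing this amortized accounting with the exponential bound on $\pi_B$'s deviation from \prettyref{thm:1} converts the saturated per-step lower bound into a useful cumulative one, with all amortization factors absorbed into $C(\alpha, H)$, completing the chain $c_1 R^{\pi^*} \le c_2 R^{\pi_B}$.
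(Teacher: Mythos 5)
Your overall architecture is the same as the paper's: both arguments subtract the noise-trading baseline $HT(1-\alpha)$ from the cumulative cost, invoke the optimality inequality $\bar{J}^{\pi^*}\le \bar{J}^{\pi_B}$, and then sandwich the excess cost between multiples of the risk by splitting time steps according to whether the quotes bracket $p_{ext}^*$ (the paper's ``type'' classification of actions, with $\mathbb{E}[d_i^2]=1-\alpha$ versus $1$ and cross terms of order $H^2\alpha^2$, is exactly your bracketing/non-bracketing dichotomy). So up to and including the identification of the obstacle, you have reconstructed the paper's proof.

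The gap is in your resolution of that obstacle. The $\Delta_{max}$ amortization does not restore a constant-factor relation between excess cost and risk. Over an excursion in which the mid price reaches deviation $D$ and returns, the number of non-bracketing steps is of order $D/\Delta_{max}$ and each contributes only $O(\alpha H+\alpha^2H^2+\mu\delta^2)$ to the excess cost, so the accumulated excess cost is linear in $D$; but the accumulated risk is $\sum_k (k\Delta_{max})^2\sim D^3/\Delta_{max}$, i.e.\ cubic in $D$. The ratio of cost to risk therefore degrades like $D^{-2}$, and chasing the implication through ($\bar J^{\pi^*}\le\bar J^{\pi_B}$ bounds the number of non-bracketing steps, hence bounds $D$ linearly in $\bar J^{\pi_B}$, hence bounds $R^{\pi^*}$ only by something cubic in $R^{\pi_B}$) yields $R^{\pi^*}\lesssim (R^{\pi_B})^3$ up to constants, not the claimed linear comparison. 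The paper closes this hole differently: it assumes the quotes and the price live in a bounded, discretized interval $[m_p,M_p]$ with tick size $\epsilon$, which caps the per-step risk of a non-bracketing action at $2(M_p-m_p)^2$ and lets the per-action counting go through with a constant of order $(M_p-m_p)^2(1+H\alpha)/((1-\alpha)\epsilon^2)$. Your proof needs the same (or an equivalent) boundedness assumption; without it the amortization alone cannot deliver the theorem. A secondary, fixable imprecision: for the upper bound on $\pi_B$ you cannot assume $\pi_B$ is eventually bracketing --- the Bayesian quotes straddle the posterior mean, not $p_{ext}^*$ --- but in the non-bracketing regime the per-step risk is at least one tick squared, so the per-step ratio for $\pi_B$ is still bounded, which is how the paper handles it.
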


The above result implies that the spread induced by the optimal policy maximizing the reward as defined in \prettyref{eq:reward} incurs an expected squared deviation from the external price that is at most a constant multiple of the same squared deviation of the Bayesian policy discussed in \prettyref{sec:Bayes}.

The key intuition behind this result is defining a \textit{risk function} $R^\pi$ that the Bayesian policy minimizes implicitly. The objective then is to prove that the risk of the optimal RL policy is only a constant multiple of the risk of the Bayesian policy. This is done by establishing a relation between the risk and the expected cumulative reward of any policy. It turns out that a policy with a higher risk has a lower cumulative reward and vice versa. Thus, we effectively show that the reward proposed in \prettyref{eq:reward} is a indeed a good proxy for the risk, which is just the squared deviation of the external price from the price at which trades occur. The full proof is given in \prettyref{sec:rl_proof}. Combining the above result with \prettyref{thm:1} immediately gives us the following corollary.
\begin{corollary}\label{cor:1}
    In the case of a single jump in the external price to the value $p_{ext}^*$ at $t=0$, the optimal policy $\pi^*$ for maximizing the reward \prettyref{eq:reward} is such that 
    \begin{align}
        \lim_{t\rightarrow\infty}|p_a^t-p_b^t| = 0\label{eq:14}
    \end{align}
    where $p_a^t,p_b^t$ are the ask and bid prices recommended by $\pi^*$. Further, the rate of convergence in \prettyref{eq:14} is exponential.
\end{corollary}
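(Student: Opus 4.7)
The plan is to combine Theorem 3 with Theorem 1: Theorem 3 bounds the risk $R^{\pi^*}=E[\sum_t \rho_t]$ of the reward-optimal policy by a constant multiple of the Bayesian risk $R^{\pi_B}$, while Theorem 1 guarantees that under the Bayesian policy $\pi_B$ the ask and bid prices collapse onto $p_{ext}^*$ exponentially fast after the single jump. The corollary is then, in essence, the transport of exponential decay across that constant-factor inequality.

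Concretely, I would first invoke Theorem 1 to obtain exponential convergence of $\Pr[|p_a^t-p_{ext}^*|>0]$ and $\Pr[|p_b^t-p_{ext}^*|>0]$ to zero under $\pi_B$; since the prices live on an integer lattice, the quadratic loss $\rho_t$ is bounded by a constant multiple of the off-target probability, giving $E[\rho_t^{\pi_B}]\le A\,e^{-ct}$ for constants $A,c>0$. Summing yields $R^{\pi_B}(T)\le A/(1-e^{-c})$ uniformly in $T$, so Theorem 3 forces $R^{\pi^*}(T)\le C A/(1-e^{-c})$ uniformly in $T$. Summability then gives $E[\rho_t^{\pi^*}]\to 0$, and the elementary bound $(p_a^t-p_b^t)^2 \le 2\rho_t$ yields $E[(p_a^t-p_b^t)^2]\to 0$, which combined with the integer lattice establishes the limit in (14) in probability (and a.s.\ along a subsequence).

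The main obstacle is promoting mere summability to the exponential rate claimed in the second sentence of the corollary. The plan is to apply the Theorem 3 inequality \emph{tail-by-tail}: for every $T_0$, re-run its argument on the POMDP restarted at time $T_0$ to obtain
\[
\sum_{t\ge T_0} E[\rho_t^{\pi^*}] \;\le\; C \sum_{t\ge T_0} E[\rho_t^{\pi_B}] \;=\; O(e^{-cT_0}).
\]
This step is legitimate provided the constant $C$ in Theorem 3 is independent of the initial belief state, a property that must be verified by inspecting the proof in Section~\ref{sec:rl_proof}; assuming it holds, taking successive differences of these tail bounds yields $E[\rho_t^{\pi^*}]=O(e^{-ct})$, and the same triangle-inequality step as before delivers the exponential rate in (14). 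If state-uniformity of $C$ turns out to fail, my fallback is to leverage the mean-square convergence established above to argue that after a finite random time the optimal policy becomes belief-equivalent to the Bayesian one with high probability, and from that moment onward inherits Theorem 1's exponential rate directly.
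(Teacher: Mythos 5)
Your first half reproduces what the paper offers: Theorem~\prettyref{thm:3} bounds the cumulative risk $R^{\pi^*}$ by a constant multiple of the Bayesian risk $R^{\pi_B}$, Theorem~\prettyref{thm:1} makes $R^{\pi_B}$ uniformly bounded via the exponential decay of the Bayesian spread, and the elementary inequality $(p_a^t-p_b^t)^2\le 2\rho_t$ turns summability of $E[\rho_t^{\pi^*}]$ into $E[\rho_t^{\pi^*}]\to 0$, hence the limit in~\prettyref{eq:14}. This is sound and is in fact more explicit than the paper, which simply asserts that the corollary follows ``immediately'' from the two theorems.

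You are also right that the exponential rate does \emph{not} follow immediately: Theorem~\prettyref{thm:3} is a cumulative, one-sided bound, and a uniformly bounded series $\sum_t E[\rho_t^{\pi^*}]$ forces the terms to zero without fixing a rate. Your proposed fix, however, has a real defect. Restarting Theorem~\prettyref{thm:3} at $T_0$ would compare $\pi^*$ on $[T_0,\infty)$ against a Bayesian policy initialized at $\pi^*$'s belief state at time $T_0$, \emph{not} against $\pi_B$'s own tail; since $\pi^*$ and $\pi_B$ traverse different trajectories, that comparison object need not have exponentially small tail risk, so the identification $\sum_{t\ge T_0}E[\rho_t^{\pi_B}]=O(e^{-cT_0})$ does not transfer. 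Equivalently, subtracting the cumulative inequality at $T_1$ from the one at $T_2$ does not preserve the inequality direction, because the slack at $T_1$ is unknown. The fallback you mention (eventual ``belief-equivalence'' of $\pi^*$ to $\pi_B$) is likewise unsupported: $\pi^*$ maximizes the trade-imbalance proxy reward and is under no obligation to track Bayesian posteriors. The paper itself leaves this same gap unaddressed, so a rigorous exponential-rate claim would require a genuinely new, per-step (rather than cumulative) argument about $\pi^*$.
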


\subsection{Comparison with static curves}
When we compare how the price difference of a static curve with the external market behaves in response to trades, we see that using Algorithms \prettyref{alg:bayes} and \prettyref{alg:qt} indeed provides a major advantage. While these algorithms guarantee an exponential decay in the price difference with the number of trades $T$ (i.e. $\sim e^{-kT}$), the error goes down only as $1/T^2$ in CPMMs used in Uniswap v2 \cite{mohanDexPrimer}. 

% 
% \color{red}
% proof intuition and discussion
% \color{black}
\subsection{Simulation results}\label{sec:sim_res}
In this section, we test the algorithms \prettyref{alg:qt} and \prettyref{alg:bayes} on the model described in \prettyref{sec:model}. We demonstrate their robustness to market scenarios and compare their performance with previous work\footnote{All code used for simulation in this section and Appendix \prettyref{app:sim_res} can be viewed anonymously at \href{https://anonymous.4open.science/r/ZeroSwap-FC64}{\textit{this link}}}.

\noindent\textbf{Fixed market conditions}. First, we fix different values of $\alpha$ and $\sigma$, and see how well the hidden external price is tracked by the algorithms. The key metrics used to compare their performances is the deviation of the mid-price ($= \frac{p_{ask}+p_{bid}}{2}$) from the external price and the bid-ask spread. One such example, for $\alpha = 0.9, \sigma = 0.5$, is shown in the \prettyref{fig:vanilla}. Note that the algorithm learns to track the external price completely online, without any prior training required.
\begin{figure}[h]
\begin{subfigure}{0.30\textwidth}
  \centering
%   \fbox{\rule[-.5cm]{0cm}{6cm} \rule[-.5cm]{6cm}{0cm}}
\hspace*{-0.4in}
 \includegraphics[width=1.25\textwidth]{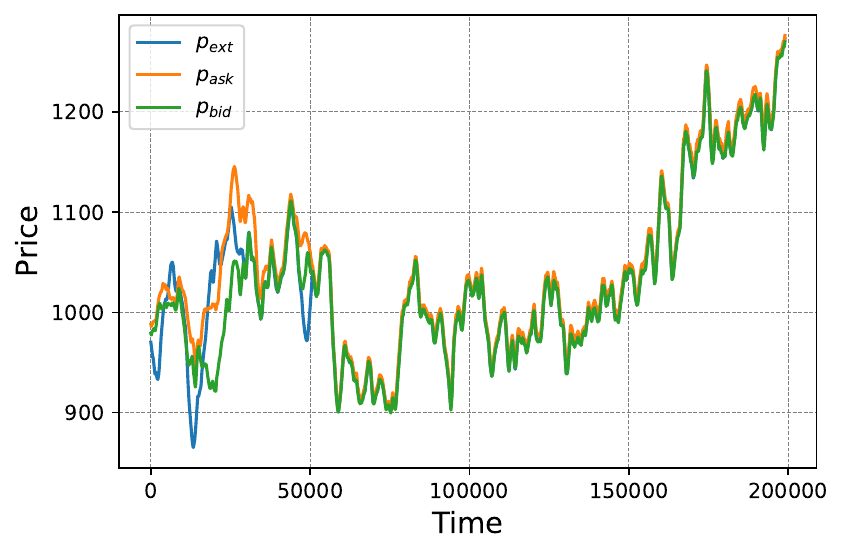}
  \caption{$p_{ask}$, $p_{bid}$ and $p_{ext}$}
  \label{fig:vanilla_ab}
\end{subfigure}
\hfill
\begin{subfigure}{0.30\textwidth}
  \centering 
  \hspace*{-0.25in}
 \includegraphics[width=1.25\textwidth]{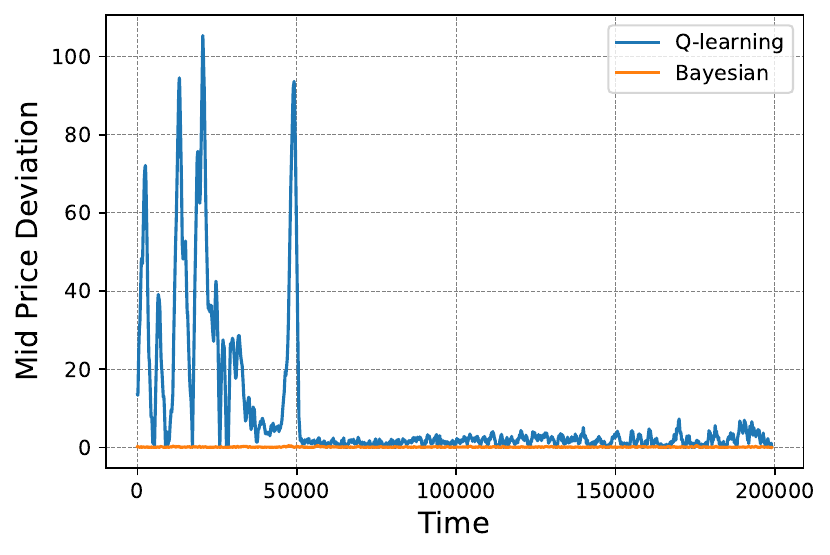} 
  \caption{Mid price deviation}
  \label{fig:vanilla_mid}
\end{subfigure}
\hfill
\begin{subfigure}{0.30\textwidth}
  \centering
  \hspace*{-0.1in}
  \includegraphics[width=1.25\textwidth]{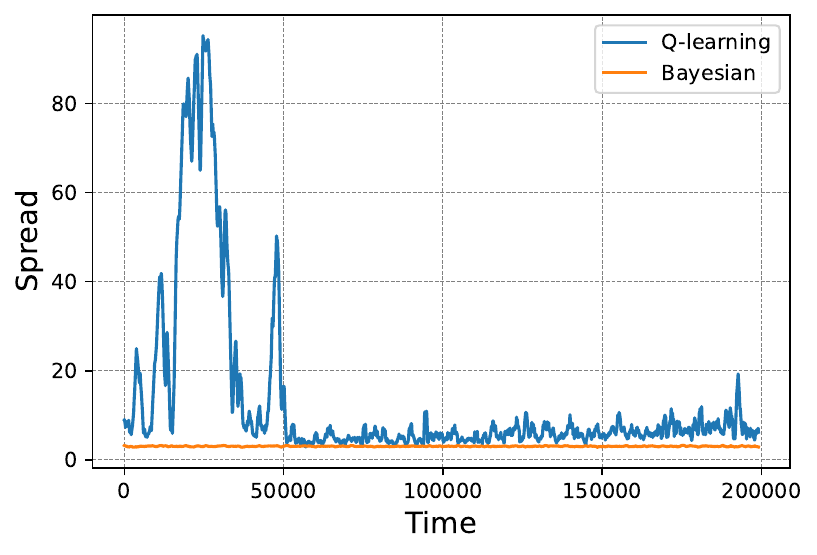}
  \caption{Spread}
  \label{fig:vanilla_sp}
\end{subfigure}
\caption{The conjectured reward for the model-free algorithm trains the agent to track the external hidden price, eventually approaching the performance of the optimal Bayesian algorithm. Figure reference in \prettyref{sec:sim_res}.}
\label{fig:vanilla}
\end{figure}
\noindent\textbf{Sudden price jumps}. Secondly, we observed what happens when there is a sudden jump in the external market price.  In that case as well, as shown in \prettyref{fig:jump} in Appendix \prettyref{app:sim_res}, the algorithm tracks the external price correctly.

\noindent\textbf{Changing market conditions}. Thirdly, we check the robustness of the algorithm to changing market conditions. This is the key to verifying its model-free nature. To do that, we vary the trader informedness $\alpha$ and underlying price volatility $\sigma$ with time by making them follow a driftless random walk in the range $[0,1]$. We find that the algorithm obtains near zero spread and mid-price deviation in this situation as well, as shown in \prettyref{fig:variable}.

\begin{figure}[hbt!]

\begin{subfigure}[t]{0.45\textwidth}
  \centering
%   \fbox{\rule[-.5cm]{0cm}{6cm} \rule[-.5cm]{6cm}{0cm}}
\hspace*{-0.25in}
 \includegraphics[width=1.15\textwidth]{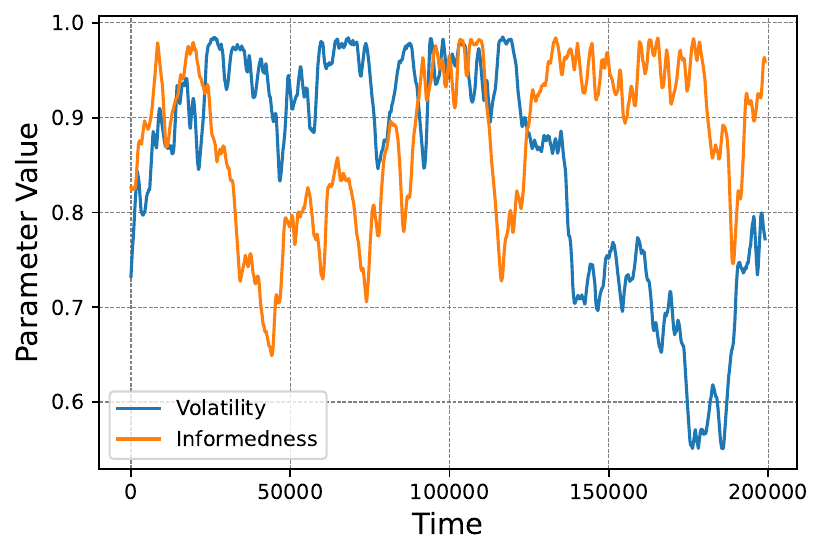}
  \caption{Erratic changes in the external market conditions : trader informedness and price volatility}
  \label{fig:variab_param}
\end{subfigure}
\hfill
\begin{subfigure}[t]{0.45\textwidth}
  \centering
%   \fbox{\rule[-.5cm]{0cm}{6cm} \rule[-.5cm]{6cm}{0cm}}
\hspace*{-0.25in}
 \includegraphics[width=1.15\textwidth]{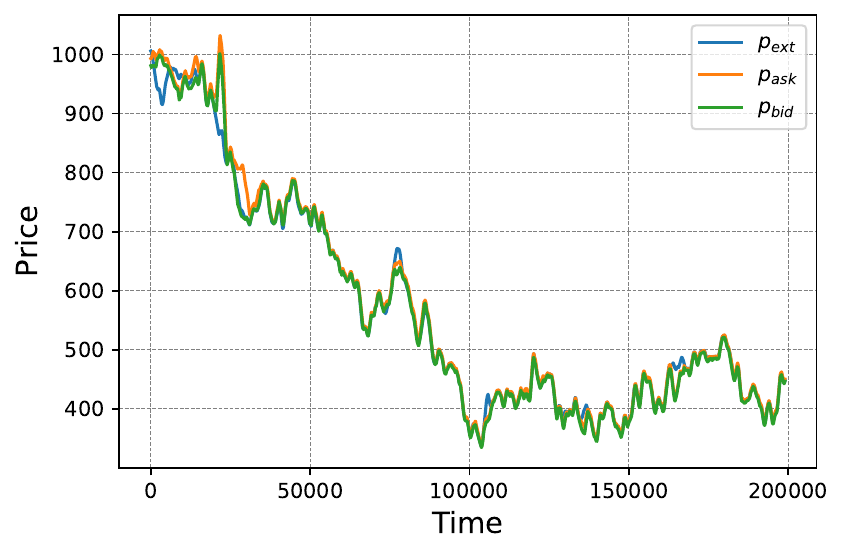}
  \caption{Ask and Bid recommended by the data-driven algorithm track the hidden external price}
  \label{fig:variab_ab}
\end{subfigure}

\caption{Even in the presence of erratic changes in the market conditions (Figure (a)), our data-driven algorithm for market making tracks the external hidden price with no prior training (Figure(b))}
\label{fig:variable}
\end{figure}

\begin{figure}[hbt!]

\begin{subfigure}[t]{0.45\textwidth}
  \centering
%   \fbox{\rule[-.5cm]{0cm}{6cm} \rule[-.5cm]{6cm}{0cm}}
\hspace*{-0.25in}
 \includegraphics[width=1.2\textwidth]{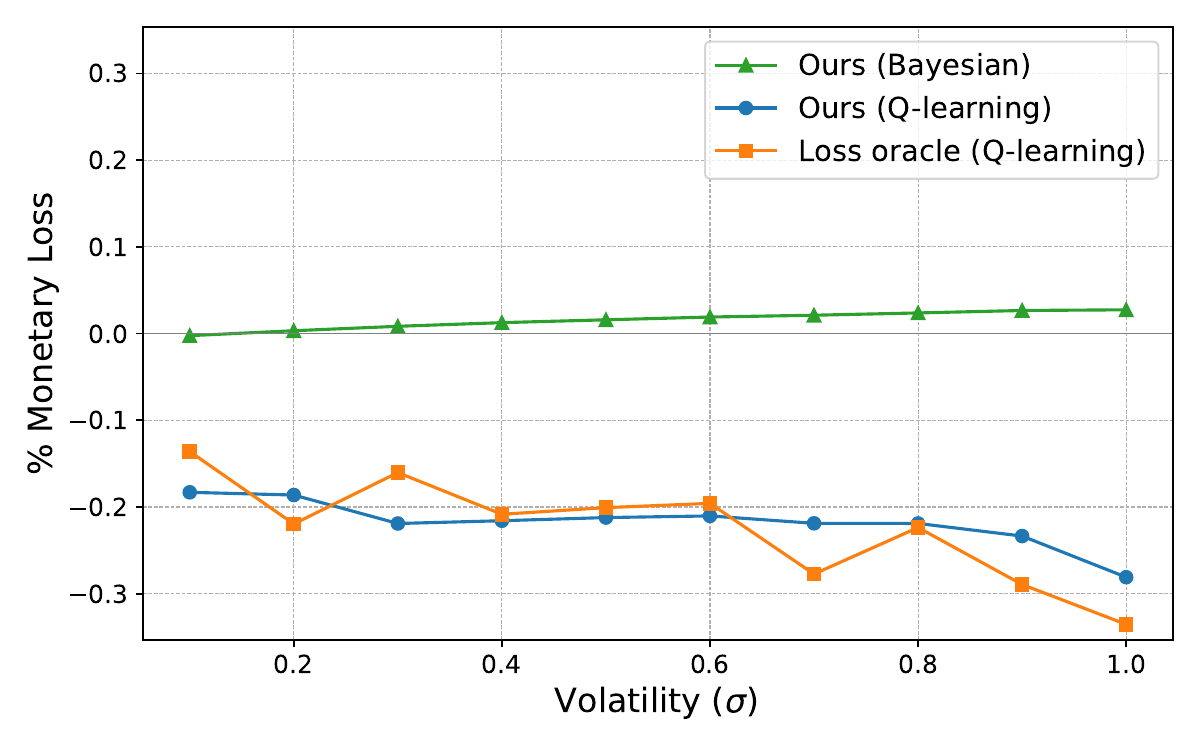}
  \caption{Percentage monetary loss per trade of our market maker is comparable with the algorithm which has access to the loss oracle}
  \label{fig:loss_avg_vs_sigma}
\end{subfigure}
\hfill
\begin{subfigure}[t]{0.45\textwidth}
  \centering
%   \fbox{\rule[-.5cm]{0cm}{6cm} \rule[-.5cm]{6cm}{0cm}}
\hspace*{-0.25in}
 \includegraphics[width=1.2\textwidth]{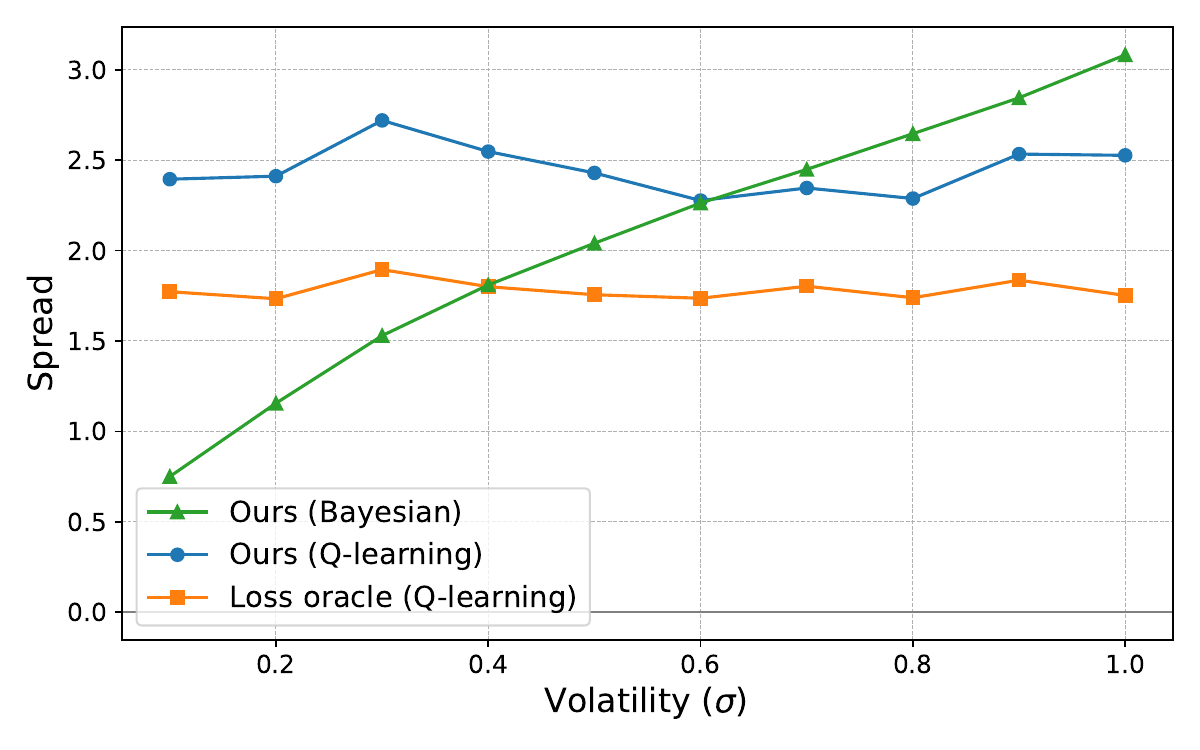}
  \caption{Larger spread observed without access to the loss oracle}
  \label{fig:spread_avg_vs_sigma}
\end{subfigure}

\caption{Algorithm \prettyref{alg:qt} gives us comparable monetary loss per trade as running the algorithm with an oracle. The Bayesian algorithm \prettyref{alg:bayes} gives loss close to zero, which is optimally efficient. All plots are averaged over values of informedness $\alpha$.}
\label{fig:monetary_loss}
\end{figure}

\noindent\textbf{Comparing monetary loss. }We compared the percentage monetary loss per trade for each of our market makers, with the algorithm in \cite{chan2001}. This work also uses Q-learning, but for a reward that has direct access to the hidden external price $p_{ext}$ and hence acts like a loss oracle. The reward function in \cite{chan2001}, is of the form $r_t = - l_t - \mu (p_a^t-p_b^t)^2$, where $l_t$ is the monetary loss as defined in \prettyref{eq:monetary_loss}. In our case, despite no access to such a loss oracle, we find that the average monetary loss per trade (around $0.2\%$) is comparable with that of \cite{chan2001} for all values of volatility $\sigma$ (\prettyref{fig:loss_avg_vs_sigma}). As expected, the Bayesian algorithm \prettyref{alg:bayes} is better than either of the others, giving us the optimally efficient zero loss. We also observe that, due to access to less information about $p_{ext}$ than \cite{chan2001}, algorithm \prettyref{alg:qt} has to resort to a larger spread (\prettyref{fig:spread_avg_vs_sigma}).

\color{black}
\noindent\textbf{Robustness of performance to block latency: }The current algorithms assume that the market maker can change the bid and ask prices immediately after every trade. However, this is only possible if the latency between blocks is lower than the time between two trades. If not, then the market maker would have to react to multiple trades in a single block. Empirically, we observe that doing this does not change the monetary loss faced by the LPs for any algorithm (\prettyref{fig:block_lat}).

\begin{figure}[hbt!]
  \centering
%   \fbox{\rule[-.5cm]{0cm}{6cm} \rule[-.5cm]{6cm}{0cm}}
\hspace*{-0.4in}
 \includegraphics[width=0.75\textwidth]{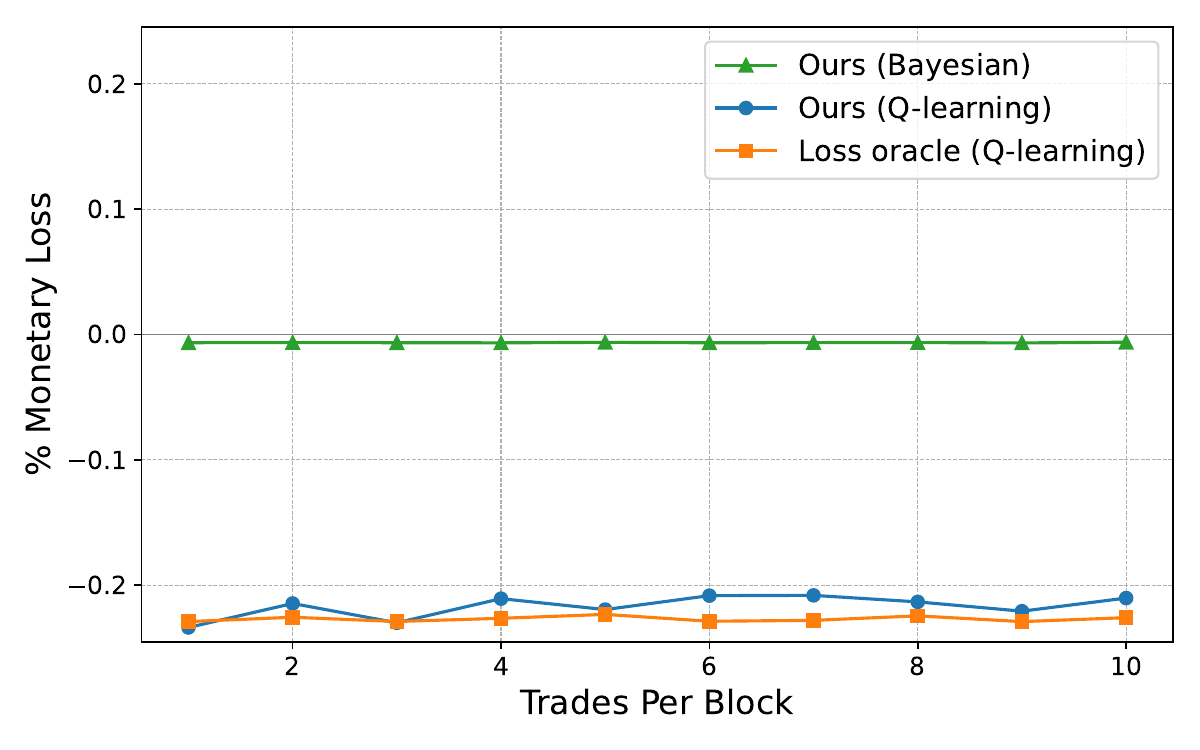}
\caption{The performance of all algorithms is robust to changes in the number of trades the algorithm processes at every time step - this is a proxy for block latency}
\label{fig:block_lat}
\end{figure}

\noindent\textbf{Augmenting the algorithm with inventory: }Assuming the same access to liquidity for the algorithms \prettyref{alg:bayes} and \prettyref{alg:qt}, we compare the monetary loss of the market makers with that of Uniswap. Instead of directly setting the ask $p_a$ and bid $p_b$ according to the operating price $p^{curve}$ of a constant product curve \cite{uniswapv2}, we set these prices as recommended by the algorithms ($p_a^{alg},p_b^{alg}$), and use the curve only as a boundary condition to avoid running out of inventory. This can be achieved by setting the ask price to be $p_{a} = max(p_a^{alg},p^{curve})$ and the bid price to be $p_{b} = min(p_b^{alg},p^{curve})$. Doing this avoids the loss to arbitrageurs and thus gives the liquidity providers a slight profit with both the algorithms we have proposed across different levels of liquidity (initial amount of the asset in the inventory). This has been shown in \prettyref{fig:liquidity_loss}.

\begin{figure}[hbt!]
  \centering
%   \fbox{\rule[-.5cm]{0cm}{6cm} \rule[-.5cm]{6cm}{0cm}}
\hspace*{-0.4in}
 \includegraphics[width=0.75\textwidth]{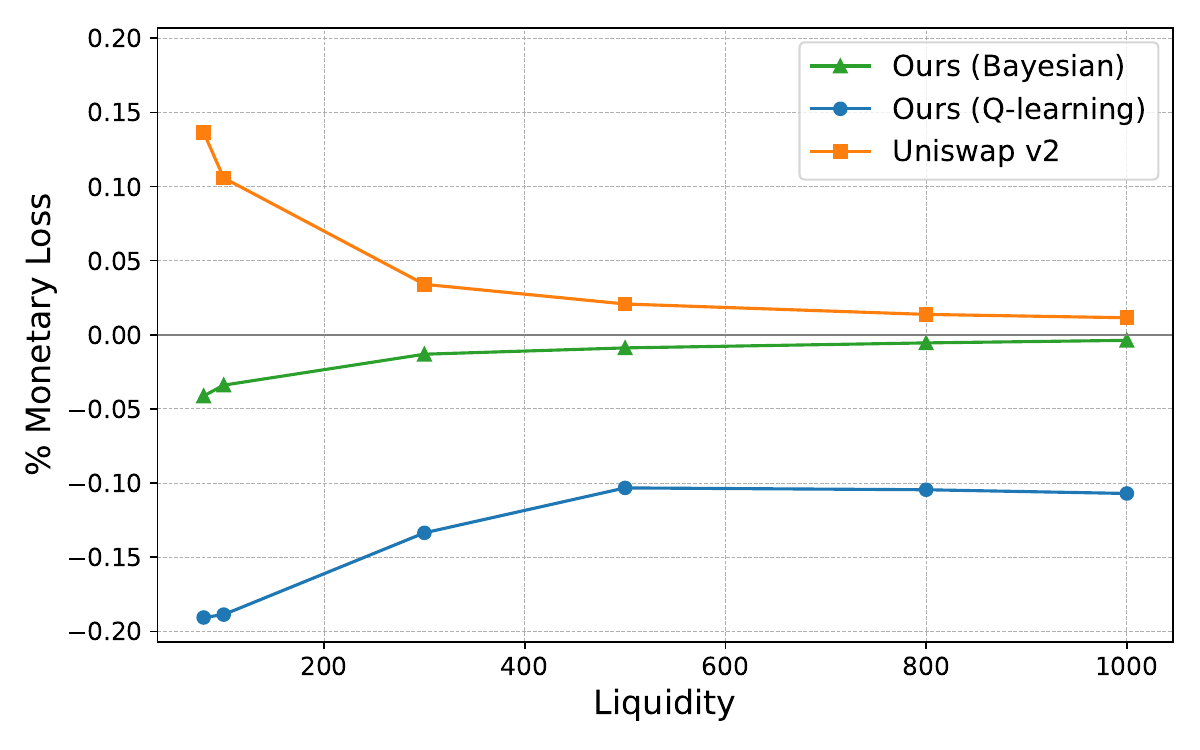}
\caption{Augmenting the constant product market maker with our algorithms avoids the arbitrage loss and incurs a slight profit to liquidity providers}
\label{fig:liquidity_loss}
\end{figure}

\subsection{Limitations of the current model}\label{sec:limits}

\begin{figure}[hbt!]

\begin{subfigure}[t]{0.3\textwidth}
  \centering
%   \fbox{\rule[-.5cm]{0cm}{6cm} \rule[-.5cm]{6cm}{0cm}}
\hspace*{-0.25in}
 \includegraphics[width=1.15\textwidth]{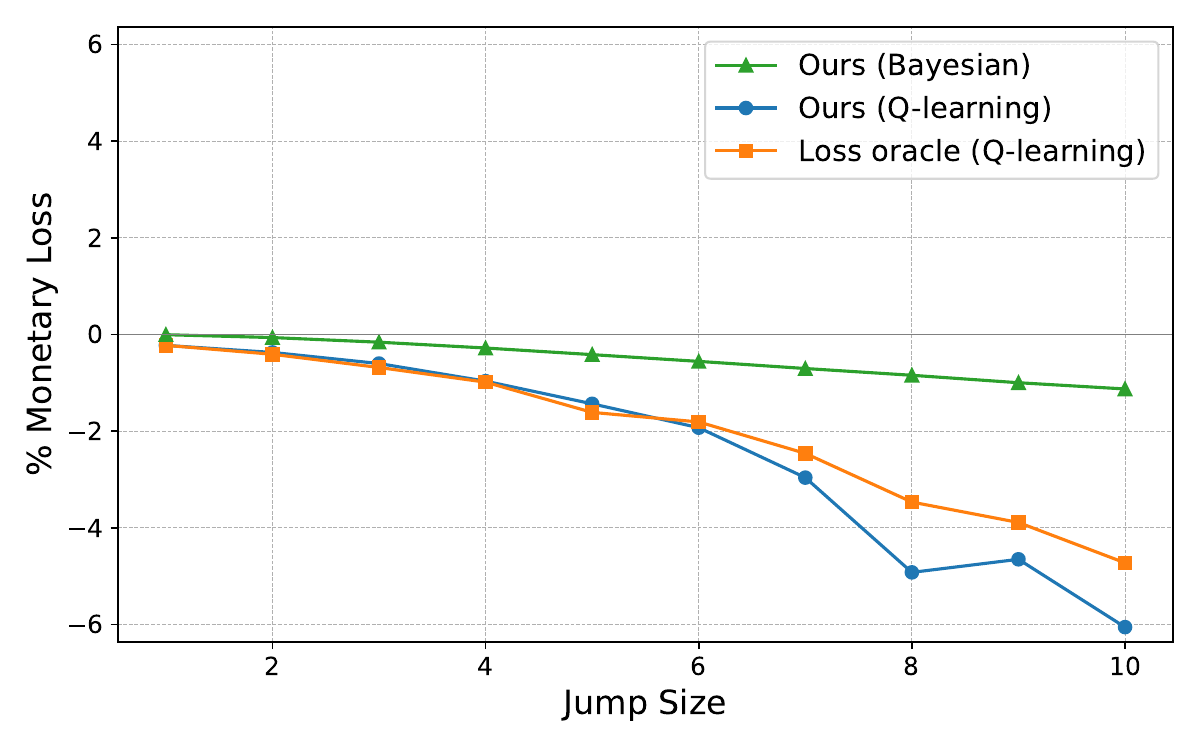}
  \caption{Increase in jump size causes inefficient pricing}
  \label{fig:lim1}
\end{subfigure}
\hfill
\begin{subfigure}[t]{0.3\textwidth}
  \centering
%   \fbox{\rule[-.5cm]{0cm}{6cm} \rule[-.5cm]{6cm}{0cm}}
\hspace*{-0.25in}
 \includegraphics[width=1.15\textwidth]{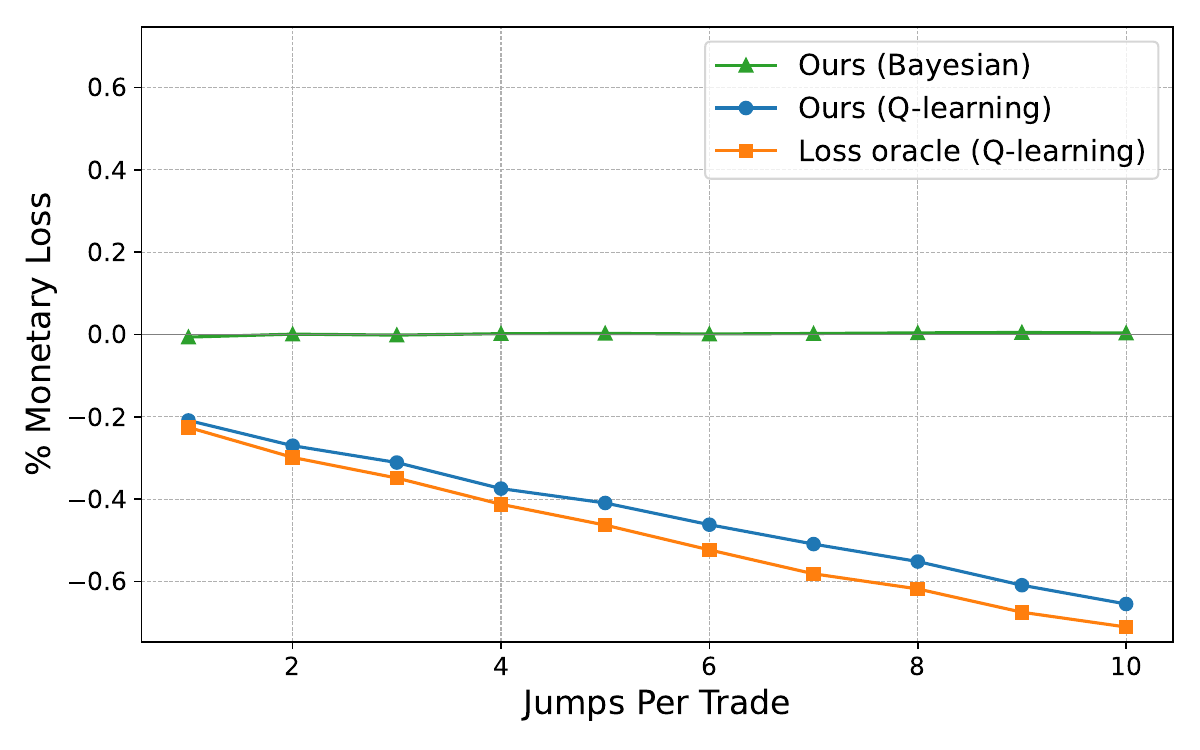}
  \caption{Increase in number of price jumps per trade causes inefficient pricing}
  \label{fig:lim1}
\end{subfigure}
\hfill
\begin{subfigure}[t]{0.3\textwidth}
  \centering
%   \fbox{\rule[-.5cm]{0cm}{6cm} \rule[-.5cm]{6cm}{0cm}}
\hspace*{-0.25in}
 \includegraphics[width=1.15\textwidth]{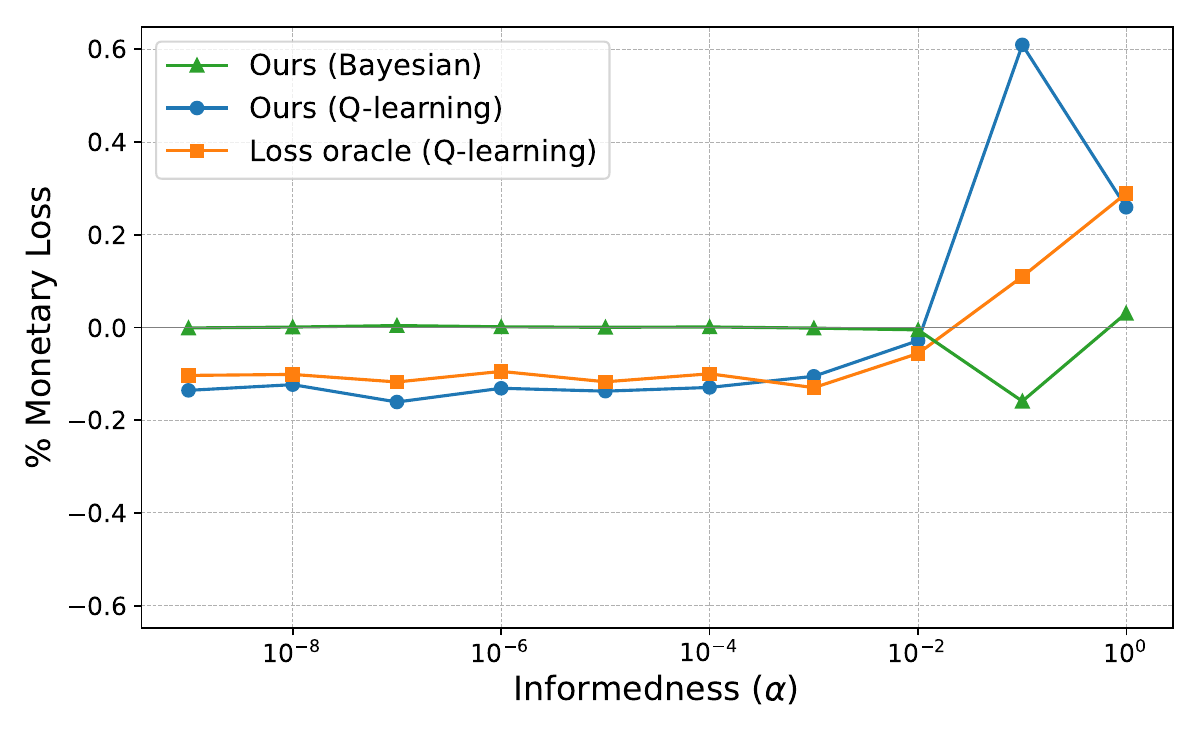}
  \caption{Algorithms are robust to changes in informedness $\alpha$}
  \label{fig:lim2}
\end{subfigure}

\caption{Limitations of the proposed algorithms}
\label{fig:limits}
\end{figure}

The main assumption in the Glosten-Milgrom model is that of the rational behavior of traders. It is possible that there are some other underlying motivations for traders and they change drastically without being known to the market maker. Even if the traders \textit{are} rational, we have assumed that the timescale of their arrival and the jumps in price are the same. However, it is possible that traders arrive at a much higher or lower frequency than jumps in price. It is also possible that the price jumps happen on a larger scale than the changes in price that the AMM algorithm is constrained to make. In all such cases, it is difficult to give theoretical or empirical guarantees on the performance of a Reinforcement Learning algorithm. These limits stem from the practical considerations of the Q-learning algorithm (such as having small action space for computational tractability). In fact, Theorem 2 guarantees that as long there is some information in the trades ($\alpha > 0$), the efficient market objective of having zero loss can always be achieved by a Bayesian market maker with a finite spread when the statistics underlying the model are known. In the most general case where the underlying distribution of price jumps and trader informedness are unknown and can be arbitrary, tracking the hidden price amounts to tracking the hidden state of a hidden markov model with unknown transition and emission statistics. This is still a fundamental open question in the study of HMMs and RL theory \cite{petrie69}. 

Having said that, in \prettyref{fig:limits}, we check the limits of the data-driven algorithm empirically. We observe there are limits to this successful tracking if we increase the jump size or the frequency of price jumps for the external hidden price. Indeed, if these variables take on larger values, the prices recommended by the algorithm become more unfair and inefficient for the traders, but stay profitable for LPs. However, the AMMs stay largely robust to changes in trader informedness.

\color{black}

% \color{blue}
% \begin{itemize}
%     \item dqn makes it easy to integrate side information from oracles?
% \end{itemize}
% \color{black}

% \subsection{On-chain implementation}

% The Q-learning algorithm described in \prettyref{sec:qlalgo} can be implemented directly as a Layer 2 smart contract on optimistic rollups such as Arbitrum and Optimism. The state variable to be tracked is the $Q-$table, which is of the size $|{O}|\times|{A}|$. In our case, the cardinality of the state space equals the difference between the maximum and minimum trade imbalance, which is $2H$. The action space is just $\{-1,0,+1\}\times\{-1,0,+1\}$. Thus, the direct implementation of the RL model would involve tracking and updating a table with $18H$ entries. Even for a history size $H=100$, the $Q-$table size ($1800$) is much smaller in size than the liquidity vector tracked in Uniswap-v3 (which is of size $170,000$). However, note that these calculations are for only a toy model with fixed trade size and no inventory constraints. We leave the variable trade size case to future work.

% \color{red}
% Questions : How are fees distributed for the computation?
% \color{black}

\section{System design for an on-chain implementation}
\label{sec:implementation}
% smart contract implementation
% security and attacks
% gas costs, staking and slashing mechanisms

We propose that the data-driven market maker described in Algorithm \prettyref{alg:qt} can be implemented in a manner similar to an optimistic rollup scheme \cite{optimisticRollupsEthereum}. We now give details of the implementation, and refer to \prettyref{fig:implem} for an overview.

\noindent\textbf{Smart contract: }The main part of this implementation is the $ZeroSwap$ smart contract, which would store the latest version of the Q-table used to execute trades. The blockchain that the contract resides on can be a Layer 1 such as Ethereum, or a Layer 2 rollup, such as Arbitrum \cite{arbitrum} or Optimism \cite{optimism}. The contract performs trade execution based on \textit{solutions} posted, and also resolves any \textit{challenges} to those solutions. We explain these terms in what follows.

\noindent\textbf{Agents: }The smart contract interacts with three types of agents: \textit{traders}, \textit{validators} and \textit{challengers}. Traders wish to have their trades executed by the protocol. Validators put up stake in the protocol (i.e. lock up a specific token in the smart contract), and in return, get selected to run Algorithm \prettyref{alg:qt} off-chain and post solutions. The stake also acts as a security deposit to deter validators from misbehaving. Challengers ensure the security of the protocol by verifying the validity of the posted solutions and post challenges if they can find better solutions. 

\noindent\textbf{Trading protocol: }Firstly, traders post \textit{trade requests} as part of a block of transactions on the blockchain. These indicate their intent to buy or sell the asset from the market maker. 
Next, the {validators} collect all trade requests in a block. A chosen validator, called a \textit{proposer}, then runs one iteration of Algorithm \prettyref{alg:qt} for each trade, and posts the solution on the next block. This solution comprises of the prices at which the trades are to be executed, and an update to the on-chain Q-table. The protocol smart contract would receive this solution from a valid proposer, and execute the trades optimistically along with updating the on-chain Q-table according to the solution. We see that, in this blockchain implementation, the trades are processed in batches (corresponding to blocks). This implies that only the external price jump that happens from one block to the next matters. Thus, we have a situation where a price jump has happened, and that jump is to be inferred from a given batch of trades that was collected in a block. We know that this can be done with exponentially vanishing error (in the number of trades) as shown in Corollary \prettyref{cor:1}. \color{black}

\noindent\textbf{Challenge protocol: }The execution of trades outlined above is made secure by the presence of {challengers}. A challenger can post a challenge on-chain to be processed by the smart contract. The challenge consists of a reference to a trade request the challenger thinks was executed incorrectly, and an alternate solution pointing to the $(n_t,a)$ pair (see line \prettyref{eq:argmax} of Algorithm \prettyref{alg:qt}) that corresponds to an entry of the Q-table providing a better solution. The smart contract verifies in just one step whether this challenge is valid by querying the lookup table and comparing the values, thus checking if the $\arg\max$ operation was executed correctly by the proposer. A fault in the Q-table update on line \prettyref{eq:fpu} can be challenged in a similar manner. In this case as well, the challenger only posts an alternate solution (an $(n_t,a)$ pair) to the $\max$ operation used in the update, and the smart contract verifies its validity by looking it up in the Q-table. 

\color{black}Note that the main advantage of this protocol over existing price oracles is that the source of the data used by our algorithm is the chain itself, which is decentralized. This means that the process of challenge resolution only uses on-chain data and does not need trust in an external source. The primary objective, therefore, only boils down to recommending prices in a data driven way. \color{black}

%\color{red}\cite{??} robustness of optimistic schemes to manipulation??

\begin{figure}[hbt!]
  \centering
\hspace*{-0.4in}
 \includegraphics[width=0.98\textwidth]{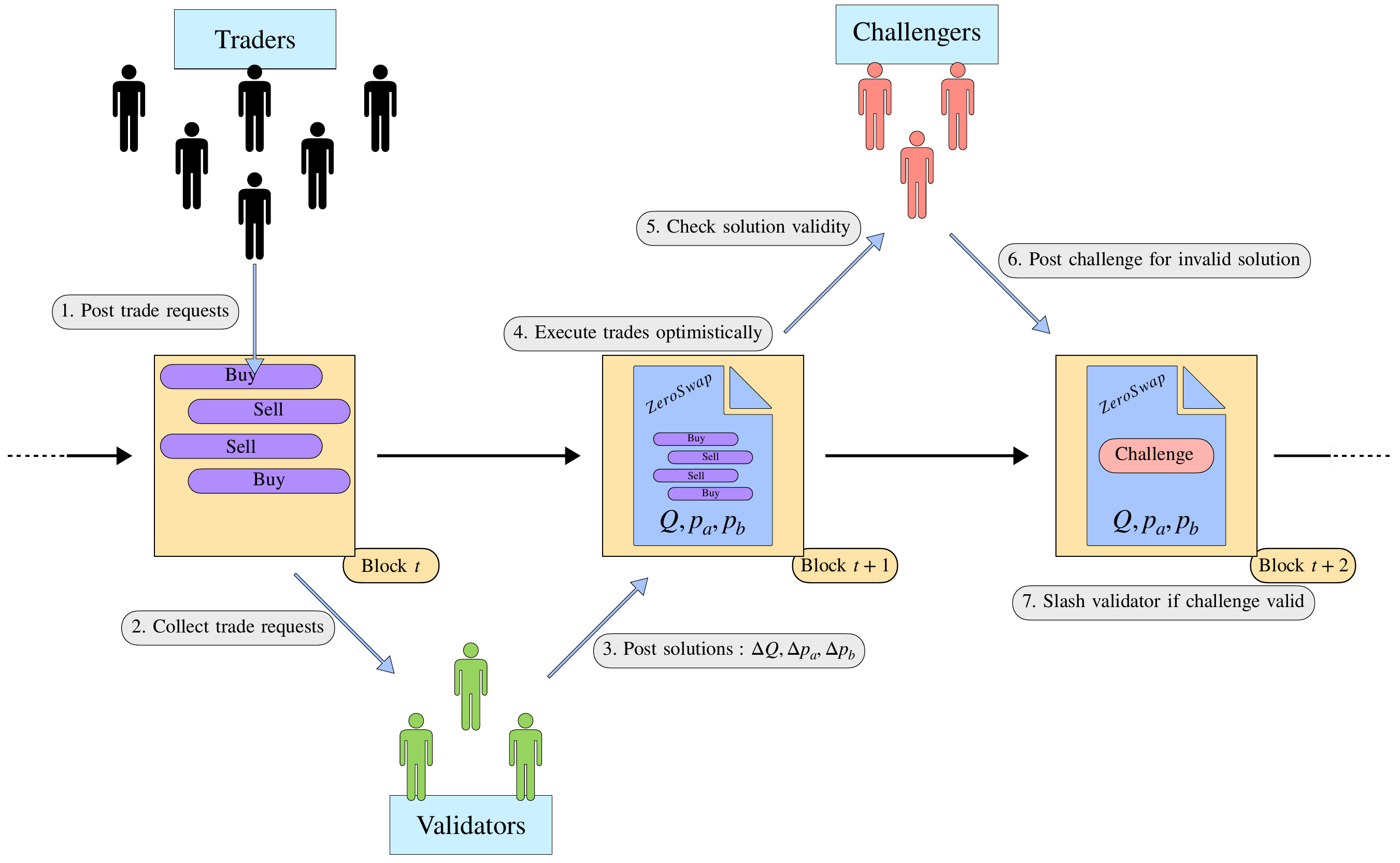}
\caption{Design for an on-chain implementation of ZeroSwap.}
\label{fig:implem}
\end{figure}

\noindent\textbf{Computational costs: }
In the smart contract, the main state variable to be tracked is the $Q-$table, which is of the size $|{O}|\times|{A}|$. In our case, the cardinality of the observation space $O$ equals the difference between the maximum and minimum trade imbalance, which is $2H$. The action space $A$ is just $\{-1,0,+1\}\times\{-1,0,+1\}$. Thus, the direct implementation of the RL model would involve tracking and updating a table with $18H$ entries. Even for a trade history size $H=100$, the $Q-$table size ($1800$) is much smaller than the liquidity vector tracked in AMMs such as Uniswap-v3 \cite{uniswapv3} for a single pool (which is of size $170,000$). Processing a single trade in Q-learning involves only modifying a single entry of the Q-table with a convex combination operation. This incurs similar computational cost as the simple addition operations required to update the liquidity vector after a trade in Uniswap-v3. Doing the $\arg \max$ operation off-chain lowers the gas usage of Q-learning considerably, since only algebraic operations remain to be done on-chain. However, note that these calculations are for only a toy model with fixed trade size and no inventory constraints.

\noindent\textbf{Incentives: }In the case that an invalid execution is detected by the smart contract through the challenges posted, the proposer whose solution was challenged would get their stake slashed. The staking infrastructure and the slashing conditions can be enforced using restaking services such as Eigenlayer \cite{eigenlayer}. This allows the validators to stake the native token of the underlying blockchain without the need to create $ZeroSwap$-specific tokens for managing incentives. 

\noindent\textbf{Potential MEV: }There is a strong incentive for the miners of the underlying blockchain to to extract MEV \cite{daian2019flash} from $ZeroSwap$. Because the Q-table and corresponding algorithm are public, the miner can calculate the optimal trade requests to frontrun and hence make a profit from other trades. This can be avoided by adding a batching operation that the validator of $ZeroSwap$ must perform before running the iteration of the Q-learning algorithm. A simple solution is to match buy and sell trades first, and then satisfy the surplus (which would be a single large buy trade or sell trade) as per the ask/bid recommendations of the algorithm \cite{cowswap,ramseyer2023augmenting}. \color{black} Furthermore, our algorithm is not susceptible to Just-In-Time liquidity attacks on LPs. This is because, for the high liquidity case we are considering, our algorithm does not take into account inventory constraints anyway so these attacks would not affect the bid and ask prices that the algorithm would recommend.\color{black}

\section{Discussion}
\label{sec:conclusion}

%\subsection{Conclusion and future work}

\noindent\textbf{Exploiting information present in trades for efficiency: }In \prettyref{sec:results}, we observed that a model-free algorithm can be obtained to solve optimal market making by looking just at the incoming trades, and provided theoretical guarantees on its performance. The static curves used in AMMs today do not take into account recent past history of trades, while we propose to have a dynamically adjusting ask and bid prices to make use of the information available to us through those trades as much as possible. 

\noindent\textbf{Generalizing to variable trade size: }The model-free algorithm specified in this work does not take into account variable trade size. Our ongoing work involves deriving an algorithm for that case by using a dynamic CFMM bonding curve (instead of dynamic ask and bid prices), which is parametrized. An adaptive algorithm is then used to control the curve parameters, so as to satisfy the conditions analogous to \prettyref{eq:gm_price1} and \prettyref{eq:gm_price2} for optimal market making.

\noindent\textbf{Inventory considerations: }Our algorithms do not take into account constraints on the inventory of the market maker. The inventory preference of the LPs can be encoded as utility functions, which can be mapped to a family of bonding curves \cite{goyal2023finding,frongillo2023axiomatic}. Our method can help decide which particular curve in that family should be used to offer the most efficient price given trading history.

% \color{blue}\noindent\textbf{Effects of adversarial behaviour: }
% \color{black}

\noindent\textbf{Adding data-driven adaptivity to DeFi: }This work further makes a case for data-driven adaptive algorithms for DeFi in general. Future work would include bringing algorithms from reinforcement learning and stochastic control to applications such as lending protocols, treasury management and tokenomic monetary policy.

\section*{Acknowledgements}

The authors thank David Krohn for fruitful discussions leading to the problem formulation. This work was supported by the National Science Foundation via grants CCF-1705007, CNS-2325477, the Army Research Office via grant W911NF2310147, a grant from C3.AI and a gift from XinFin Private Limited.

\bibliographystyle{plain}
\bibliography{references}

\appendix

\newpage

\section{Theoretical guarantees}

In this section, we provide proofs for the following results
\begin{enumerate}
    \item Bounds on the performance of the Bayesian algorithm
    
    \item Bounds on the performance of the data-driven algorithm with the conjectured reward function
    
\end{enumerate}

\subsection{Proof of \prettyref{thm:1} for a special case}\label{sec:toy_proof}

% \color{red}
% cite large deviations theory - cramer's theorem? generalize to jumps other than bernoulli

% zetouni book
% \color{blue}

At time $t=0$, assume that the external price jumps up to $p_u$ with probability $1-\sigma$ and jumps to $p_l$ with probability $\sigma$. After this, assume that the market maker follow the Glosten-Milgrom policy using the Bayes' rule. 

At time $t$, we have a trading history $\langle d_i \rangle_{i=1}^t$ along with a history of prices $\langle (p_a^i,p_b^i) \rangle_{i=1}^t$. Define $q_i = Pr[d_i=+1|p_a^i,p_b^i,p_{ext}=p_l]$ and $r_i = Pr[d_i=+1|p_a^i,p_b^i,p_{ext}=p_u]$. Note that, given the external price and the sequence of ask and bid prices, the trades $d_i$ are independent, with their probabilities being completely determined by the GM model in terms of $\alpha$ and $\sigma$. Thus, given a sequence of trades, we have
\begin{align}
    Pr[\langle d_i \rangle_{i=1}^t | p_{ext}=p_l, ] = \prod_{i:d_i=1} q_i \prod_{i:d_i=-1} (1-q_i).
\end{align}
This gives, using Bayes' rule 
\begin{align}
    Pr[p_{ext} = p_l|\mathcal{H}_t] = \frac{\sigma \prod_{i:d_i=1} q_i \prod_{i:d_i=-1} (1-q_i)}{\sigma \prod_{i:d_i=1} q_i \prod_{i:d_i=-1} (1-q_i) + (1-\sigma)\prod_{i:d_i=1} r_i \prod_{i:d_i=-1} (1-r_i) },
\end{align}
where $\mathcal{H}_t$ consists of the history $\langle d_i \rangle_{i=1}^t, \langle (p_a^i,p_b^i) \rangle_{i=1}^t$.

The above equation tells us the market maker's posterior belief about the external price at time $t$, given the history of trades and prices. Now consider the log-likelihood ratio
\begin{align}
    \log \left(\frac{Pr[p_{ext} = p_l|\mathcal{H}_t]}{Pr[p_{ext} = p_u|\mathcal{H}_t]}\right) &= \log \left(\frac{\sigma \prod_{i:d_i=1} q_i \prod_{i:d_i=-1} (1-q_i)}{(1-\sigma)\prod_{i:d_i=1} r_i \prod_{i:d_i=-1} (1-r_i)}\right).
\end{align}

We know that $p_l \leq E[p_{ext}|\mathcal{H}_t] \leq p_u$, which implies that the GM policy always recommends bid and ask prices between $p_l$ and $p_u$. This simplifies things further, since now we have $q_i = \frac{1-\alpha}{2} := q$ and $r_i=\frac{1+\alpha}{2} := r$. This gives us
\begin{align}
    \log \left(\frac{Pr[p_{ext} = p_l|\mathcal{H}_t]}{Pr[p_{ext} = p_u|\mathcal{H}_t]}\right) &= \log \left(\frac{\sigma q^b (1-q)^s}{(1-\sigma)r^b (1-r)^s}\right)\\
    &= \log\frac{\sigma}{1-\sigma} + b \log \frac{q}{r} + s \log \frac{1-q}{1-r},
\end{align}
where $b$ and $s$ are the number of buy and sell trades respectively.

Dividing both sides by the total number of trades $b+s$ and taking the limit as $b+s \rightarrow \infty$, we get that, as $b+s\rightarrow\infty$,
\begin{align}
    \frac{1}{b+s}\log \left(\frac{Pr[p_{ext} = p_l|\mathcal{H}_t]}{Pr[p_{ext} = p_u|\mathcal{H}_t]}\right) &\longrightarrow  D_{\mathrm{KL}}(\mathbf{q}||\mathbf{r})\ \ \ \mathrm{if\ }p_{ext} = p_l\\
    &\longrightarrow  -D_{\mathrm{KL}}(\mathbf{r}||\mathbf{q})\ \ \ \mathrm{if\ }p_{ext} = p_u,
\end{align}
where $\mathbf{q} = [\frac{1-\alpha}{2}, \frac{1+\alpha}{2}], \mathbf{r} = [\frac{1+\alpha}{2}, \frac{1-\alpha}{2}]$. Thus, for a non-trivial number of informed traders ($\alpha > 0$), the KL-divergence would be always strictly positive. This implies that the posterior $(Pr[p_{ext} = p_l|\mathcal{H}_t],Pr[p_{ext} = p_u|\mathcal{H}_t])$ converges to either $(1,0)$ or $(0,1)$ depending on whether $p_{ext}$ is $p_l$ or $p_u$ respectively. This proves that both bid and ask converge to the right price as well. The only case where the convergence does not happen is when the KL divergence is exactly zero, which happens when $\alpha=0$ (no informed traders).

Suppose the actual price is $p_l$. Then, the explicit bid and ask at each time $t$ evolves as
\begin{align}
    p_a &= \frac{p_l\sigma q^{b+1}(1-q)^s + p_u (1-\sigma)r^{b+1}(1-r)^s}{\sigma q^{b+1}(1-q)^s + (1-\sigma)r^{b+1}(1-r)^s}\\
    &\rightarrow \frac{(1-\alpha) p_l + (1+\alpha)p_u  e^{-D_{\mathrm{KL}}(\mathbf{q}||\mathbf{r})t}}{(1-\alpha)+(1+\alpha)e^{-\mathrm{KL}(\mathbf{q}||\mathbf{r})t}}.
\end{align}
Similarly, the bid price evolves as 
\begin{align}
    p_b \rightarrow  \frac{(1+\alpha) p_l + (1-\alpha)p_u  e^{-D_{\mathrm{KL}}(\mathbf{q}||\mathbf{r})t}}{(1+\alpha)+(1-\alpha)e^{-\mathrm{KL}(\mathbf{q}||\mathbf{r})t}},
\end{align}
where we see that both ask $p_a$ and bid $p_b$ converge exponentially to the actual price $p_l$.

\subsection{Proof of \prettyref{thm:1} for the general case}\label{sec:theor_gen}

For a general single jump, we use propositions from \cite{GLOSTEN198571} as guidance to prove that the ask and bid converge to the true price. Assume that the true price jumps to $p_{ext}^*$, with the p.d.f. of the jump being known to the market maker as $f(p_{ext})$. We denote the ask and bid price recommendations of the Bayesian algorithm by $p_a^t, p_b^t$ respectively. 

The first lemma we prove guarantees convergence of the spread.

\begin{lemma}
    If we define $\Bar{S}_T = \frac{1}{T}\sum_{t=0}^T E[(p_a^t-p_b^t)^2]$, then for $\alpha<1$, we get
\begin{align}
    \Bar{S}_T \leq \frac{8 \mathrm{var}(p_{ext})}{(1-\alpha)^2T}.
\end{align}
\end{lemma}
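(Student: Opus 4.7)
The plan is to interpret the posterior mean $M_t := E[p_{ext}\mid\mathcal{H}_t]$ as a martingale and exploit the fact that its total $L^2$ variation is bounded by the prior variance. The pivotal observation will be that the bid-ask spread is a lower bound on the posterior mean's one-step $L^2$ jump, so that bounding the latter in aggregate bounds the time-average spread.

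First, by the tower property, $(M_t)_{t\ge 0}$ is a martingale with $M_0 = E[p_{ext}]$, bounded in $L^2$ since $p_{ext}$ is. Its increments are orthogonal, so
$$
\sum_{t=1}^{T} E\left[(M_t - M_{t-1})^2\right] = E[M_T^2] - E[M_0^2] \le E[p_{ext}^2] - (E[p_{ext}])^2 = \mathrm{var}(p_{ext}).
$$

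Second, I would describe the conditional law of $M_t$ given $\mathcal{H}_{t-1}$. By the Glosten-Milgrom identities \prettyref{eq:gm_price1}-\prettyref{eq:gm_price2} and the no-trade Bayes update from Algorithm \prettyref{alg:bayes}, $M_t$ conditional on $\mathcal{H}_{t-1}$ takes three values: $p_a^t, p_b^t$, or $m_0^t := E[p_{ext}\mid d_t=0,\mathcal{H}_{t-1}]$, with respective probabilities $\pi_+^t,\pi_-^t,\pi_0^t$, subject to the martingale constraint $\pi_+^t p_a^t + \pi_-^t p_b^t + \pi_0^t m_0^t = M_{t-1}$. Dropping the non-negative no-trade contribution,
$$
E\left[(M_t - M_{t-1})^2 \mid \mathcal{H}_{t-1}\right] \ge \pi_+^t (p_a^t - M_{t-1})^2 + \pi_-^t (p_b^t - M_{t-1})^2.
$$

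Third, I would establish the pointwise inequality that for any $m$ and any $\pi_+,\pi_- > 0$,
$$
\pi_+ (p_a - m)^2 + \pi_- (p_b - m)^2 \ge \frac{\pi_+ \pi_-}{\pi_+ + \pi_-} (p_a - p_b)^2,
$$
obtained by minimizing the left-hand side over $m$ (completing the square). Since uninformed traders buy or sell with probability $(1-\alpha)/2$ independently of $p_{ext}$, we always have $\pi_\pm^t \ge (1-\alpha)/2$, while trivially $\pi_+^t + \pi_-^t \le 2$. Together these yield
$$
E\left[(M_t - M_{t-1})^2 \mid \mathcal{H}_{t-1}\right] \ge \frac{(1-\alpha)^2}{8} (p_a^t - p_b^t)^2.
$$
Taking expectations, summing over $t$, combining with the martingale bound from step one, and dividing by $T$ (the $t=0$ term is absorbed by slack in the constant) delivers the claim.

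The main obstacle is the third step: one must justify that the uninformed contribution really does give the $(1-\alpha)/2$ lower bound on $\pi_\pm^t$ regardless of history (this uses the explicit form of the trader model but must be checked carefully), and then distill the three-valued conditional variance into a bound depending only on the spread $(p_a^t - p_b^t)^2$ by handling the joint constraint on $(p_a^t, p_b^t, m_0^t, M_{t-1})$. Once that inequality is in place, the rest of the argument is a clean martingale telescoping.
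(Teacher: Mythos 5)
Your proposal is correct and follows essentially the same strategy as the paper's proof: both treat the posterior mean $M_t = E[p_{ext}\mid\mathcal{H}_t]$ as an $L^2$-bounded martingale, use orthogonality of increments to bound $\sum_t E[(M_t-M_{t-1})^2]$ by $\mathrm{var}(p_{ext})$, and then lower-bound the one-step conditional variance of $M_t$ by $\frac{(1-\alpha)^2}{8}(p_a^t-p_b^t)^2$ via the probability floor $\pi_\pm^t\ge\frac{1-\alpha}{2}$ supplied by the uninformed traders. The only difference is cosmetic: the paper first writes $(p_a^t-p_b^t)^2\le 2(p_a^t-M_{t-1})^2+2(p_b^t-M_{t-1})^2$ and then divides each term by the respective conditional probability, while you minimize the weighted sum over the reference point $m$ (completing the square) to get $\pi_+(p_a-m)^2+\pi_-(p_b-m)^2\ge\frac{\pi_+\pi_-}{\pi_++\pi_-}(p_a-p_b)^2$ — both routes land on the same constant. (As a small aside, since $\pi_+^t+\pi_-^t\le 1$ rather than $\le 2$, your route would actually yield the sharper constant $4/(1-\alpha)^2$ if you used the tighter probability bound; using $\le 2$ matches the stated lemma exactly.)
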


\begin{proof}

First, we prove that the spread goes to zero. The key idea used here is the fact that the variance of a random variable decreases on conditioning. 

Define $p_t = E[p_{ext}|\mathcal{H}_t]$. Note that $p_t$ is a martingale w.r.t. $\mathcal{H}_{t}$, since $E[p_t|H_{t-1}] = p_{t-1}$. Thus, we have
\begin{align}
    \mathrm{var}(p_{ext}) &\geq \mathrm{var}(p_t)\\
    &= \mathrm{var}\left(\sum_{t=0}^T(p_t-p_{t-1})\right)\ \ \ \ \ \mathrm{where}\ p_0=E[p_{ext}],\ p_{-1} = 0\\
    &= \sum_{t=0}^T E[ (p_t - p_{t-1})^2] + \sum_{t=0}^T\sum_{k<t} E[(p_t-p_{t-1})(p_k-p_{k-1})]\\
    &= \sum_{t=0}^T E[ (p_t - p_{t-1})^2], 
\end{align}
since $E[(p_t-p_{t-1})(p_k-p_{k-1})] = E[E[(p_t-p_{t-1})(p_k-p_{k-1})|\mathcal{H}_{t-1}]] = E[(p_k-p_{k-1})E[(p_t-p_{t-1})|\mathcal{H}_{t-1}]] = 0$ by the martingale property.

Now, note that
\begin{align}
    (p_a^t-p_b^t)^2 &\leq 2(p_a^t-p_{t-1})^2 + 2(p_b^t-p_{t-1})^2\\
    &\leq  \frac{2(p_a^t-p_{t-1})^2}{Pr[d_t=-1|\mathcal{H}_{t-1}]} + \frac{2(p_b^t-p_{t-1})^2}{Pr[d_t=+1|\mathcal{H}_{t-1}]}\\
 &=  \frac{2(p_a^t-p_{t-1})^2Pr[d_t=+1|\mathcal{H}_{t-1}]+2(p_b^t-p_{t-1})^2Pr[d_t=-1|\mathcal{H}_{t-1}]}{Pr[d_t=-1|\mathcal{H}_{t-1}]Pr[d_t=+1|\mathcal{H}_{t-1}]}\\
 &\leq  2\frac{E[(p_t-p_{t-1})^2|\mathcal{H}_{t-1}]}{\left(\frac{1-\alpha}{2}\right)^2}.\\
 \implies E[(p_a^t-p_b^t)^2]&\leq \frac{8E[(p_t-p_{t-1})^2]}{(1-\alpha)^2}.\\
 \implies \frac{1}{T}\sum_{t=0}^T E[(p_a^t-p_b^t)^2] &\leq \frac{8 \mathrm{var}(p_{ext})}{(1-\alpha)^2T}.
\end{align}

$\qed$

\end{proof}

Next, we show that the ask and bid indeed converge to the true external price.

\begin{lemma}
    If $p_{ext}^*$ denotes the value that the external price $p_{ext}$ jumps to, then we have
\begin{align}
    Pr[|p_{ext}^* - p_a^t| \geq \epsilon] &\rightarrow 0\\
    Pr[|p_{ext}^* - p_b^t| \geq \epsilon] &\rightarrow 0,
\end{align}
as $t \rightarrow \infty$. This shows that the Bayesian policy converges to the true price eventually.
\end{lemma}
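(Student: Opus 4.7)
The plan is to combine martingale convergence for the Bayesian posterior mean with an identifiability argument that uses the vanishing spread established in the previous lemma. Throughout, let $p_t := E[p_{ext}\mid \mathcal{H}_t]$ denote the posterior mean of the external price at time $t$, and let $\mathcal{F}_\infty$ be the $\sigma$-algebra generated by $\bigcup_{t\ge 0} \mathcal{H}_t$. By the tower property, $p_t$ is a martingale with respect to $(\mathcal{H}_t)$, and since the prior $f$ has finite variance (inherited from $\mathrm{var}(p_{ext})<\infty$ used in the spread lemma), $p_t$ is uniformly integrable. Hence Doob's martingale convergence theorem gives $p_t \to p_\infty$ almost surely and in $L^1$, with $p_\infty = E[p_{ext}\mid \mathcal{F}_\infty]$. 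The whole proof then reduces to showing $p_\infty = p_{ext}^*$ almost surely under the true data-generating measure.

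Once this is established, the convergence of $p_a^t$ and $p_b^t$ to $p_{ext}^*$ in probability follows from the spread lemma: the decomposition $p_{t-1} = \sum_{d} \Pr[d_t = d\mid \mathcal{H}_{t-1}]\, E[p_{ext}\mid \mathcal{H}_{t-1}, d_t = d]$ together with the lower bound $\Pr[d_t = \pm 1 \mid \mathcal{H}_{t-1}] \ge (1-\alpha)/2$ implies that both $|p_a^t - p_{t-1}|$ and $|p_b^t - p_{t-1}|$ are bounded by a constant multiple of the spread $p_a^t - p_b^t$. Combining this with $p_t \to p_{ext}^*$ (once proved) and with the Cesàro vanishing of $E[(p_a^t - p_b^t)^2]$ yields the two limits in the lemma statement.

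The identification step is the heart of the argument. Fix two distinct candidate values $p_1 < p_2$ in the support of $f$. Conditional on $p_{ext} = p_1$, I would show that $b^t(p_2)/b^t(p_1) \to 0$ almost surely. Because $p_t$ converges and the spread vanishes in Cesàro-$L^2$, there is a subsequence of full density along which $p_a^t - p_b^t < (p_2 - p_1)/2$, and on this subsequence the interval $[p_b^t, p_a^t]$ eventually separates $p_1$ from $p_2$ (or lies entirely to one side of both, a case that is handled by comparing buy/no-trade/sell frequencies instead). At those times the conditional distribution of $d_t$ given $\mathcal{H}_{t-1}$ differs under $p_{ext}=p_1$ and $p_{ext}=p_2$ by a Bernoulli-type gap whose KL divergence is bounded below by a constant $c(\alpha) > 0$, exactly as in the two-point computation of \prettyref{sec:toy_proof}. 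An Azuma--Hoeffding bound on the log-likelihood-ratio martingale then drives $\log(b^t(p_2)/b^t(p_1))$ to $-\infty$. A covering of the support of $f$ by a countable dense subset, combined with a union bound, forces the limiting posterior to be a Dirac mass at the true price, so $p_\infty = p_{ext}^*$ almost surely.

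The main obstacle is the adaptive nature of identifiability: the spread lemma only yields a Cesàro-type control on $(p_a^t - p_b^t)^2$, not a pointwise one, and the prices $p_a^t, p_b^t$ at which we need distinguishing power are themselves $\mathcal{H}_{t-1}$-measurable and therefore intertwined with the very posterior we are trying to analyze. The delicate part is ruling out sample paths on which the prices stay in a configuration that makes two distinct values of $p_{ext}$ indistinguishable for all but a vanishing fraction of times. This is resolved either by extracting an informative subsequence from the Cesàro spread bound and restricting the likelihood-ratio argument to it, or by invoking Doob's posterior consistency theorem as a black box once adaptive identifiability for the family $\{P_{p_{ext}}\}_{p_{ext}\in\mathrm{supp}(f)}$ has been verified.
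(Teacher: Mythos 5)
Your proposal takes a genuinely different route from the paper, and the route you chose lands you in exactly the technical swamp the paper is designed to avoid. You decompose the problem into (i) martingale convergence of $p_t = E[p_{ext}\mid\mathcal{H}_t]$, (ii) an identifiability argument to show $p_\infty = p_{ext}^*$, and (iii) the spread bound to pass from $p_{t-1}$ to $p_a^t, p_b^t$. Steps (i) and (iii) are fine. But step (ii) is precisely the hard part, and you (rightly) acknowledge you do not fully know how to execute it: the likelihood-ratio / Azuma argument must be run along an adaptively chosen price sequence, and when $p_a^t, p_b^t$ both lie on the same side of two candidates $p_1<p_2$, the observation $d_t$ carries \emph{no} distinguishing information whatsoever (informed traders act identically under both hypotheses), so the ``compare buy/no-trade/sell frequencies instead'' remark does not rescue that case. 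Ruling out a sample path where the prices stagnate on the wrong side of $p_{ext}^*$ would require another argument, and that argument is what you would actually need to supply.

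The paper does not invoke Doob's martingale convergence, does not form likelihood ratios, and never confronts identifiability at all. Instead it exploits a piece of algebra specific to Glosten--Milgrom prices. Since $p_a^t = E[p_{ext}\mid \mathcal{H}_{t-1}, d_t=+1]$, conditioning on a buy tilts mass toward $\{p_{ext}>p_a^t\}\supseteq\{p_{ext}>p_{t-1}\}$, and one obtains the pointwise bound
\begin{align}
  p_a^t - p_{t-1} \;\ge\; \alpha\,\epsilon\, \Pr\bigl[\,p_{ext}-p_{t-1}\ge \epsilon \mid \mathcal{H}_{t-1}\,\bigr].
\end{align}
Because $p_b^t \le p_{t-1} \le p_a^t$, the left side is at most the spread, and the spread lemma already forces this to $0$. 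Thus both tails of the (correctly specified) posterior vanish, and since $p_{ext}^*$ given $\mathcal{H}_{t-1}$ is distributed exactly according to that posterior, consistency of $p_{t-1}$---and hence of $p_a^t, p_b^t$---follows immediately. Identifiability is never separately needed; it is a free consequence of the price being a posterior conditional mean. Your approach trades a three-line algebraic observation for a consistency theorem whose hypotheses are genuinely delicate to verify here.

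One secondary point: your worry that the spread lemma gives ``only Cesàro-$L^2$'' control is unfounded. The lemma is equivalent to $\sum_{t=0}^{\infty} E[(p_a^t-p_b^t)^2] \le 8\,\mathrm{var}(p_{ext})/(1-\alpha)^2 < \infty$, which already implies $E[(p_a^t-p_b^t)^2]\to 0$ term by term, hence $p_a^t - p_b^t \to 0$ in $L^2$ and in probability. The subsequence-extraction maneuver you propose is unnecessary even within your own route.
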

\begin{proof}
We have that 
\begin{align}
    p_a^t &= E[p_{ext}|\mathcal{H}_{t-1}, d_t = +1]\\
    &= \alpha E[p_{ext}|\mathcal{H}_{t-1}, p_{ext} > p_a^t] + (1-\alpha) p_{t-1}\\
    &\geq \alpha E[p_{ext}|\mathcal{H}_{t-1}, p_{ext} > p_{t-1}] + (1-\alpha) p_{t-1}.\\
    \implies p_a^t - p_{t-1} &\geq \alpha E[p_{ext}-p_{t-1}| \mathcal{H}_{t-1}, p_{ext} > p_{t-1}] \\
    &\geq \alpha \epsilon Pr[p_{ext}-p_{t-1}\geq \epsilon| \mathcal{H}_{t-1}, p_{ext} > p_{t-1}] \\
    &\geq \alpha \epsilon Pr[p_{ext}-p_{t-1}\geq \epsilon| \mathcal{H}_{t-1}].
\end{align}
We know that since $p_a^t \geq p_{t-1}\geq p_b^t$ and the spread goes to zero, we have $p_a^t - p_{t-1} \rightarrow 0$. Thus, for any $\epsilon>0$, we get $Pr[p_{ext}-p_{t-1}\geq \epsilon| \mathcal{H}_{t-1}]\rightarrow 0$. Similarly, we get $Pr[p_{ext}-p_{t-1}\leq -\epsilon| \mathcal{H}_{t-1}]\rightarrow 0$.

$\qed$

\end{proof}

For proving the exponential rate of decay in spread, we use equation \prettyref{eq:variance_reduction}, which shall be proven in later sections, under the assumption that $\frac{p_a^2}{\mathrm{var}(p)} \geq \epsilon > 0$ at any time. This is a valid assumption since the $E[p_a]$ is assumed to be equal to $0$ while deriving \prettyref{eq:variance_reduction}. Thus, conditioning on a buy trade would only increase the ask price. Therefore, we have that $p_a = E[p_a| d_t=+1] > 0$.

Rewriting \prettyref{eq:variance_reduction} by replacing the total variance $K_0 + \sigma T$ with $\mathrm{var}(p_T)$ gives us 
\begin{align}
    \mathrm{var}(p_{T+1}) &\leq \mathrm{var}(p_T)\left(1 - \alpha /2 \left(1-\sqrt{1-\frac{1-\alpha}{\alpha}\epsilon}\right)^2\label{eq:variance_reduction1}\right).
\end{align}
Since there are no price jumps and only trades in the time steps before $T+1$, this implies that
\begin{align}
    \mathrm{var}(p_{T+1}) &\leq \mathrm{var}(p_1)\left(1 - \alpha /2 \left(1-\sqrt{1-\frac{1-\alpha}{\alpha}\epsilon}\right)^2\label{eq:variance_reduction2}\right)^{T},
\end{align}
which confirms that the variance of the belief goes down exponentially with time. Since the expected squared spread $p_a^2$ is upper bounded by the variance, we have that the spread also decays exponentially with time.

\subsection{Proof of \prettyref{thm:2}}\label{sec:changing_ext}

When the price $p_{ext}$ follows a random walk, we observe empirically that the GM ask and bid prices manage to track $p_{ext}$ closely, but always with a non-zero spread. What we now prove is that the spread does not diverge, given that the traders bring in some useful information. We introduce a constant trader arrival rate $\lambda$, i.e. a trader arrives every $1/\lambda$ time steps.

Let $p_a^T-p_b^T = \mathcal{O}(g(T|\sigma,\alpha,\lambda))$.

In this case, we want to answer the following three questions :
\begin{enumerate}
    \item What is $g$ when $\lambda = 0$? (Empirically this is $\mathcal{O}(\sqrt{T})$)
    \item What is $g$ when $\lambda > 0$? (Empirically this is $\mathcal{O}(1)$ even for a small $\lambda$)
    \item What is $g$ when $\lambda\gg\sigma$? ($=\mathcal{O}(e^{-kT})$ from \prettyref{sec:toy_proof})
\end{enumerate}

In this section, we prove that in the absence of trades, the spread diverges at a rate of $\sqrt{T}$. Furthermore, we formalize the intuition that even sparse trading activity narrows down the support of the belief and keeps the spread from diverging.

\subsubsection{Spread behavior in the absence of trades}\label{sec:no_trad}

We derive the spread divergence rate in case of an external price following a simple random walk with jump probability $\sigma$ and no trades taking place. We denote the belief over external price at time $t$ by $f_t(.)$ 

\begin{lemma}\label{lem:3}
    In the absence of any trades, the variance of the Bayesian belief $f_t(.)$ over the external price obeys the following rule
    \begin{align}
        \mathrm{var}(f_t) = \mathrm{var}(f_{t-1}) + \sigma
    \end{align}
\end{lemma}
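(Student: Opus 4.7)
\textbf{Proof plan for \prettyref{lem:3}.} The plan is to read off the variance increment directly from the belief-update rule on line~\prettyref{eq:jump_update} of Algorithm~\prettyref{alg:bayes}, which in the no-trade case is the only update applied. That rule says
\begin{equation*}
f_t(p) \;=\; (1-\sigma)\,f_{t-1}(p) \;+\; \tfrac{\sigma}{2}\,f_{t-1}(p-1) \;+\; \tfrac{\sigma}{2}\,f_{t-1}(p+1),
\end{equation*}
which is exactly the convolution of $f_{t-1}$ with the one-step random-walk kernel from \prettyref{eq:1}. So if $P_{t-1}$ is distributed according to $f_{t-1}$ and $J$ is an independent $\{-1,0,+1\}$-valued random variable with $\Pr[J=\pm 1]=\sigma/2$ and $\Pr[J=0]=1-\sigma$, then $P_t := P_{t-1}+J$ is distributed according to $f_t$.

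The first step is to compute the mean and variance of the jump $J$: clearly $E[J]=0$ by symmetry and $E[J^2]=\sigma$, so $\mathrm{var}(J)=\sigma$. The second step is to use independence of $P_{t-1}$ and $J$, which gives $\mathrm{var}(P_t)=\mathrm{var}(P_{t-1})+\mathrm{var}(J)=\mathrm{var}(f_{t-1})+\sigma$. Since $\mathrm{var}(f_t)=\mathrm{var}(P_t)$ by construction, the claimed identity follows immediately.

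There is essentially no obstacle here: the lemma is just the observation that the no-trade belief update is a pure convolution with the price transition kernel, and convolution with a zero-mean, variance-$\sigma$ kernel adds $\sigma$ to the variance. The only minor care needed is to verify that the convolution picture really applies to $f_t$ as a distribution on $\mathbb{Z}$ (normalization is preserved because the kernel weights sum to $1$, and summability of $\sum p^2 f_{t-1}(p)$ is inherited). With this in hand, the identity $\mathrm{var}(f_t)=\mathrm{var}(f_{t-1})+\sigma$ is immediate, and iterating it yields the $\Theta(\sqrt{\sigma T})$ spread bound of \prettyref{thm:2} in the $\lambda=0$ regime via the same $\mathrm{var}\mapsto\mathrm{spread}$ relation used in \prettyref{sec:theor_gen}.
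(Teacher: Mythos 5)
Your proof is correct and is essentially the same argument the paper gives: the paper verifies the identity by directly expanding $\sum_p p^2 f_t(p)$ under the line-\prettyref{eq:jump_update} update and observing the cross terms cancel, while you recast the same computation as convolution with the zero-mean, variance-$\sigma$ jump kernel and invoke variance additivity for independent sums. Your framing is slightly cleaner and makes the cancellation of cross terms automatic (so it holds without implicitly assuming $E[p]=0$, which the paper's notation tacitly does), but the underlying calculation is identical.
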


\begin{proof}

\begin{align}
    \mathrm{var}(f_t) &= \sum_{p} p^2 f_{t}(p)\\
    &= \sum_{p} p^2 ((1-\sigma)f_{t-1}(p) + \sigma/2 f_{t-1}(p-1) + + \sigma/2 f_{t-1}(p+1))\\
    &= (1-\sigma)\sum_{p} p^2 f_{t-1}(p) +\sigma/2\sum_p  (p-1+1)^2f_{t-1}(p-1) + \sigma/2\sum_p (p+1-1)^2 f_{t-1}(p+1)\\
    &= \left[(1-\sigma)\sum_{p} p^2 f_{t-1}(p) +\sigma/2\sum_p  (p-1)^2f_{t-1}(p-1) + \sigma/2\sum_p (p+1)^2 f_{t-1}(p+1)\right]\\
    &+ \left[\sigma/2\sum_p  f_{t-1}(p-1) + \sigma/2\sum_p  f_{t-1}(p+1)\right] \\&+ \left[\sigma/2\sum_p (p-1) f_{t-1}(p-1) + \sigma/2\sum_p  (p+1) f_{t-1}(p+1)\right]\\
    &= \left[ \mathrm{var}(f_t)\right] + \left[\sigma\right] + \left[0\right].
\end{align}

$\qed$
    
\end{proof}

We now derive an upper bound on the ask price.
\begin{align}
    E_t^2[p|d_t=+1] &\leq E_t[p^2|d_t=+1]\\
    &= \sum_p f_t(p|d_t=+1) p^2 \\
    &= \sum_p \frac{(1-\alpha)+2\alpha\mathbf{1}_{p\geq p_a^t}}{(1-\alpha)+2\alpha Pr[p\geq p_a]}f_t(p) p^2 \\
    &\leq \sum_p \frac{1+\alpha}{1-\alpha}f_t(p) p^2\\
    &= \frac{1+\alpha}{1-\alpha}\sigma t.\\
    \implies p_a^t &\leq \sqrt{\frac{1+\alpha}{1-\alpha}\sigma t}.
\end{align}
Note that the above argument is valid when $1-\alpha>0$. For $\alpha=1$, we get $p_a^t = t$.

We can get the following tighter upper bound than the above by using the definition of the ask price.

\begin{lemma}
Assume that the expected initial price $E[p_{ext}^0] = 0$. Then, the ask price of the Bayesian market maker is upper bounded as
  \begin{align}
     p_a^t &\leq \sqrt{\frac{\alpha}{2(1-\alpha)}\sigma t}.
\end{align}  
\end{lemma}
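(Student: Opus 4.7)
The plan is to sharpen the earlier derivation — which replaced $\mathbf{1}_{p>p_a^t}$ by $1$ uniformly — by instead keeping the probability $q:=\Pr_t[p>p_a^t]$ explicit, bounding the conditional moment with Cauchy--Schwarz, and then optimizing over $q$. Intuitively, the previous argument overcounted the tail mass contributing to $p_a^t$; exploiting the fact that $q$ is a genuine probability sharpens the leading constant from $\sqrt{(1+\alpha)/(1-\alpha)}$ to $\sqrt{\alpha/(2(1-\alpha))}$.

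First I would return to the fixed-point definition \prettyref{eq:up1}. Multiplying through by the normalizer $K_1 = \alpha q + (1-\alpha)/2$, this reads
\begin{align*}
    p_a^t\Bigl(\alpha q + \tfrac{1-\alpha}{2}\Bigr) = \alpha\, E_t[p\,\mathbf{1}_{p>p_a^t}] + \tfrac{1-\alpha}{2}\, E_t[p].
\end{align*}
Since $E[p_{ext}^0]=0$ and the jump kernel in \prettyref{eq:1} is symmetric, a short induction analogous to \prettyref{lem:3} (but for the first moment) shows that $f_t$ remains symmetric about $0$ under the trade-free dynamics; hence $E_t[p]=0$, $p_a^t\geq 0$, and $q\leq 1/2$. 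By \prettyref{lem:3} applied with $\mathrm{var}(f_0)=0$, the second moment is exactly $E_t[p^2]=\sigma t$, so Cauchy--Schwarz gives
\begin{align*}
    E_t[p\,\mathbf{1}_{p>p_a^t}] \leq \sqrt{E_t[p^2]\,q} = \sqrt{\sigma t\,q},
\end{align*}
and consequently
\begin{align*}
    p_a^t \leq \frac{\alpha\sqrt{\sigma t\,q}}{\alpha q+\tfrac{1-\alpha}{2}}.
\end{align*}

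The last step would be to maximize the right-hand side over $q\in[0,1/2]$ by standard one-variable calculus. Setting the derivative to zero yields the critical point $q^{\star}=(1-\alpha)/(2\alpha)$: when $\alpha\geq 1/2$ it lies in the feasible range and substitution yields exactly $\sqrt{\alpha\sigma t/(2(1-\alpha))}$, while for $\alpha<1/2$ it escapes the interval, the maximum is attained on the boundary at $q=1/2$, giving the weaker bound $\alpha\sqrt{2\sigma t}$, which remains dominated by $\sqrt{\alpha\sigma t/(2(1-\alpha))}$ via the identity $(2\alpha-1)^2\geq 0$. The only real obstacle I foresee is the bookkeeping to justify $E_t[p]=0$ and $q\leq 1/2$ in the absence of trades; both follow from the symmetry preserved by the jump update on line \prettyref{eq:jump_update}, but deserve a sentence. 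The rest is Cauchy--Schwarz plus calculus.
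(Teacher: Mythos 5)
Your proposal is correct and is essentially the same argument as the paper's: both rewrite the fixed-point equation using $E_t[p]=0$, apply Cauchy--Schwarz to get $p_a^t \le \frac{2\alpha\sqrt{\sigma t\, q}}{1-\alpha+2\alpha q}$ with $q=\Pr_t[p\geq p_a^t]$, and then eliminate $q$. The only cosmetic difference is in that last step --- the paper rearranges into a quadratic in $\sqrt{q}$ and invokes non-negativity of its discriminant, whereas you maximize the right-hand side over $q$ by calculus; these are algebraically equivalent (the discriminant-zero condition is exactly your interior critical point $q^\star=(1-\alpha)/(2\alpha)$). Your extra observation that $q\le 1/2$ by symmetry is correct but unnecessary for the stated bound, and it forces the case split at $\alpha=1/2$ that the paper avoids by leaving $q$ unconstrained.
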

\begin{proof}

We know that
\begin{align}
    p_a^t &= E[p|d_t=+1].
\end{align}
Using the Bayes rule, and that $E[p_{ext}^0] = 0$, we can write the RHS as
\begin{align}
    p_a^t &= \frac{2\alpha}{1-\alpha+2\alpha Pr[p\geq p_a^t]}\sum_{p}\mathbf{1}_{\{p \geq p_a^t\}} p f_t(p)\\
    &\leq \frac{2\alpha}{1-\alpha+2\alpha Pr[p\geq p_a^t]}(\sum_{p}\mathbf{1}_{\{p \geq p_a^t\}}  f_t(p))^{1/2}(\sum_{p}p^2 f_t(p))^{1/2}\\
    &= \frac{2\alpha}{1-\alpha+2\alpha Pr[p\geq p_a^t]}(Pr[p\geq p_a^t])^{1/2}(\sigma t)^{1/2}.\\
    \implies (1-\alpha &+
    2\alpha Pr[p\geq p_a^t])p_a^t \leq 2\alpha\sqrt{\sigma t} \sqrt{Pr[p\geq p_a^t]}.\\
    \implies (2\alpha p_a^t) &Pr[p\geq p_a^t] - 2\alpha\sqrt{\sigma t} \sqrt{Pr[p\geq p_a^t]} + p_a^t(1-\alpha) \leq 0.\label{eq:quad_prob}
\end{align}
The roots of the quadratic expression (in $\sqrt{Pr[p\geq p_a]}$) on the LHS of \prettyref{eq:quad_prob} are
\begin{align}
  \left\{ \frac{\sqrt{\sigma t}}{2p_a^t} - \sqrt{\frac{\sigma t}{4(p_a^t)^2} - \frac{1-\alpha}{2\alpha}}, \frac{\sqrt{\sigma T}}{2p_a^t} + \sqrt{\frac{\sigma t}{4(p_a^t)^2} - \frac{1-\alpha}{2\alpha}} \right\}
\end{align}
Because the quadratic expression has a positive coefficient of the squared term, these roots must be real for the expression to ever be negative. This gives us the result.

$\qed$
\end{proof}

From the above proof, we notice that the following must also hold for the quadratic expression in $\prettyref{eq:quad_prob}$ to be $\leq 0$. We shall use this inequality in the subsequent section. In fact, we see from the above proof that we have the following lemma

\begin{lemma}\label{lem:tail_prob_bound}
If the initial belief of the market maker over the external market price is such that $E[p_{ext}^0] = 0$, then in the absence of trades, we have
    \begin{align}
    \frac{\sqrt{\mathrm{var}(p_{ext}^T)}}{2p_a} - \sqrt{\frac{\mathrm{var}(p_{ext}^T)}{4p_a^2} - \frac{1-\alpha}{2\alpha}}\leq \sqrt{Pr[p\geq p_a^T]} \leq  \frac{\sqrt{\mathrm{var}(p_{ext}^T)}}{2p_a} + \sqrt{\frac{\mathrm{var}(p_{ext}^T)}{4p_a^2} - \frac{1-\alpha}{2\alpha}} \label{eq:quadratic_bound},
\end{align}
where $\mathrm{var}(p_{ext}^T)$ is the variance of the market maker's belief at time $T$.
\end{lemma}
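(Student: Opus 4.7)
The plan is to reuse the argument already executed in the proof of the preceding lemma and simply extract a different consequence from the same quadratic inequality. The preceding proof derived, in the no-trade setting, a quadratic inequality in $\sqrt{\Pr[p \geq p_a^T]}$ whose leading coefficient is positive; the previous lemma used the fact that the inequality is satisfiable only when the discriminant is nonnegative (giving an upper bound on $p_a^T$), whereas the current lemma uses the complementary fact that, whenever the inequality \emph{is} satisfied, $\sqrt{\Pr[p \geq p_a^T]}$ must lie between the two real roots of the quadratic. The only generalization is the replacement of $\sigma T$ (the variance at time $T$ when starting from a point mass, via \prettyref{lem:3}) by an arbitrary $\mathrm{var}(p_{ext}^T)$; this is harmless because every step only uses the fact that $\sum_p p^2 f_T(p)$ equals that variance.

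Concretely, I would first re-derive the quadratic inequality step by step. Using Algorithm \prettyref{alg:bayes} in the absence of trades, only the symmetric jump update in line \prettyref{eq:jump_update} acts on the belief, so the belief mean is a martingale and $E[p_{ext}^0]=0$ propagates to $\sum_p p f_T(p) = 0$. Writing the ask price as $p_a^T = E[p \mid d_T = +1]$, applying Bayes' rule and splitting into the informed and uninformed contributions, the uninformed part vanishes by the vanishing of the belief mean, giving
\begin{equation*}
\bigl(1-\alpha + 2\alpha \Pr[p \geq p_a^T]\bigr)\, p_a^T \;=\; 2\alpha \sum_p \mathbf{1}_{\{p \geq p_a^T\}} \, p \, f_T(p).
\end{equation*}
Cauchy--Schwarz applied to the right-hand side yields
\begin{equation*}
\sum_p \mathbf{1}_{\{p \geq p_a^T\}} \, p \, f_T(p) \;\leq\; \sqrt{\Pr[p \geq p_a^T]} \cdot \sqrt{\mathrm{var}(p_{ext}^T)},
\end{equation*}
and after substituting and rearranging one obtains the quadratic inequality in $x := \sqrt{\Pr[p \geq p_a^T]}$,
\begin{equation*}
2\alpha\, p_a^T\, x^2 \;-\; 2\alpha\, \sqrt{\mathrm{var}(p_{ext}^T)}\; x \;+\; (1-\alpha)\, p_a^T \;\leq\; 0.
\end{equation*}

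Finally, since the coefficient of $x^2$ is positive, this inequality can hold only when $x$ lies between the two real roots of the corresponding quadratic equation; solving the standard quadratic formula gives exactly the two-sided bound in \prettyref{eq:quadratic_bound}. The main (minor) obstacle is really just a bookkeeping one: making the reader comfortable that the derivation goes through with the abstract variance $\mathrm{var}(p_{ext}^T)$ in place of $\sigma T$, and that the discriminant $\mathrm{var}(p_{ext}^T)/(4p_a^2) - (1-\alpha)/(2\alpha)$ is guaranteed nonnegative precisely because the ask price $p_a^T$ it refers to does satisfy the inequality (this is the content of the preceding lemma). No further estimates are needed; the result is a direct corollary of the quadratic analysis already implicit in the previous argument.
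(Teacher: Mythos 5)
Your proposal is correct and matches the paper's own argument: the paper obtains this lemma as an immediate byproduct of the quadratic inequality \prettyref{eq:quad_prob} derived for the ask-price upper bound, observing that a positive-leading-coefficient quadratic is nonpositive only between its two real roots. Your additional bookkeeping (zero belief mean makes the second moment equal the variance, and the discriminant is real by the preceding lemma) is exactly the justification the paper leaves implicit.
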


We now show a lower bound on the ask price, which completes the proof on the rate of growth of the spread in absence of trades.

\begin{lemma}
Assuming that the expected initial price $E[p_{ext}^0]$ is zero for simplicity. The ask price of the Bayesian market maker is lower bounded as
  \begin{align}
     p_a^t &\geq \frac{\alpha}{1+2\alpha}\sqrt{\frac{2\sigma t}{\pi}}\label{eq:lower_bound}
\end{align}  
\end{lemma}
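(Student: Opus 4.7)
The plan is to start from the fixed-point characterization $p_a^t = E[p \mid d_t = +1]$ and bound $p_a^t$ from below via elementary estimates on a truncated first moment. By Bayes' rule applied to the Glosten--Milgrom trader model and using $E[p_{ext}^t] = 0$ (which follows from symmetry of the initial point-mass belief at $0$ and of the random walk), this equation rewrites as
\begin{align}
    p_a^t \left[\alpha \Pr[p > p_a^t] + \tfrac{1-\alpha}{2}\right] = \alpha \sum_{p > p_a^t} p\, f_t(p).
\end{align}
The strategy is straightforward: upper bound the bracketed coefficient on the LHS, lower bound the truncated expectation on the RHS, and then solve for $p_a^t$. Note that $p_a^t \geq E[p_{ext}^t] = 0$, which lets us use symmetry-based bounds freely.

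For the LHS coefficient I would use the loose bound $\Pr[p > p_a^t] \leq 1$, yielding $\mathrm{LHS} \leq p_a^t(1+\alpha)/2$. For the truncated expectation on the RHS I split
\begin{align}
    \sum_{p > p_a^t} p f_t(p) = \sum_{p > 0} p f_t(p) - \sum_{0 < p \leq p_a^t} p f_t(p),
\end{align}
and bound the subtracted piece by $p_a^t \cdot \Pr[0 < p \leq p_a^t]$. Since $f_t$ is symmetric about $0$, we have $\Pr[p > 0] \leq 1/2$ and hence $\Pr[0 < p \leq p_a^t] \leq 1/2$. Combining the two estimates,
\begin{align}
    p_a^t \cdot \tfrac{1+\alpha}{2} \geq \alpha\Bigl(E[p_{ext}^t \mathbf{1}_{\{p_{ext}^t > 0\}}] - \tfrac{p_a^t}{2}\Bigr),
\end{align}
which after collecting the $p_a^t$ terms becomes $p_a^t (1+2\alpha)/2 \geq \alpha\, E[p_{ext}^t \mathbf{1}_{\{p_{ext}^t > 0\}}]$. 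Note the appearance of the factor $1+2\alpha$ is exactly what the target bound calls for.

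The final ingredient is the estimate $E[p_{ext}^t \mathbf{1}_{\{p_{ext}^t > 0\}}] \geq \sqrt{\sigma t/(2\pi)}$. Since $p_{ext}^t$ is a sum of $t$ i.i.d.\ bounded mean-zero increments each with variance $\sigma$, the Central Limit Theorem combined with the uniform integrability of $p_{ext}^t/\sqrt{\sigma t}$ (the increments are in $\{-1,0,+1\}$) yields $E[|p_{ext}^t|]/\sqrt{\sigma t} \to \sqrt{2/\pi}$, and symmetry gives $E[p_{ext}^t \mathbf{1}_{\{p_{ext}^t>0\}}] = E[|p_{ext}^t|]/2$. Plugging in and rearranging yields exactly $p_a^t \geq \frac{\alpha}{1+2\alpha}\sqrt{2\sigma t/\pi}$, matching the claim.

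The main obstacle is rigorously justifying the $\sqrt{\sigma t/(2\pi)}$ lower bound at every finite $t$, rather than only in the limit: the CLT gives convergence but not monotonicity. I would handle this either by a direct computation of $E[|p_{ext}^t|]$ from the trinomial random walk distribution (verifying the asymptotic is approached from below, or that the bound holds for all $t \geq 1$ after a small correction), or by a Berry--Esseen quantitative CLT to control the finite-$t$ error explicitly. All other steps are direct algebraic manipulations, with the only subtlety being the careful bookkeeping of the two $p_a^t$ terms when isolating $p_a^t$ on one side.
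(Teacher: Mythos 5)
Your proposal is correct and follows essentially the same route as the paper: upper bound the denominator coefficient by $\tfrac{1+\alpha}{2}$ via $\Pr[p > p_a^t] \leq 1$, split the truncated sum, bound the subtracted piece by $p_a^t/2$ using symmetry of the belief about zero, solve to get $p_a^t \geq \tfrac{\alpha}{1+2\alpha} E[|p|]$, and plug in the mean-absolute-deviation formula for the lazy random walk. The only (very minor) cosmetic difference is that the paper bounds $\sum_{0<p\leq p_a^t} p\,f_t(p)$ by first switching the order of summation to get $\sum_{k=1}^{p_a^t}\Pr[p_a^t\geq p\geq k]$ and then bounding each term by $1/2$, whereas you bound the sum directly by $p_a^t\Pr[0<p\leq p_a^t]\leq p_a^t/2$; these are equivalent. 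You also rightly flag a loose end the paper glosses over: the paper simply cites a reference for $E[|p_{ext}^t|]=\sqrt{2\sigma t/\pi}$, which is only asymptotically exact, so a finite-$t$ bound would indeed require a Berry--Esseen type argument or a direct trinomial computation.
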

\begin{proof}
    
\begin{align}
    p_a^t &= \frac{2\alpha}{1-\alpha+2\alpha Pr[p\geq p_a^t]}\sum_{p}\mathbf{1}_{\{p \geq p_a\}} p f_t(p).\\
   \implies p_a^t &\geq \frac{2\alpha}{1+\alpha}\sum_{p=p_a^t}^T p f_t(p)\\
    &= \frac{2\alpha}{1+\alpha}\left(\sum_{p=0}^T p f_t(p) - \sum_{p=0}^{p_a^t} p f_t(p)\right)\\
    &= \frac{2\alpha}{1+\alpha}\left(\frac{E[|p|]}{2} - \sum_{k=1}^{p_a^t} Pr[p_a^t \geq p\geq k]\right)\\
    &\geq \frac{2\alpha}{1+\alpha}\left(\frac{E[|p|]}{2} - \sum_{k=1}^{p_a^t} Pr[p\geq k]\right)\\
    &\geq \frac{2\alpha}{1+\alpha}\left(\frac{E[|p|]}{2} - \frac{p_a^t}{2}\right).\\
    \implies p_a^t &\geq \frac{\alpha}{1+2\alpha}E[|p|]. \\
    \implies p_a^t &\geq \frac{\alpha}{1+2\alpha}\sqrt{\frac{2\sigma t}{\pi}},
\end{align}
where we have used the formula for the expected absolute deviation for a simple random walk \cite{randwalk1}.

$\qed$
\end{proof}

Thus, the ask price is upper \textit{and} lower bounded by terms that grow with $\sqrt{\sigma t}$, which proves the first part of the theorem.

% \color{red}
% Tightness of bounds and the equilibrium spread?
% \color{blue}

\subsubsection{Spread behavior in the presence of trades}\label{sec:beh_trad}

In the previous section, we proved that the spread diverges exactly at a rate of $\sqrt{T}$ when $\alpha < 1$ and at the rate of $T$ when $\alpha = 1$, in the absence of trades. In this section, we derive results on spread behavior in presence of trades. Empirically, we see that even for very sparse trading, the spread does not diverge (is $\mathcal{O}(1)$).

% \color{red}
% Key intuitions to guide the results:
% \begin{enumerate}
%     \item Probability of price being outside bid-ask spread goes down exponentially with spread size
%     \item Since spread itself diverges at rate $\sqrt{T}$, the probability of price going outside spread goes down with time $T$ spent without trades as well - theoretically, it seems this probability is constant and does not go down with time
%     \item By the previous point, when an informed trader comes, he does a ``no trade" with high probability. This narrows down the spread at a rate equal to its rate of increase on an average.
%     \item There exists a spread value at which; rate of divergence = rate of collapse. This implies that the spread stays $\mathcal{O}(1)$.
% \end{enumerate}
% \color{blue}
% First, we calculate the probability that the external price goes outside the spread, assuming the AMM started off with knowing the price exactly. Thus, we have
% \begin{align}
%     Pr\left[|p_{ext}^t| \geq |p_a^t|\right] \leq e^{-\frac{(p_a^t)^2}{8t}}
% \end{align}
% where we have used the Azuma-Hoeffding inequality.

% Since we know that $p_a^t \geq k\sqrt{t}$ for some $k$, we get
% \begin{align}
%     Pr\left[|p_{ext}^t| \geq |p_a^t|\right] \leq e^{-\frac{k}{8}} < 1
% \end{align}
% Thus, we have $Pr\left[\mathrm{No\ Trade}\right] \geq 1-e^{-k/8} > 0$.

First, let us calculate the variance of the belief after $T+1$ time steps, where $T=1/\lambda$. That is, we have $T$ time steps where no trades occur and the $T+1^{th}$ time step where a single trade occurs. Let the ask, bid and external prices just before the trade be $p_a, p_b, p_{ext}$ respectively. We drop the $T$ superscript for simplicity. Also, assume that the variance of the initial belief distribution is $K_0$, and that the mean of the belief is $0$ for simplicity. What we aim to prove is that, if the $K_0$ is large enough, then the variance of the belief just after the trade at $T+1$ steps is less than $K_0$. Let the trade that happens at $T+1$ be denoted by $d$.

Then, the expected variance of the belief just after the last time step (when the trade happens) is given by
\begin{align}
    E[E[(p_{ext}-E[p_{ext}|d])^2|d]] &= E[(p_{ext}-E[p_{ext}|d])^2].
\end{align}
Now, we write this variance as the difference between two terms, as shown in the following lemma.
\begin{lemma}
    The expected variance of the belief just after a trade can be written as
    \begin{align}
        E[(p_{ext}-E[p_{ext}|d])^2] &= E[(p_{ext}-E[p_{ext}])^2] - E[(E[p_{ext}|d]-E[p_{ext}])^2].\label{eq:variance_formula}
    \end{align}
\end{lemma}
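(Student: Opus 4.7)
The plan is to recognize this identity as the classical law of total variance rearranged, with $p_{ext}$ playing the role of the random variable and the trade outcome $d$ playing the role of the conditioning variable. By the tower property, $E[E[p_{ext}\mid d]] = E[p_{ext}]$, so the three terms in the identity correspond respectively to $E[\mathrm{Var}(p_{ext}\mid d)]$, $\mathrm{Var}(p_{ext})$, and $\mathrm{Var}(E[p_{ext}\mid d])$; the claim is then just the conditional variance formula $\mathrm{Var}(p_{ext}) = E[\mathrm{Var}(p_{ext}\mid d)] + \mathrm{Var}(E[p_{ext}\mid d])$.

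The cleanest route is the add-and-subtract trick. First I would write
$$p_{ext} - E[p_{ext}\mid d] \;=\; \bigl(p_{ext} - E[p_{ext}]\bigr) \;-\; \bigl(E[p_{ext}\mid d] - E[p_{ext}]\bigr),$$
square both sides, and take the expectation. This produces the two squared terms that appear on the right-hand side of the claimed identity, plus a cross term equal to $-2\,E\bigl[(p_{ext}-E[p_{ext}])(E[p_{ext}\mid d]-E[p_{ext}])\bigr]$.

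The only step that needs any care is evaluating that cross term, which I would do by conditioning on $d$: since $E[p_{ext}\mid d] - E[p_{ext}]$ is a function of $d$ alone, it can be pulled outside the inner conditional expectation, and what remains inside is $E[p_{ext}-E[p_{ext}]\mid d] = E[p_{ext}\mid d] - E[p_{ext}]$. Therefore the cross term simplifies to $-2\,E\bigl[(E[p_{ext}\mid d]-E[p_{ext}])^2\bigr]$, which combines with the previously obtained $+E\bigl[(E[p_{ext}\mid d]-E[p_{ext}])^2\bigr]$ from expanding the square to give the single negative copy appearing in the lemma. I do not anticipate any real obstacle: this is essentially the Pythagorean projection identity for conditional expectation in $L^2$, and the only bookkeeping is the sign of the cross term, which the tower property handles in one line.
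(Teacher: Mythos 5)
Your proposal is correct and is essentially the same argument as the paper's: both use the add-and-subtract trick and the tower property to prove what is, as you observe, the law of total variance. The only cosmetic difference is that the paper inserts $\pm E[p_{ext}\mid d]$ into $E[(p_{ext}-E[p_{ext}])^2]$ so the cross term vanishes outright, whereas you insert $\pm E[p_{ext}]$ into $E[(p_{ext}-E[p_{ext}\mid d])^2]$ and the nonzero cross term cancels half of the squared projection term; the paper also evaluates the cross term by fully multiplying out and matching four pieces, while your conditioning-and-pulling-out route is slightly tidier.
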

\begin{proof}

We first start with the expression for the variance just \textit{before} the trade, and then write it as the sum of the variance just after the trade and another non-negative term.
\begin{align}
    E[(p_{ext}-E[p_{ext}])^2] &= E[(p_{ext}-E[p_{ext}|d] + E[p_{ext}|d] - E[p_{ext}])^2]\\
    &= E[(p_{ext}-E[p_{ext}|d])^2 + E[(E[p_{ext}|d] - E[p_{ext}])^2] \\&+ 2E[(p_{ext}-E[p_{ext}|d])(E[p_{ext}|d] - E[p_{ext}])]\\
    &= E[(p_{ext}-E[p_{ext}|d])^2 + E[(E[p_{ext}|d] - E[p_{ext}])^2] \\&+ 2E[\ p_{ext}E[p_{ext}|d] + E[p_{ext}|d]E[p_{ext}] - E^2[p_{ext}|d] - p_{ext}E[p_{ext}]\ ]\label{eq:variance_proof1}.
\end{align}
The last term in the above equation can be written as
\begin{align}
 E[p_{ext}E[p_{ext}|d]] + E[E[p_{ext}|d]E[p_{ext}]] - E[E^2[p_{ext}|d]] - E[p_{ext}E[p_{ext}]].
\end{align}

 Observe that using $E[E[p_{ext}|d]] = E[p_{ext}]$, the second and fourth terms cancel out. We now group the first and third terms together, which gives us
\begin{align}
 E[p_{ext}E[p_{ext}|d]] - E[E^2[p_{ext}|d]] &= E[\ E[p_{ext}E[p_{ext}|d]|d]\ ] - E[E^2[p_{ext}|d]]\\
 &= E[E^2[p_{ext}|d]] - E[E^2[p_{ext}|d]]\\
 &= 0.
\end{align}
Thus, from \prettyref{eq:variance_proof1}, we get
\begin{align}
    E[(p_{ext}-E[p_{ext}])^2] &= E[(p_{ext}-E[p_{ext}|d])^2 + E[(E[p_{ext}|d] - E[p_{ext}])^2]. 
\end{align}

    $\qed$
\end{proof}

Now, we evaluate each of the terms on the RHS of \prettyref{eq:variance_formula}. Note that the first term is just the variance of the belief just before the trade. Since no trades have happened for the $T$ time slots, we can use \prettyref{lem:3} to get
\begin{align}
    E[(p_{ext}-E[p_{ext}])^2] &= K_0 + \sigma T\label{eq:term1}.
\end{align}
Furthermore, the second term on the RHS of \prettyref{eq:variance_formula} can be lower bounded as
\begin{align}
    E[(E[p_{ext}|d]-E[p_{ext}])^2] &\geq (p_a - 0)^2\times \left(\frac{1-\alpha}{2} + \alpha Pr[p_{ext} \geq p_a]\right) \label{eq:term2},
\end{align}
where we have one term in the expectation on the LHS, namely the case where the incoming trade $d$ is a buy order.

We now use \prettyref{eq:term1} and \prettyref{eq:term2} to write
\begin{align}
    E[(p_{ext}-E[p_{ext}|d])^2] &= E[(p_{ext}-E[p_{ext}])^2] - E[(E[p_{ext}|d]-E[p_{ext}])^2]\\
    &\leq \left[K_0 + \sigma T\right] - \left[  p_a^2 \left(\frac{1-\alpha}{2} + \alpha Pr[p_{T+1}\geq p_a]\right)  \right]\\
    &\leq K_0 + \sigma T - (1-\alpha) p_a^2/2 - \alpha/2 \left( \sqrt{\frac{K_0 +\sigma T}{2}} - \sqrt{ \frac{K_0 +\sigma T}{2} - \frac{1-\alpha} {2\alpha}p_a^2} \right)^2\label{eq:ineq2}\\
    &\leq K_0 + \sigma T - \alpha \frac{K_0 +\sigma T}{4} \left(1-\sqrt{1-\frac{1-\alpha}{\alpha}\frac{p_a^2}{K_0 +\sigma T}}\right)^2,
\end{align}
where we have used \prettyref{lem:tail_prob_bound} to obtain the inequality in \prettyref{eq:ineq2}.

Assuming the square root lower bound on the ask price as obtained in \prettyref{eq:lower_bound}, we have that $p_a \geq \beta \sqrt{K_0 + \sigma T}$, where $\beta = \frac{\alpha\sqrt{2}}{(1+2\alpha)\sqrt{\pi}}$. Substituting this, we get
\begin{align}
    E[(p_{ext}-E[p_{ext}|d])^2] &\leq K_0 + \sigma T - \alpha \frac{K_0 +\sigma T}{4} \left(1-\sqrt{1-\frac{1-\alpha}{\alpha}\beta}\right)^2.\label{eq:variance_reduction}
\end{align}
Thus, we have that
\begin{align}
E[(p_{ext}-E[p_{ext}|d])^2] &\leq K_0\ \ \ \forall K_0 \geq K^*,\\
    \mathrm{where\ }K^* = \frac{4-\alpha \gamma}{\alpha \gamma} \frac{\sigma}{\lambda} \ \mathrm{\ and\ }&\gamma = \left(1-\sqrt{1-\frac{1-\alpha}{\alpha}\beta}\right)^2,
\end{align}
where we have substituted $T = 1/\lambda$.

This can be equivalently written as
\begin{align}
    \mathrm{var}(p_{ext}^{T+1}) &\leq \mathrm{var}(p_{ext}^0) = K_0\ \ \ \forall K_0 \geq K^*.
\end{align}

Thus, the variance of the belief decreases after $T+1$ time steps when the variance of the initial belief is $> K^*$. On the other hand, we know that the variance increases after $T+1$ time steps when $K_0 = 0$. Thus, there exists some positive value of $K_0$ for which we get a ``steady-state'' constant variance, which in turn implies a constant spread.

% The only part that remains to be proven is a square root lower bound of the form $p_a \geq \beta \sqrt{K_0 + \sigma T}$ on the ask price in absence of trades.

% \newpage
% \section{Adaptive DeFi: Other applications}

% \subsection{Lending protocols}

%  Objective is to set interest rates and collateralization ratio based on historical lending data and current lending pool strength. Sanil from UCB has some ideas?

% \subsection{Monetary policy for stablecoins or reserve-backed coins}

% Problem formulation is the inverse of the Ohm paper by Tarun et al\cite{ohm_paper_tarun}. In this paper, given a monetary policy, they show that it achieves certain control objectives (minimizing a cost function involving price volatility). We want to formulate control objectives, and derive a monetary policy from it (how is this done in TradFi? - need to consult economists probably for what can be a reasonable model).

% \subsection{Consensus monetary policy}

% Dynamics between lending, staking, restaking, leveraged staking, liquid staking. How to formulate objective of the consensus monetary policy as a control problem? See Ariah Klages-Mundt

% \subsection{Treasury management}
% See Aera finance \cite{aera}.

\subsection{Proof of \prettyref{thm:3}}\label{sec:rl_proof}

We define two functions of any policy $\pi$ : the cost and the risk. The cost $J^{\pi}$ is the total negative reward accrued (where each term of the sum is of the form $n_t^2 + \mu (p_a^t-p_b^t)^2$, where $n_t$ is the windowed trade imbalance), while the risk is the actual squared deviation from the external price (of the form $(p_{ext}-p_a^t)^2 + (p_{ext}-p_b^t)^2 $). The first aim is to prove that the ratio between the risk and cost is bounded. This would imply a bound on the risk of the optimal policy (optimal w.r.t cost), which would imply that a policy that has optimal cost also has low risk. Another interesting result that might be shown here is that the cost is proportional to the ``time derivative'' of the risk.

% For the single jump cases in \prettyref{sec:toy_proof} and \prettyref{sec:theor_gen}, the cost and risk differ by at most a constant multiple (which depends on $\alpha,\sigma$). This proves that the optimal policy that minimizes the cost also minimizes the risk with the same exponential rate.

%\begin{proof}

\subsubsection{Proof for a special case}

Consider the simplest case where the external price jumps only once at the beginning, and stays constant for the next $T$ steps. The external price $p_{ext}$ jumps to $p_h$ with probability $\sigma$, and to $p_l$ with probability $1 - \sigma$. Here we assume $p_h > p_l$ and let $\Delta := p_h - p_l$.

Define the cost function at step $t$ to be the negative of the reward $c_t := -r_t = n_t^2 + \mu (p_t^a - p_t^b)^2$, and the risk function at step $t$ to be $p_t := (p_{ext}-p_t^a)^2 + (p_{ext}-p_t^b)^2 $. Let $\pi^*$ be the optimal policy of the POMDP, and $J^\pi := \mathbb{E}[\sum_t c_t], R^\pi := \mathbb{E}[\sum_t p_t]$. 

Assume the action space is limited to $(p_a, p_b) = \{(p_h, p_h), (p_l, p_l), (p_h, p_l)\}$. We categorize the three actions into three types : 

\begin{itemize}
    \item Type 1: $(p_a, p_b) = (p_h, p_l)$
    \item Type 2: $p_a = p_b = p_{ext}$
    \item Type 3: $p_a = p_b \neq p_{ext}$
\end{itemize}

Given an action sequence, we know that $d_t$ is independent from each other conditioned on the action sequence. Denote the number of actions of three types to be $k_1, k_2, k_3$ respectively. Then the expected total risk of any policy $\pi$ can be computed as 

$$R^\pi = \mathbb{E}_{\pi}\left[k_1 \Delta^2 + 2 k_3 \Delta^2\right]. $$

The computation of the expected total cost will be more difficult. Consider the definition of $n_t = \sum_{i = t - H}^t d_i$, we decompose $n_t^2$ as 

$$ n_t^2 = \sum_{i = t - H}^t d_i^2 + 2 \sum_{i < j} d_i d_j. $$

Therefore, we can check the contribution of each $d_i^2$ and $d_i d_j$ to the expected total cost as follows: 

\begin{itemize}
    \item For type I actions, we have $\mathbb{E}[d_i^2] = 1 - \alpha$, so the contribution of $d_i^2$ to $\sum_t n_t^2$ is $H(1 - \alpha)$ (ignoring some constants when $1 \leq t \leq H$ or $T - H  + 1 \leq t \leq H$). The contribution of type I action to $\mu(p_t^a - p_t^b)^2$ is $\mu \Delta^2$.
    \item For type II actions, we also have $\mathbb{E}[d_i^2] = 1 - \alpha$, so the contribution of $d_i^2$ to $\sum_t n_t^2$ is $H(1 - \alpha)$.
    \item For type III actions, we have $\mathbb{E}[d_i^2] = 1$, so the contribution of $d_i^2$ to $\sum_t n_t^2$ is $H$. The contributions of $d_i d_j$ when $j - i = l < H$ and both $i, j$ are type III actions are $(H - l + 1) \alpha^2$.
\end{itemize}

Therefore, we know the expected total cost of $\pi$ satisfies (up to some constants)

$$J^\pi \lesssim \mathbb{E}_{\pi}\left[(\mu + H(1 - \alpha)) k_1 + H(1 - \alpha)k_2 + (H + 2H^2\alpha^2)k_3\right].$$

On the other hand, we also have 

$$\mathbb{E}_{\pi}\left[(\mu + H(1 - \alpha)) k_1 + H(1 - \alpha)k_2 + Hk_3\right] \lesssim J^\pi.$$

Define $\bar{J}^\pi := J^\pi - HT(1 - \alpha)$, then 

$$\mathbb{E}_{\pi}\left[\mu k_1 + H\alpha k_3\right] \lesssim \bar{J}^\pi \lesssim \mathbb{E}_{\pi}\left[\mu k_1 + (H\alpha + 2H^2\alpha^2)k_3\right]. $$

Suppose we have a policy $\pi_{ref}$ with $R^{\pi_{ref}} = \beta$ (e.g., the Bayesian policy), then we have 

$$ \mathbb{E}_{\pi_{ref}}\left[k_1 \Delta^2 + 2 k_3 \Delta^2\right] = \beta, $$

which implies

$$ \mathbb{E}_{\pi_{ref}}[k_1 + k_3] \leq \frac{\beta}{\Delta^2}. $$

Therefore, 

$$ \bar{J}^{\pi_{ref}} \lesssim \frac{\beta \max(\mu, \alpha H (1 + 2 \alpha H))}{\Delta^2}. $$

Note that we have $\bar{J}^{\pi^*} \leq \bar{J}^{\pi_{ref}}$, so 

$$ \bar{J}^{\pi^*} \lesssim \frac{\beta \max(\mu, \alpha H (1 + 2 \alpha H))}{\Delta^2}, $$

which means 

$$ \mathbb{E}_{\pi^*}[k_1 + k_3] \lesssim \frac{\beta \max(\mu, \alpha H (1 + 2 \alpha H))}{\min(\mu, \alpha H) \Delta^2}. $$

Finally, 

$$ R^{\pi^*} \lesssim \frac{\max(\mu, \alpha H (1 + 2 \alpha H))}{\min(\mu, \alpha H)} \cdot \beta. $$

This means 

$$ \frac{R^{\pi^*}}{T} \leq \frac{\max(\mu, \alpha H (1 + 2 \alpha H))}{\min(\mu, \alpha H)} \cdot \frac{\beta}{T} + \frac{C}{T} $$

for some constant $C$. 

For the general case where the size of the initial jump is chosen from a continuous distribution, one can divide the action space of prices using an $\epsilon-$net. Doing that puts an additional constant factor which is $\mathcal{O}(1/\epsilon^2)$ on the RHS of the bound derived above, but still gives us that the cost of the optimal policy is bounded above by a constant multiple of the cost of the Bayesian policy. We give a detailed proof of this in the following section.

\subsubsection{Proof for the general case}

 Now we provide the proof for the general case discussed above. To be specific, we assume the initial jump to be bounded: $p_a, p_b \in [m_p, M_p]$. We discretize the interval $[m_p, M_p]$ up to $P := (M_p - m_p) / \epsilon$ pieces, with each piece length $\epsilon$. We require both the external price and the chosen actions at each step to be one of the endpoints of these pieces.
 This is a reasonable assumption since all market makers discretize the prices in form of \textit{ticks}, and computation is performed only in the multiples of the tick sizes over a finite interval. 

Even though the external price space gets more complicated, we can still analyze the expected cost and risk of each action, where the actions are divided into four types.

\begin{itemize}
    \item Type 1: $p_a = p_b = p_{ext}$
    \item Type 2: $p_a > p_{ext} > p_b$
    \item Type 3: $p_a \geq p_b > p_{ext}$
    \item Type 4: $p_{ext} > p_a \geq p_b$
\end{itemize}

However, the risk and cost are also related to the difference between $p_a$ and $p_b$. Thus, we use $k_{i, d}$ to denote the number of actions of type $i$ with $p_a - p_b = \lambda \epsilon$. Now we are ready to compute the risk and cost function.

\begin{lemma}
\label{lem: risk bound}
For any policy $\pi$, the risk function $R^\pi$ satisfies the following property:

\begin{align*}
R^\pi & \geq \mathbb{E}_{\pi} \left[\sum_{\lambda =1}^{P} \frac{\lambda^2 \epsilon^2 }{2} k_{2, \lambda}  + \sum_{\lambda =0}^{P} (\lambda^2 + 1) \epsilon^2 (k_{3, \lambda} + k_{4, \lambda})\right], \\
R^\pi & \leq \mathbb{E}_{\pi} \left[\sum_{\lambda=1}^{P} \lambda^2 \epsilon^2 k_{2, \lambda} + \sum_{\lambda =0}^{P} 2P^2 \epsilon^2 (k_{3, \lambda} + k_{4, \lambda}) \right].
\end{align*}

\end{lemma}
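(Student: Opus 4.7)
\textbf{Proof plan for Lemma~\ref{lem: risk bound}.}
The plan is to bound the per-step risk contribution $\rho_t = (p_{ext}-p_t^a)^2 + (p_{ext}-p_t^b)^2$ separately for each of the four action types, then sum over $t$ and take expectations. Type 1 actions contribute exactly $0$ to $\rho_t$, so they do not appear in either side of the bound. The four cases are mutually exclusive at each step, so the total risk is a sum over types indexed by the spread $\lambda\epsilon := p_t^a - p_t^b$, and the lemma follows once we show the claimed per-step inequalities.

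First I would handle Type 2. Writing $x := p_t^a - p_{ext}$, we have $p_{ext} - p_t^b = \lambda\epsilon - x$ with $0 < x < \lambda\epsilon$, both gaps being positive multiples of $\epsilon$. Then
\begin{equation*}
\rho_t \;=\; x^2 + (\lambda\epsilon - x)^2,
\end{equation*}
which is convex in $x$, attains its minimum $\lambda^2\epsilon^2/2$ at $x = \lambda\epsilon/2$, and is bounded above by the endpoint value $\lambda^2\epsilon^2$ on the whole interval $[0,\lambda\epsilon]$. This yields the Type 2 terms in both bounds. (Note the grid forces $\lambda \geq 2$ here, so $k_{2,1} = 0$ and including $\lambda = 1$ in the sum is harmless.)

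Next I would treat Type 3; Type 4 is symmetric. Setting $y := p_t^b - p_{ext}$, the discretization forces $y \geq \epsilon$, and $p_t^a - p_{ext} = y + \lambda\epsilon$, so
\begin{equation*}
\rho_t \;=\; y^2 + (y + \lambda\epsilon)^2 \;\geq\; \epsilon^2 + (\lambda+1)^2\epsilon^2 \;\geq\; (\lambda^2 + 1)\epsilon^2,
\end{equation*}
giving the lower bound. For the upper bound, since all prices lie in $[m_p, M_p]$ we have $|p_t^a - p_{ext}|, |p_t^b - p_{ext}| \leq P\epsilon$, hence $\rho_t \leq 2P^2\epsilon^2$. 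Summing over $t$ and bundling terms by $(\mathrm{type}, \lambda)$ gives the two inequalities after taking $\mathbb{E}_\pi$.

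The main subtlety, rather than obstacle, is the lower bound for Types 3 and 4: one must exploit the grid constraint to get the extra additive $\epsilon^2$, since a naive bound only gives $\lambda^2\epsilon^2$ and misses the fact that even a zero-spread action ($\lambda = 0$) still incurs positive risk whenever the ask and bid sit strictly on one side of $p_{ext}$. This additive term is what makes the lemma strong enough to be paired later with a matching cost bound to compare $R^{\pi^*}$ against $R^{\pi_B}$.
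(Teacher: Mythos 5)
Your proof is correct and follows essentially the same approach as the paper: bound the per-step risk $\rho_t$ by action type, treating Type 2 via the convex function $x^2+(\lambda\epsilon-x)^2$ on $[0,\lambda\epsilon]$, Types 3/4 via the grid-enforced gap of at least $\epsilon$ for the lower bound and the bounded price range $[m_p,M_p]$ for the upper bound, then sum over steps and take $\mathbb{E}_\pi$. The paper's proof states these case bounds without derivation; yours fills in the elementary optimization, the observation that $k_{2,1}=0$, and the point that the grid constraint is what produces the additive $+\epsilon^2$ in the Type 3/4 lower bound — all consistent with and implicit in the paper's argument.
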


\begin{proof}
The proof is to check the risk of each type of actions.

Type 1 actions have no contributions to the risk, so we started from type 2 actions.

For a type 2 action $p_a - p_b = \lambda \epsilon$, the expected risk must be at least $\lambda^2 \epsilon^2 / 2$, and at most $\lambda^2 \epsilon^2$.

For a type 3 or type 4 action, the expected risk must be at least $(\lambda^2 + 1)\epsilon^2$, and at most $2P^2 \epsilon^2$.

The theorem is derived by simply summing them up.

$\qed$
\end{proof}

\begin{lemma}
\label{lem: cost bound}
For any policy $\pi$, the (modified) cost function $\bar{J}^\pi$ satisfies the following property:

\begin{align*}
\bar{J}^\pi & \gtrsim \mathbb{E}_{\pi} \left[\sum_{\lambda =1}^{P} \mu \lambda^2 \epsilon^2 k_{2, \lambda} + \sum_{\lambda =0}^{P} (\mu \lambda^2 \epsilon^2 + H \alpha - H \alpha^2) (k_{3, \lambda} + k_{4, \lambda})\right], \\
\bar{J}^\pi & \lesssim \mathbb{E}_{\pi} \left[\sum_{\lambda =1}^{P} \mu \lambda^2 \epsilon^2 k_{2, \lambda} + \sum_{\lambda =0}^{P} (\mu \lambda^2 \epsilon^2 + H \alpha + H^2 \alpha^2) (k_{3, \lambda} + k_{4, \lambda})\right].
\end{align*}

\end{lemma}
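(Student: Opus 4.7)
The plan is to decompose $n_t = D_t := \sum_{i=t-H}^t d_i$ through the identity $E[D_t^2] = \mathrm{Var}(D_t) + (E[D_t])^2$, exploiting the fact that conditional on the action sequence the trades $\{d_i\}$ are mutually independent. For each action type, I would precompute the first two moments of $d_i$: types 1 and 2 (informed trader inactive) yield $E[d_i]=0$ and $E[d_i^2]=1-\alpha$; type 3 yields $E[d_i]=-\alpha$ and $E[d_i^2]=1$; type 4 yields $E[d_i]=\alpha$ and $E[d_i^2]=1$. Thus $\mathrm{Var}(d_i)$ is either $1-\alpha$ or $1-\alpha^2$, and $|E[d_i]|$ is $0$ or $\alpha$. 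The total cost then splits cleanly into a variance part (which contributes regardless of correlations between windows) and a squared-mean part (which contributes only through type 3/4 actions).

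For the variance contribution, since each $d_i$ lies in roughly $H$ consecutive windows, we have
\begin{equation*}
\sum_t \mathrm{Var}(D_t) \approx H \sum_i \mathrm{Var}(d_i) = HT(1-\alpha) + H\alpha(1-\alpha)(k_3 + k_4),
\end{equation*}
where $k_i := \sum_\lambda k_{i,\lambda}$ and boundary effects of order $H^2$ get absorbed by the asymptotic $\gtrsim, \lesssim$ notation. Subtracting $HT(1-\alpha)$ and adding the deterministic spread contribution $\mu \sum_t (p_a^t-p_b^t)^2 = \mu \sum_{i,\lambda} \lambda^2 \epsilon^2 k_{i,\lambda}$ already yields a lower bound once I note that $\sum_t (E[D_t])^2 \geq 0$: the leftover piece $H\alpha(1-\alpha)(k_3+k_4) = (H\alpha - H\alpha^2)(k_3+k_4)$ matches exactly the type 3/4 coefficient of the claimed lower bound, while the spread term matches termwise for all three types in the claim.

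For the upper bound, let $n^{34}_t$ denote the number of type 3/4 actions in $[t-H,t]$. Since $E[d_i]$ vanishes for types 1 and 2 and has magnitude $\alpha$ otherwise, $|E[D_t]| \leq \alpha n^{34}_t$; combined with $n^{34}_t \leq H$ this gives $(E[D_t])^2 \leq \alpha^2 H\, n^{34}_t$. Swapping the order of summation,
\begin{equation*}
\sum_t (E[D_t])^2 \leq \alpha^2 H \sum_i \mathbf{1}\{i \text{ is type 3 or 4}\}\, |\{t: i \in [t-H, t]\}| \lesssim H^2 \alpha^2 (k_3+k_4).
\end{equation*}
Adding this to the variance contribution and using the slack inequality $H\alpha(1-\alpha) + H^2\alpha^2 \leq H\alpha + H^2\alpha^2$ delivers the upper bound with coefficient $\mu \lambda^2 \epsilon^2 + H\alpha + H^2\alpha^2$ on type 3/4 actions.

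The main obstacle I anticipate is cleanly bookkeeping the boundary terms: the window count of $H+1$ degrades near $t=1$ and $t=T$, and the spread term must be matched termwise over $\lambda$ for each type so that the decomposition reorganizes into sums with the correct ranges ($\lambda \geq 1$ for type 2, $\lambda \geq 0$ for types 3, 4). A secondary subtlety is that the variance/mean-squared decomposition of $E[D_t^2]$ remains valid across heterogeneous action types only because of the conditional independence of $\{d_i\}$ given the action sequence, which I would state explicitly before carrying out the computation.
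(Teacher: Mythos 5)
Your proposal is correct and takes essentially the same route as the paper: decompose the second moment of the windowed trade imbalance into a term controlled by per-step moments of $d_i$ (which depend only on action type) and a cross-term controlled by the count of type-3/4 actions in each window, then swap sums over windows and time indices. The paper works with the raw decomposition $n_t^2 = \sum_i d_i^2 + 2\sum_{i<j}d_i d_j$ and explicitly computes the cross-term sum as $\left((n_{t,3}-n_{t,4})^2 - (n_{t,3}+n_{t,4})\right)\alpha^2$ per window before bounding it between $-(n_{t,3}+n_{t,4})\alpha^2$ and $H(n_{t,3}+n_{t,4})\alpha^2$; your packaging via $\mathbb{E}[D_t^2] = \mathrm{Var}(D_t) + (\mathbb{E}[D_t])^2$ is algebraically the same (the cross-term is exactly $(\mathbb{E}[D_t])^2 - \sum_i(\mathbb{E}[d_i])^2$ under conditional independence), but it is slightly cleaner because the variance part automatically handles the $HT(1-\alpha)$ baseline and the $H\alpha(1-\alpha)(k_3+k_4)$ excess without cross-window bookkeeping, and the non-negativity of $(\mathbb{E}[D_t])^2$ gives the lower bound for free. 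Your per-type moments and the final slack inequality $H\alpha(1-\alpha)+H^2\alpha^2 \leq H\alpha+H^2\alpha^2$ check out, and you correctly flag the boundary-window and termwise spread-matching issues that the paper absorbs into $\gtrsim,\lesssim$.
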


\begin{proof}
We still decompose $n_t^2$ as 
$$ n_t^2 = \sum_{i = t - H}^t d_i^2 + 2 \sum_{i < j} d_i d_j $$
and check the contribution of each $d_i^2$ and $2d_id_j$ for action at step $i$ with $p_a^i - p_b^i = \lambda \epsilon$.

The contribution of $d_i^2$ (i.e., $\mathbb{E}[d_i^2]$) for type 1, 2, 3, and 4 are $H(1 - \alpha), H(1 - \alpha), H, H$ respectively. 

The analysis of contributions of $2d_id_j$ is more complicated. Note that $E[d_i d_j] > 0 (i < j)$ if and only if $j - i \leq H$ and $i, j$ are both type 3 or type 4 actions. Consider an time interval $[t, t + H]$, we assume the number of type 3 actions is $n_{t, 3}$ and the number of type 4 actions is $n_{t, 4}$. The contribution of $2d_id_j$ can then be calculated as 

\begin{align*}
\sum_{t \leq i < j \leq t + H} \mathbb{E}[2 d_i d_j] & = n_{t, 3} (n_{t, 3} - 1) \alpha^2 + n_{t, 4} (n_{t, 4} - 1) \alpha^2 - 2 n_{t, 3} n_{t, 4} \alpha^2 \\
& = \left((n_{t, 3} - n_{t, 4})^2 - (n_{t, 3} + n_{t, 4})\right) \alpha^2.
\end{align*}

By some basic inequalities we have

\begin{align*}
-(n_{t, 3} + n_{t, 4}) \alpha^2 \leq \sum_{t \leq i < j \leq t + H} \mathbb{E}[2 d_i d_j] \leq (n_{t, 3} + n_{t, 4})^2 \alpha^2 \leq H(n_{t, 3} + n_{t, 4}) \alpha^2.
\end{align*}

Observe that the sum of $n_{t, 3}$ (resp. $n_{t, 4}$) over all possible $t$ is exactly $H \sum_{\lambda} k_{3, \lambda}$ (up to some constants when $t$ is smaller than $H$ or larger than $T - H$), so we have 

\begin{align*}
\sum_t \sum_{t \leq i < j \leq t + H} \mathbb{E}[2 d_i d_j] & \gtrsim -H \alpha^2 \sum_{\lambda=0}^P (k_{3, \lambda} + k_{4, \lambda}), \\
\sum_t \sum_{t \leq i < j \leq t + H} \mathbb{E}[2 d_i d_j] & \lesssim H^2 \alpha^2 \sum_{\lambda=0}^P (k_{3, \lambda} + k_{4, \lambda}).
\end{align*}

As a result, we have

\begin{align*}
J^\pi & \gtrsim \mathbb{E}_{\pi}\left[H(1 - \alpha) k_{1} + \sum_{\lambda =1}^{P} (\mu \lambda^2 \epsilon^2 + H(1 - \alpha)) k_{2, \lambda} + \sum_{\lambda =0}^{P} (\mu \lambda^2 \epsilon^2 + H - H \alpha^2) (k_{3, \lambda} + k_{4, \lambda})\right], \\
J^\pi & \lesssim \mathbb{E}_{\pi}\left[H(1 - \alpha) k_{1} + \sum_{\lambda =1}^{P} (\mu \lambda^2 \epsilon^2 + H(1 - \alpha)) k_{2, \lambda} + \sum_{\lambda =0}^{P} (\mu \lambda^2 \epsilon^2 + H + H^2 \alpha^2) (k_{3, \lambda} + k_{4, \lambda})\right].
\end{align*}

Subtracting the inequality by $HT(1 - \alpha)$ proves the theorem.

$\qed$
\end{proof}

Now we derive the final bound between $R^{\pi^*}$ and $R^{\pi_{ref}}$. Before the proof, we would like to introduce an auxiliary lemma.

\begin{lemma}
\label{lem: aux}

Given three sequence of positive real numbers $a_i, b_i, x_i (i \in [n])$ with $\sum_{i = 1}^n a_i x_i \leq c_0$ for some constant $c_0$, it holds that

$$ \sum_{i=1}^n b_i x_i \leq \beta \cdot \max_i \frac{b_i}{a_i}. $$
\end{lemma}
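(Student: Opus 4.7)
The statement looks like an elementary weighted-average style inequality (and the constant on the right should almost certainly read $c_0$, not $\beta$, since $\beta$ is not a quantity local to the lemma's hypotheses). My plan is therefore to prove
\[
\sum_{i=1}^n b_i x_i \;\le\; c_0 \cdot \max_i \frac{b_i}{a_i},
\]
which is the only statement consistent with the hypotheses, and which is exactly what is invoked later when combining Lemma \ref{lem: risk bound} and Lemma \ref{lem: cost bound} to bound $R^{\pi^*}$ in terms of $R^{\pi_{\mathrm{ref}}}$.

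The strategy is a one-line algebraic factorization. Since all $a_i>0$, I multiply and divide each term by $a_i$, writing $b_i x_i = (b_i/a_i)(a_i x_i)$. Then I pull the ratio out by its maximum, giving
\[
\sum_{i=1}^n b_i x_i \;=\; \sum_{i=1}^n \frac{b_i}{a_i}\, (a_i x_i) \;\le\; \Bigl(\max_{j} \tfrac{b_j}{a_j}\Bigr) \sum_{i=1}^n a_i x_i.
\]
Here I use positivity of the $x_i$ and $a_i$ crucially, so that $a_i x_i \ge 0$ and the coordinate-wise upper bound on $b_i/a_i$ survives summation.

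Finally, I apply the hypothesis $\sum_i a_i x_i \le c_0$ to the rightmost sum to conclude
\[
\sum_{i=1}^n b_i x_i \;\le\; c_0 \cdot \max_i \frac{b_i}{a_i}.
\]
There is really no obstacle here; the only subtlety is noticing that positivity of all three sequences is what lets the $\max$ be factored out of a sum with nonnegative weights, and that the statement as printed has a typo in its right-hand side that needs to be read in context (where this lemma is used with $c_0 = \beta = R^{\pi_{\mathrm{ref}}}$, $a_i$ the lower-bound coefficients from Lemma \ref{lem: cost bound}, and $b_i$ the upper-bound coefficients from Lemma \ref{lem: risk bound}, applied to $x_i = \mathbb{E}_{\pi^*}[k_{i,\lambda}]$).
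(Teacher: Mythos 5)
Your proof is correct, and you have correctly spotted the typo: the right-hand side should read $c_0 \cdot \max_i (b_i/a_i)$, which in the context of its application becomes $\beta$ because the lemma is invoked with $c_0 = \beta = R^{\pi_{\mathrm{ref}}}$. The paper itself omits any proof of this lemma (it is stated and then used directly), so there is nothing to compare against; your one-line factorization $b_i x_i = (b_i/a_i)(a_i x_i)$ followed by pulling out the max over nonnegative summands is exactly the intended elementary argument.
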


According to Lemma \ref{lem: risk bound}, we have 

\begin{align*}
\mathbb{E}_{\pi_{ref}} \left[\sum_{\lambda =1}^{P} \frac{\lambda^2 \epsilon^2 }{2} k_{2, \lambda}  + \sum_{\lambda =0}^{P} (\lambda^2 + 1) \epsilon^2 (k_{3, \lambda} + k_{4, \lambda})\right] \leq R^{\pi_{ref}}.
\end{align*}

By Lemma \ref{lem: aux} it holds that

\begin{align*}
\mathbb{E}_{\pi_{ref}} \left[\sum_{\lambda =1}^{P} \mu \lambda^2 \epsilon^2 k_{2, \lambda} + \sum_{\lambda =0}^{P} (\mu \lambda^2 \epsilon^2 + H \alpha + H^2 \alpha^2) (k_{3, \lambda} + k_{4, \lambda})\right] \leq \max(2\mu, \frac{H\alpha (1  + H\alpha)}{\epsilon^2}) \cdot R^{\pi_{ref}}. 
\end{align*}

Therefore, 
\begin{align*}
\bar{J}^{\pi^*} \leq \bar{J}^{\pi_{ref}} \lesssim \max(2\mu, \frac{H\alpha (1  + H\alpha)}{\epsilon^2}) \cdot R^{\pi_{ref}}
\end{align*}
according to Lemma \ref{lem: cost bound}.

Finally, we use Lemma \ref{lem: aux} and Lemma \ref{lem: risk bound} again to obtain 

\begin{align*}
R^{\pi^*} & \leq \mathbb{E}_{\pi} \left[\sum_{\lambda=1}^{P} \lambda^2 \epsilon^2 k_{2, \lambda} + \sum_{\lambda =0}^{P} 2P^2 \epsilon^2 (k_{3, \lambda} + k_{4, \lambda}) \right], \\
& \lesssim \max(\frac{1}{\mu}, \frac{2P^2\epsilon^2}{H\alpha(1 - \alpha)}) \cdot \max(2\mu, \frac{H\alpha (1  + H\alpha)}{\epsilon^2}) \cdot R^{\pi_{ref}}.
\end{align*}

Therefore, the following bound holds as long as $H \alpha (1 - \alpha) / 2P^2 \epsilon^2 \leq \mu \leq H\alpha (1 + H\alpha) / 2\epsilon^2$:

\begin{align*}
R^{\pi^*} & \lesssim \frac{2P^2 (1 + H \alpha)}{1 - \alpha} \cdot R^{\pi_{ref}} = \frac{2(M_p - m_p)^2 (1 + H \alpha)}{(1 - \alpha) \epsilon^2} \cdot R^{\pi_{ref}}.
\end{align*}
\newpage

\section{Empirical results}\label{app:sim_res}

\subsection{Results referenced in main paper}

In this section we present the empirical results referred in the main paper, particularly in \prettyref{sec:sim_res}. Additional results have been shown in \prettyref{app:sim_res}.

\subsection{General trader behavior } 

 The model of trader behavior as outlined in \prettyref{sec:model} is restrictive, in the sense that real traders lie on a spectrum of ``informedness'' about the external price, instead of being purely informed or uninformed. To remedy this, we modify the environment to have traders that see a noisy version of the hidden price. This kind of trader behavior is supported by the data collected from on-chain exchanges such as Uniswap v3 (\prettyref{fig:cex_price_dev}). That is, every trader sees an observation $p_{trader}^t = p_{ext}^t + \eta_t$ where $\eta_t$ is i.i.d noise (note that the model in \prettyref{sec:model} is just a special case of this). Then, the trader buys if $p_{trader}^t > p_a^t$, sells if $p_{trader}^t < p_b^t$ and does neither otherwise. We experimented with three types of noise - additive Gaussian, additive Laplace, and log-Normal. 

Because the model to be estimated becomes more complex, we replace the $Q-$table with a neural network \cite{mnih2013playing}. The neural network takes the trade history $(d_{t-H},\cdots,d_t)$ as a vector input instead of treating the sum ($n_t=\sum_{\tau=t-H}^t d_{\tau}$) of the trade history as a scalar input. Using this approach , we found that the market maker could handle even more sophisticated forms of trader behavior (\prettyref{fig:dqn} in Appendix \prettyref{app:sim_res}).

We observe that the algorithm trained on any one form of them is robust to a change in underlying trader distribution. In particular, \prettyref{fig:bayesian_trader} shows the ask and bid prices for an optimal Bayesian trader as per Algorithm \prettyref{alg:bayes}. We then train the DQN on traders with Gaussian noise, which reach spread and mid-price deviation values similar to the Bayesian case, as shown in \prettyref{fig:gaussian_trader}. We observe that the same agent then effectively tracks the external price when the types of noise that traders see is changed to Laplace and log-Normal in \prettyref{fig:laplace_trader} and \prettyref{fig:geomgaussian_trader} respectively.

\begin{figure}[hbt!]

\begin{subfigure}{0.30\textwidth}
  \centering
%   \fbox{\rule[-.5cm]{0cm}{6cm} \rule[-.5cm]{6cm}{0cm}}
\hspace*{-0.4in}
 \includegraphics[width=1.25\textwidth]{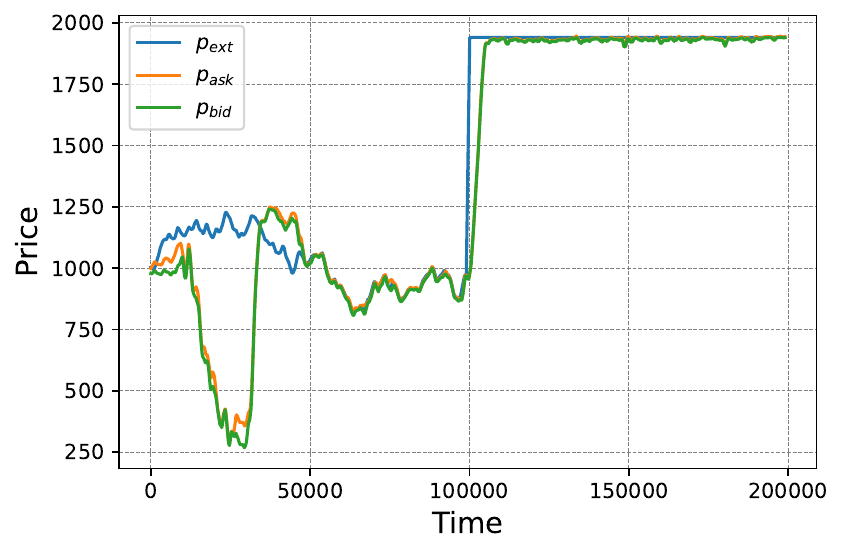}
  \caption{$p_{ask}$, $p_{bid}$ and $p_{ext}$}
  \label{fig:jump_ab}
\end{subfigure}
\hfill
\begin{subfigure}{0.30\textwidth}
  \centering 
  \hspace*{-0.25in}
 \includegraphics[width=1.25\textwidth]{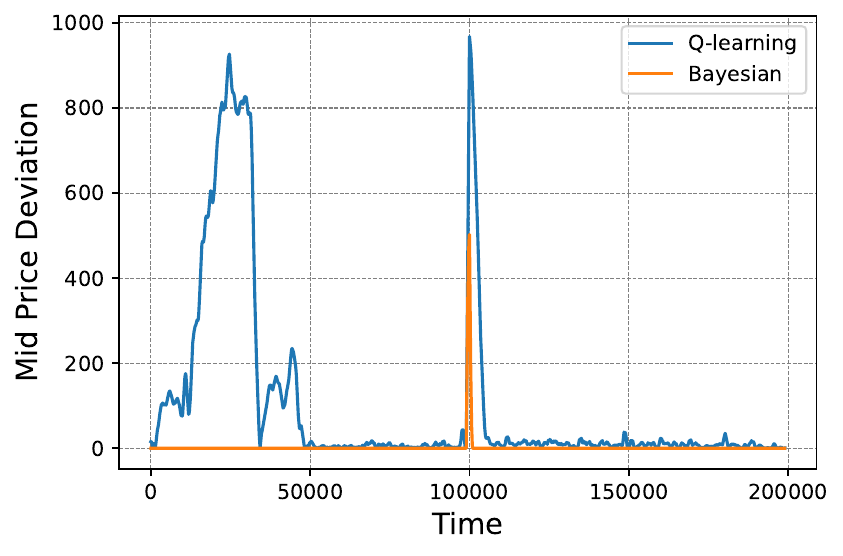} 
  \caption{Mid price deviation}
  \label{fig:jump_mid}
\end{subfigure}
\hfill
\begin{subfigure}{0.30\textwidth}
  \centering
  \hspace*{-0.1in}
  \includegraphics[width=1.25\textwidth]{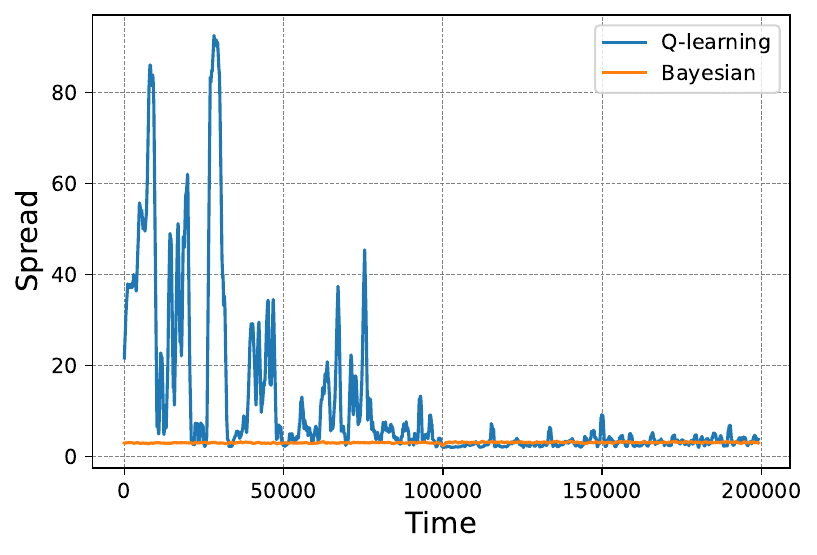}
  \caption{Spread}
  \label{fig:jump_sp}
\end{subfigure}
\caption{The conjectured reward for the model-free algorithm trains the agent to track the external hidden price, even in the presence of large price jumps. Figure reference in \prettyref{sec:sim_res}.}
\label{fig:jump}
\end{figure}

\begin{figure}[hbt!]
\begin{subfigure}[t]{0.45\textwidth}
  \centering 
  \hspace*{-0.25in}
 \includegraphics[width=1.2\textwidth]{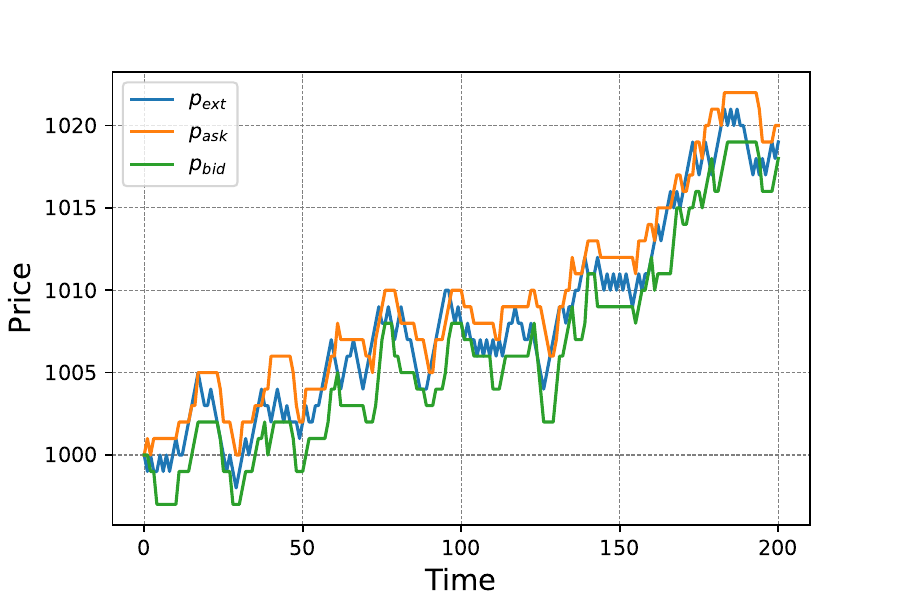} 
  \caption{Using the optimal Bayesian algorithm to track external price - with known underlying model parameters}
  \label{fig:bayesian_trader}
\end{subfigure}
\hfill
\begin{subfigure}[t]{0.45\textwidth}
  \centering
%   \fbox{\rule[-.5cm]{0cm}{6cm} \rule[-.5cm]{6cm}{0cm}}
\hspace*{-0.25in}
 \includegraphics[width=1.2\textwidth]{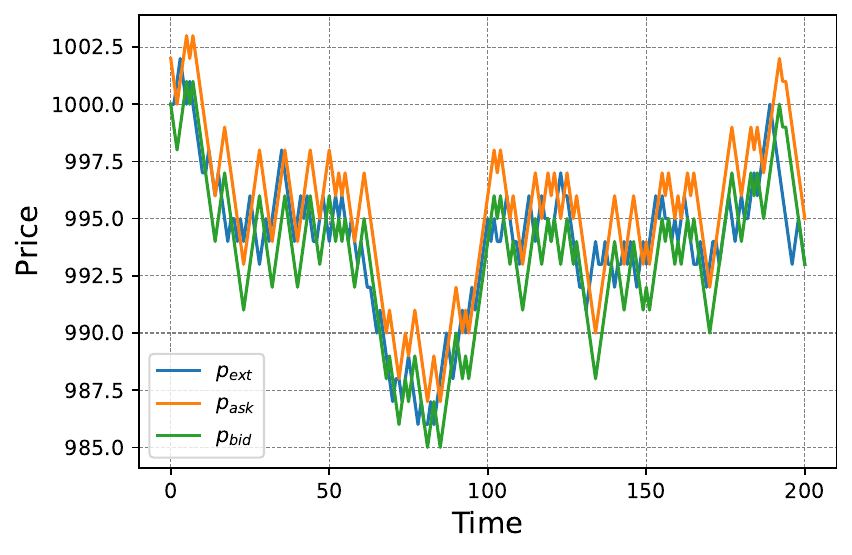}
  \caption{Trader observes external price through Gaussian noise. Market Maker based on a DQN is trained on this trader behavior - tracking is achieved with similar average spread and loss as the Bayesian model}
  \label{fig:gaussian_trader}
\end{subfigure}
\hfill
\begin{subfigure}{0.45\textwidth}
  \centering
%   \fbox{\rule[-.5cm]{0cm}{6cm} \rule[-.5cm]{6cm}{0cm}}
\hspace*{-0.25in}
 \includegraphics[width=1.2\textwidth]{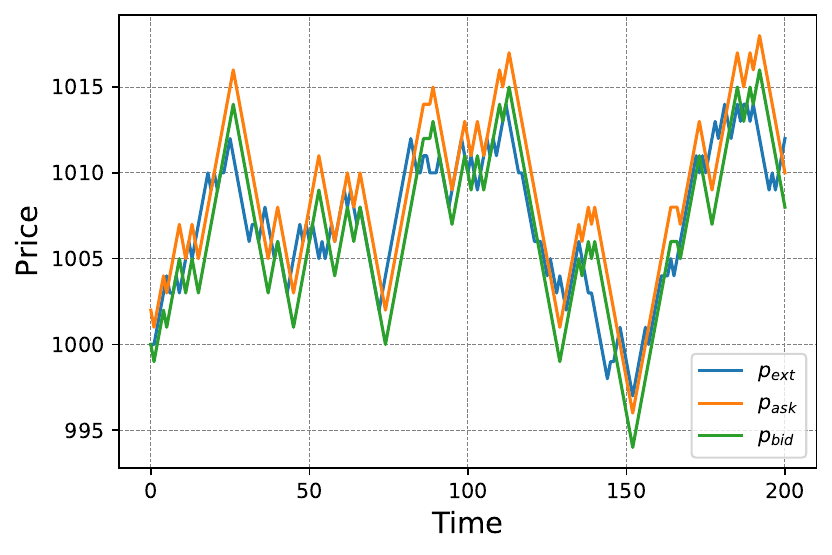}
  \caption{Trader observes external price through Laplace noise. Market maker tracks price effectively \textit{without any prior training} on this noise model.}
  \label{fig:laplace_trader}
\end{subfigure}
\hfill
\begin{subfigure}{0.45\textwidth}
  \centering 
  \hspace*{-0.25in}
 \includegraphics[width=1.2\textwidth]{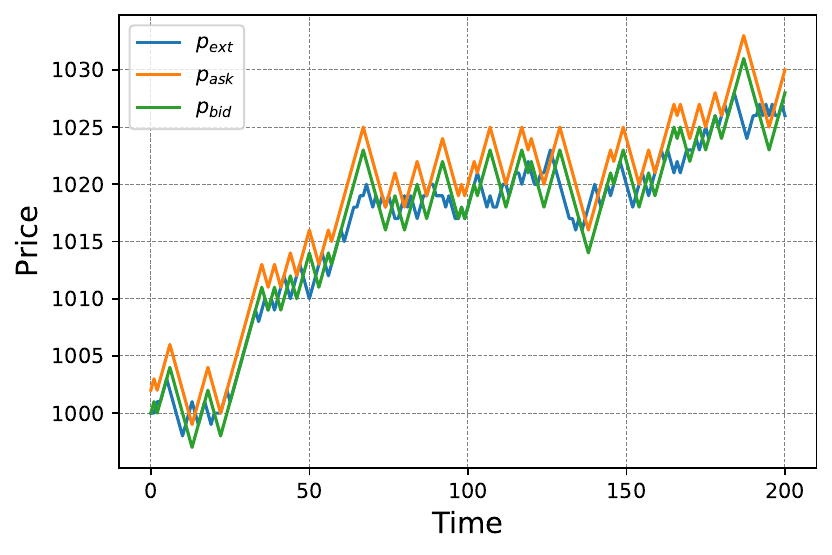} 
  \caption{Trader observes external price through log-normal noise. Market maker tracks price effectively \textit{without any prior training} on this noise model.}
  \label{fig:geomgaussian_trader}
\end{subfigure}
\caption{Algorithm \prettyref{alg:qt} is robust to changes in underlying trader behavior. Figure referenced in \prettyref{sec:sim_res}.}
\label{fig:dqn}
\end{figure}

\newpage

\subsection{Additional results}\label{app:additional}

In this section, we present results related to the ones presented in the main paper. This includes:
\begin{itemize}
\item Performance of Algorithms \prettyref{alg:bayes} and \prettyref{alg:qt} in response to different market scenarios
\item Monetary loss comparisons between Algorithms \prettyref{alg:bayes}, \prettyref{alg:qt} and the one presented in \cite{chan2001}.
\end{itemize}

\begin{figure}[hbt!]
\begin{subfigure}{0.30\textwidth}
  \centering
%   \fbox{\rule[-.5cm]{0cm}{6cm} \rule[-.5cm]{6cm}{0cm}}
\hspace*{-0.4in}
 \includegraphics[width=1.25\textwidth]{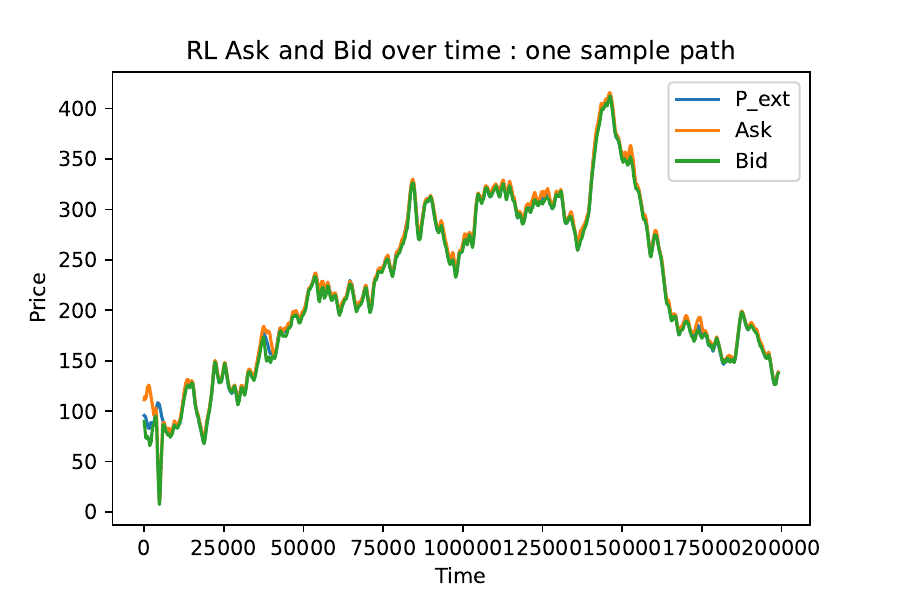}
  \caption{$p_{ask}$, $p_{bid}$ and $p_{ext}$}
  \label{fig:vanilla_ab_a}
\end{subfigure}
\hfill
\begin{subfigure}{0.30\textwidth}
  \centering 
  \hspace*{-0.25in}
 \includegraphics[width=1.25\textwidth]{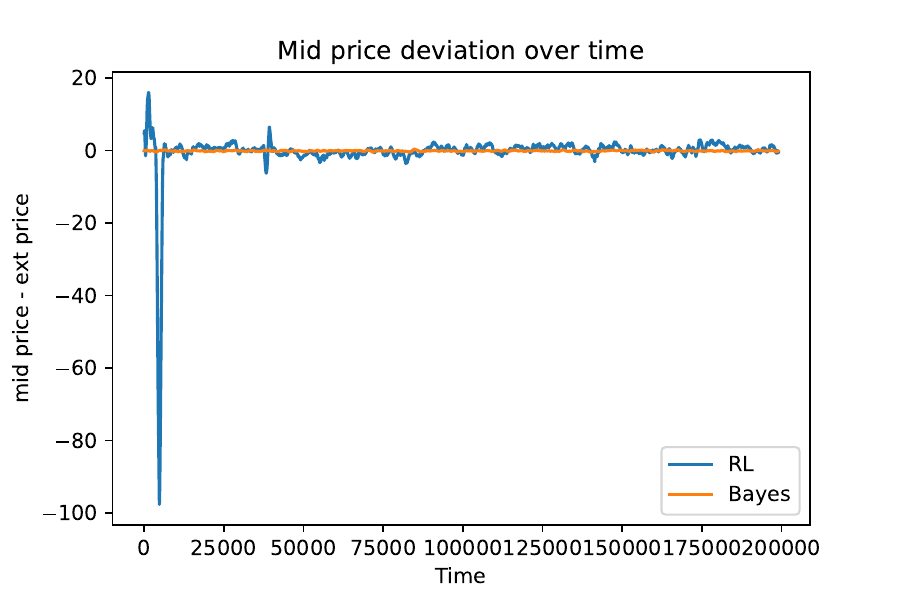} 
  \caption{Mid price deviation}
  \label{fig:vanilla_mid_a}
\end{subfigure}
\hfill
\begin{subfigure}{0.30\textwidth}
  \centering
  \hspace*{-0.1in}
  \includegraphics[width=1.25\textwidth]{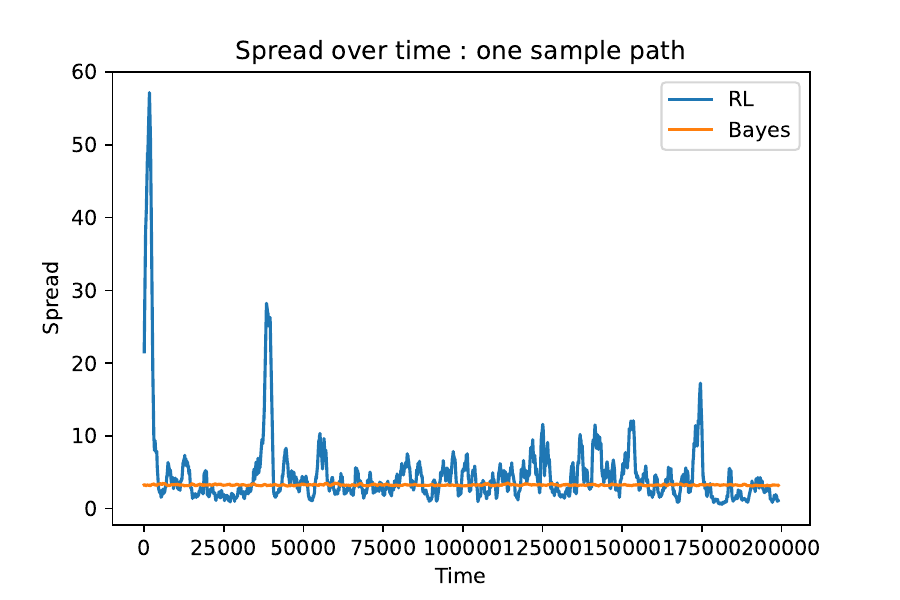}
  \caption{Spread}
  \label{fig:vanilla_sp_a}
\end{subfigure}
\caption{The conjectured reward for the model-free algorithm trains the agent to track the external hidden price, eventually approaching the performance of the optimal Bayesian algorithm.}
\label{fig:vanilla_a}
\end{figure}

\begin{figure}[hbt!]

\begin{subfigure}{0.30\textwidth}
  \centering
%   \fbox{\rule[-.5cm]{0cm}{6cm} \rule[-.5cm]{6cm}{0cm}}
\hspace*{-0.4in}
 \includegraphics[width=1.25\textwidth]{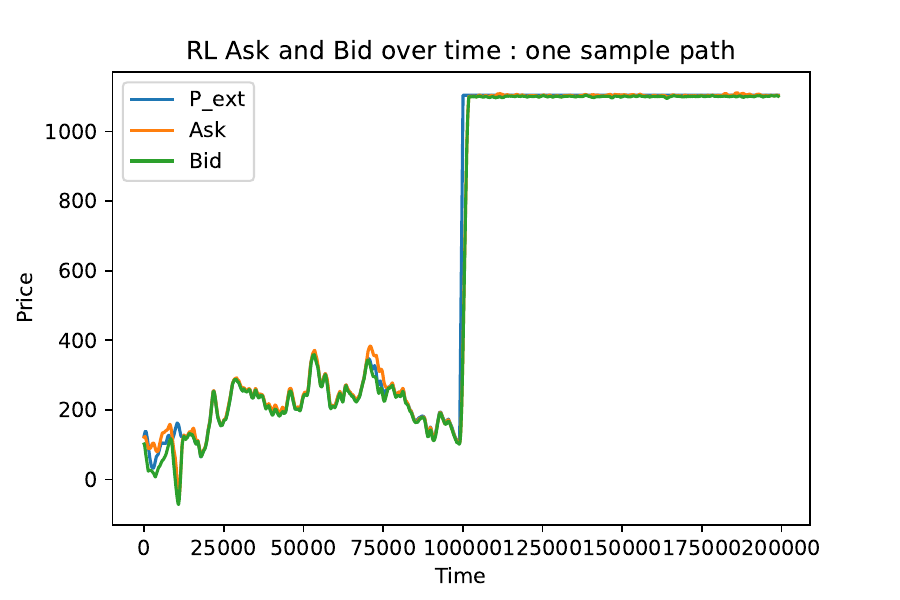}
  \caption{$p_{ask}$, $p_{bid}$ and $p_{ext}$}
  \label{fig:jump_ab_a}
\end{subfigure}
\hfill
\begin{subfigure}{0.30\textwidth}
  \centering 
  \hspace*{-0.25in}
 \includegraphics[width=1.25\textwidth]{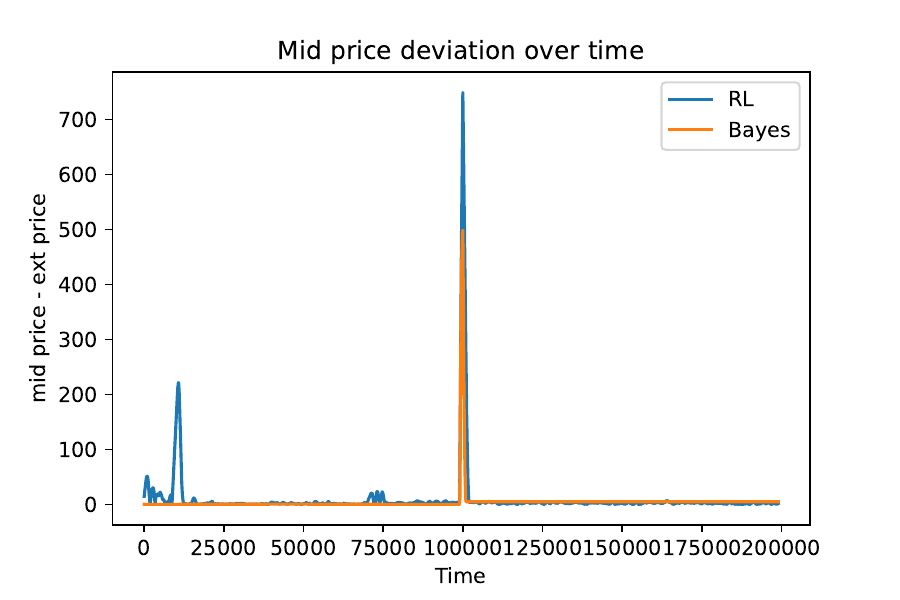} 
  \caption{Mid price deviation}
  \label{fig:jump_mid_a}
\end{subfigure}
\hfill
\begin{subfigure}{0.30\textwidth}
  \centering
  \hspace*{-0.1in}
  \includegraphics[width=1.25\textwidth]{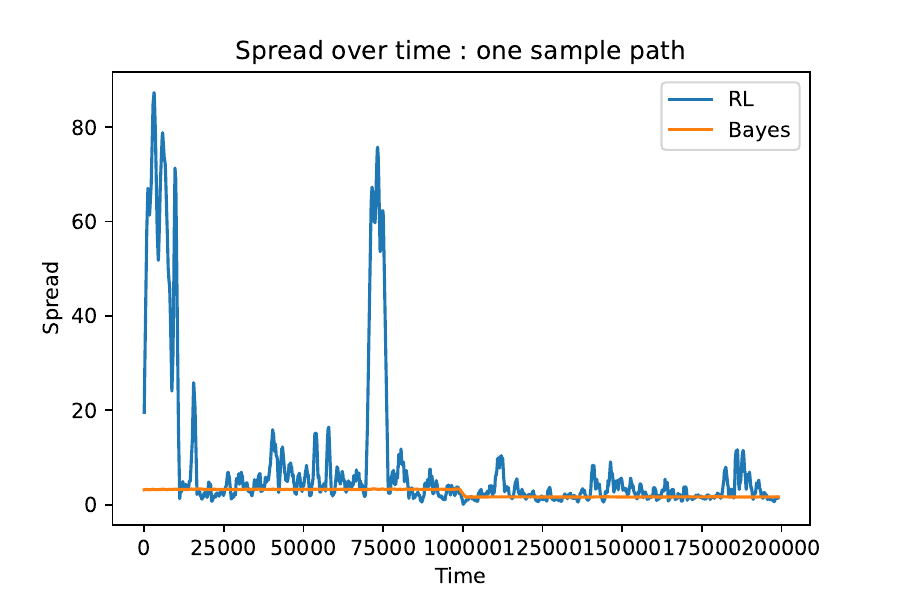}
  \caption{Spread}
  \label{fig:jump_sp_a}
\end{subfigure}
\caption{The conjectured reward for the model-free algorithm trains the agent to track the external hidden price, even in the presence of large price jumps. }
\label{fig:jump_a}
\end{figure}

\begin{figure}[hbt!]

\begin{subfigure}[t]{0.45\textwidth}
  \centering
%   \fbox{\rule[-.5cm]{0cm}{6cm} \rule[-.5cm]{6cm}{0cm}}
\hspace*{-0.25in}
 \includegraphics[width=1.25\textwidth]{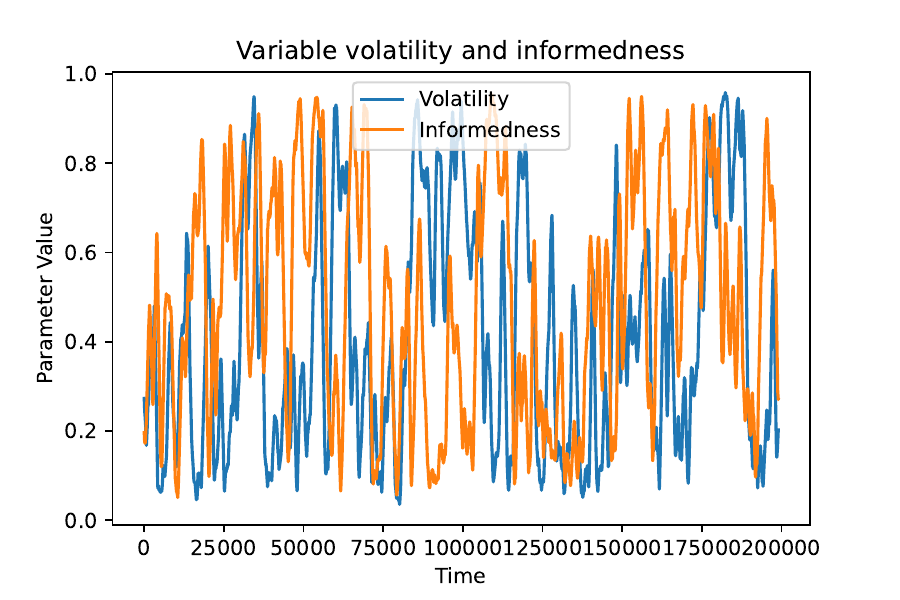}
  \caption{Erratic changes in trader informedness and price volatility}
  \label{fig:variab_param_a}
\end{subfigure}
\hfill
\begin{subfigure}[t]{0.45\textwidth}
  \centering
%   \fbox{\rule[-.5cm]{0cm}{6cm} \rule[-.5cm]{6cm}{0cm}}
\hspace*{-0.25in}
 \includegraphics[width=1.25\textwidth]{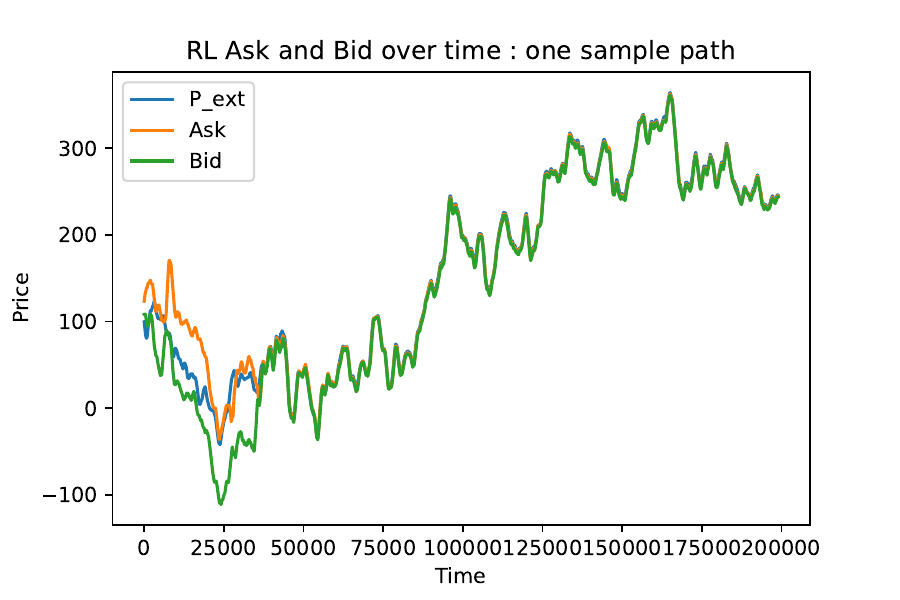}
  \caption{Ask, Bid and $p_{ext}$}
  \label{fig:variab_ab_a}
\end{subfigure}
\hfill
\begin{subfigure}[t]{0.45\textwidth}
  \centering 
  \hspace*{-0.25in}
 \includegraphics[width=1.25\textwidth]{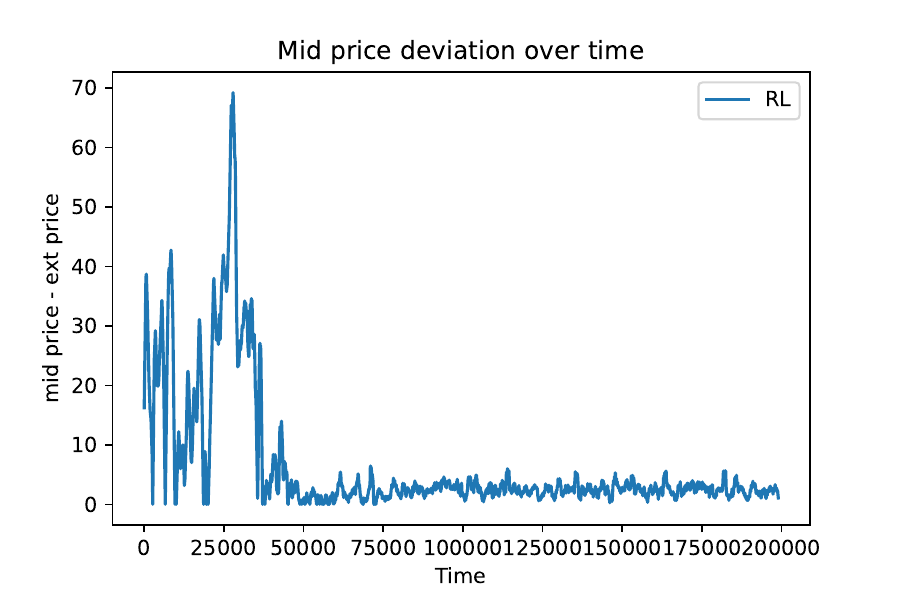} 
  \caption{Deviation from mid price}
  \label{fig:variab_mid_a}
\end{subfigure}
\hfill
\begin{subfigure}[t]{0.45\textwidth}
  \centering
  \hspace*{-0.25in}
  \includegraphics[width=1.25\textwidth]{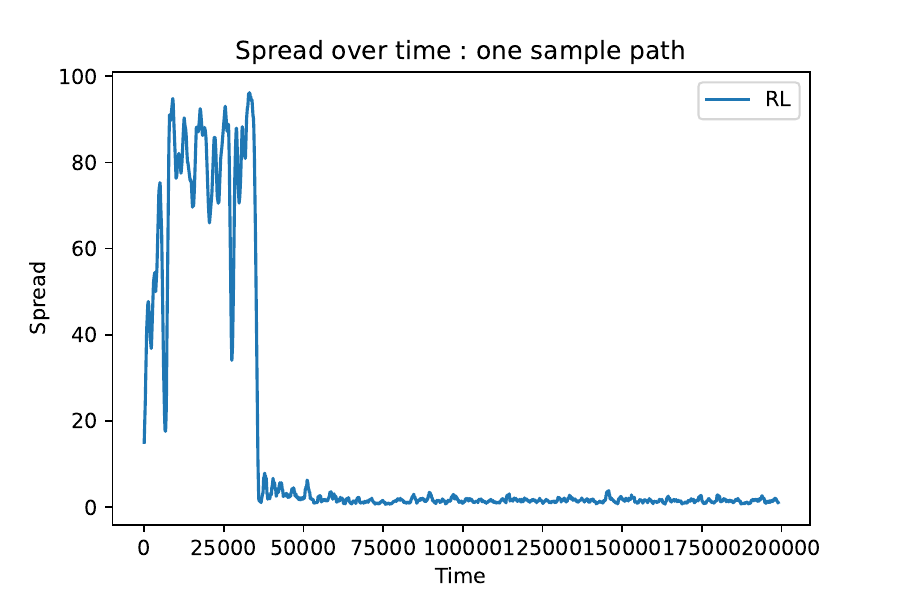}
  \caption{Spread}
  \label{fig:variab_sp_a}
\end{subfigure}
\caption{The conjectured reward trains the agent to track the external hidden price, even in the presence of erratic changes in the market conditions}
\label{fig:variable_a}
\end{figure}

\begin{figure}[hbt!]
  \centering
%   \fbox{\rule[-.5cm]{0cm}{6cm} \rule[-.5cm]{6cm}{0cm}}
\hspace*{-0.4in}
 \includegraphics[width=0.75\textwidth]{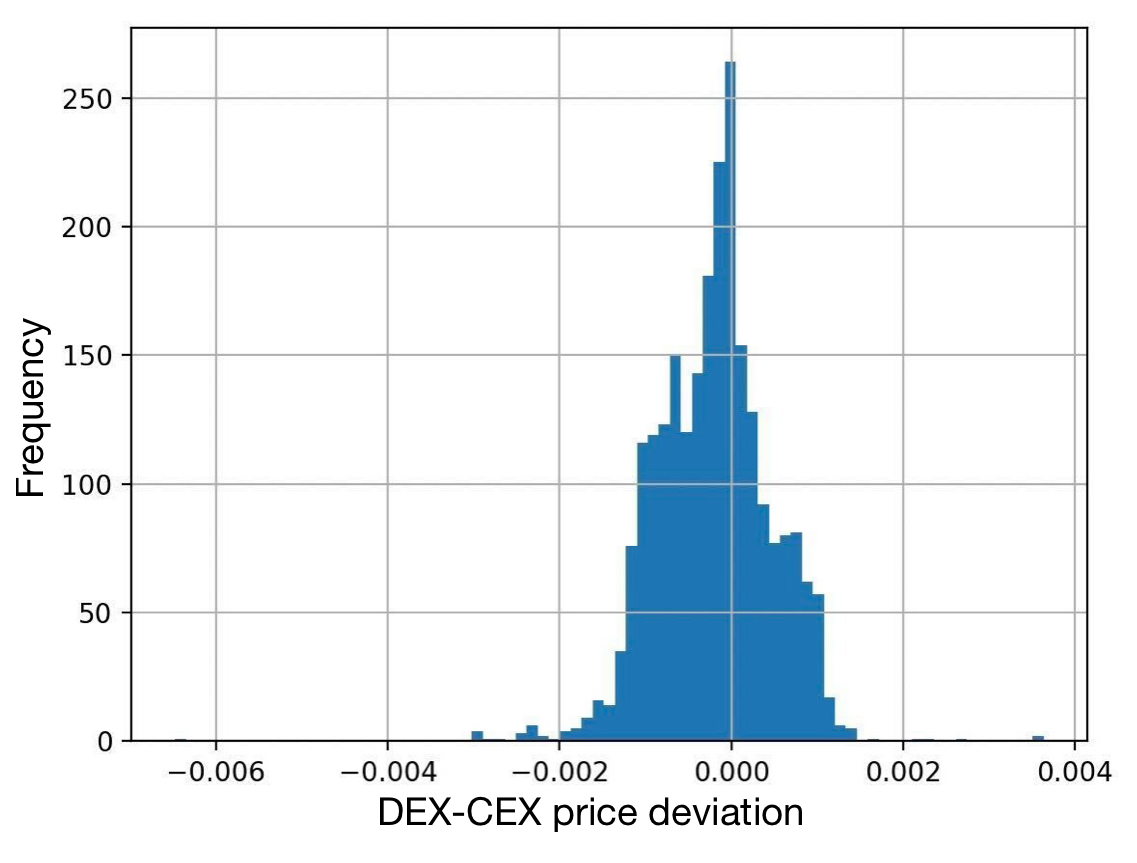}
\caption{The above histogram is that of the log deviation in price offered by Uniswap v3 (Decentralized Exchange or DEX) from Binance (Centralized Exchange or CEX) for the ETH-USDC pool. The histogram for other token pairs also has a similar plot.}
\label{fig:cex_price_dev}
\end{figure}

\begin{figure}[hbt!]

\begin{subfigure}[t]{0.45\textwidth}
  \centering
%   \fbox{\rule[-.5cm]{0cm}{6cm} \rule[-.5cm]{6cm}{0cm}}
\hspace*{-0.25in}
 \includegraphics[width=1.2\textwidth]{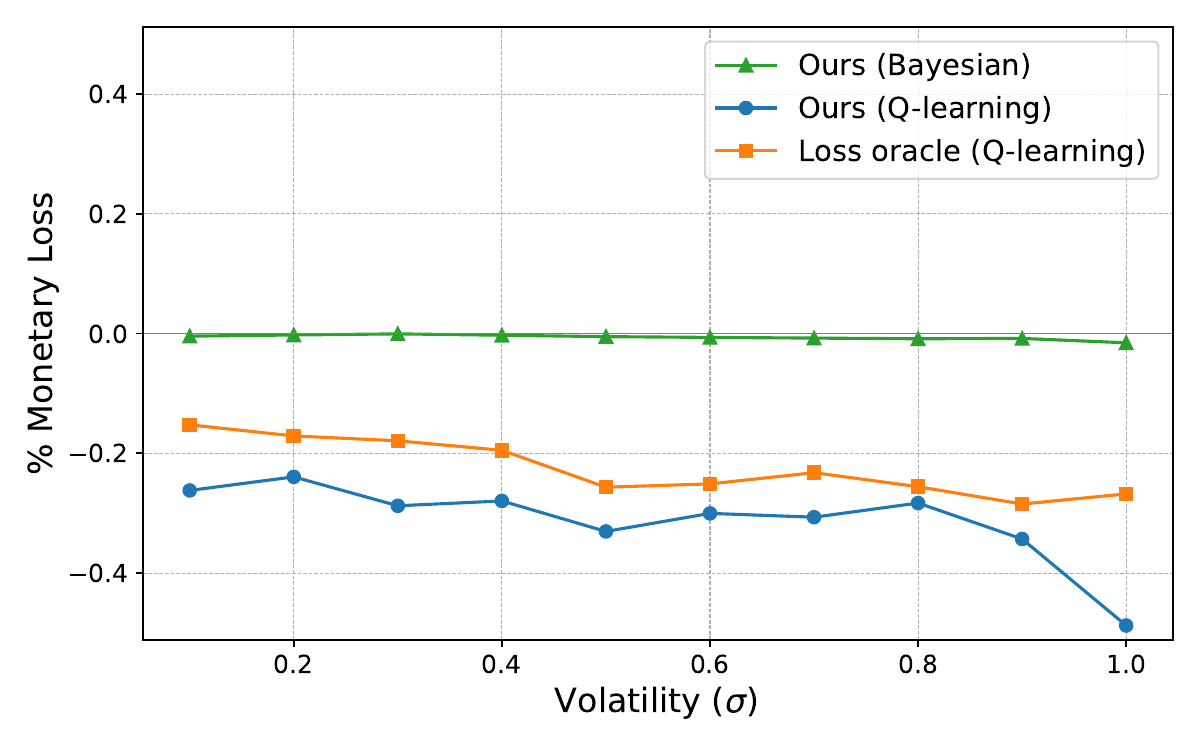}
  \caption{Percentage monetary loss per trade of our market maker is comparable with the algorithm which has access to the loss oracle}
  \label{fig:loss_avg_vs_sigma2}
\end{subfigure}
\hfill
\begin{subfigure}[t]{0.45\textwidth}
  \centering
%   \fbox{\rule[-.5cm]{0cm}{6cm} \rule[-.5cm]{6cm}{0cm}}
\hspace*{-0.25in}
 \includegraphics[width=1.2\textwidth]{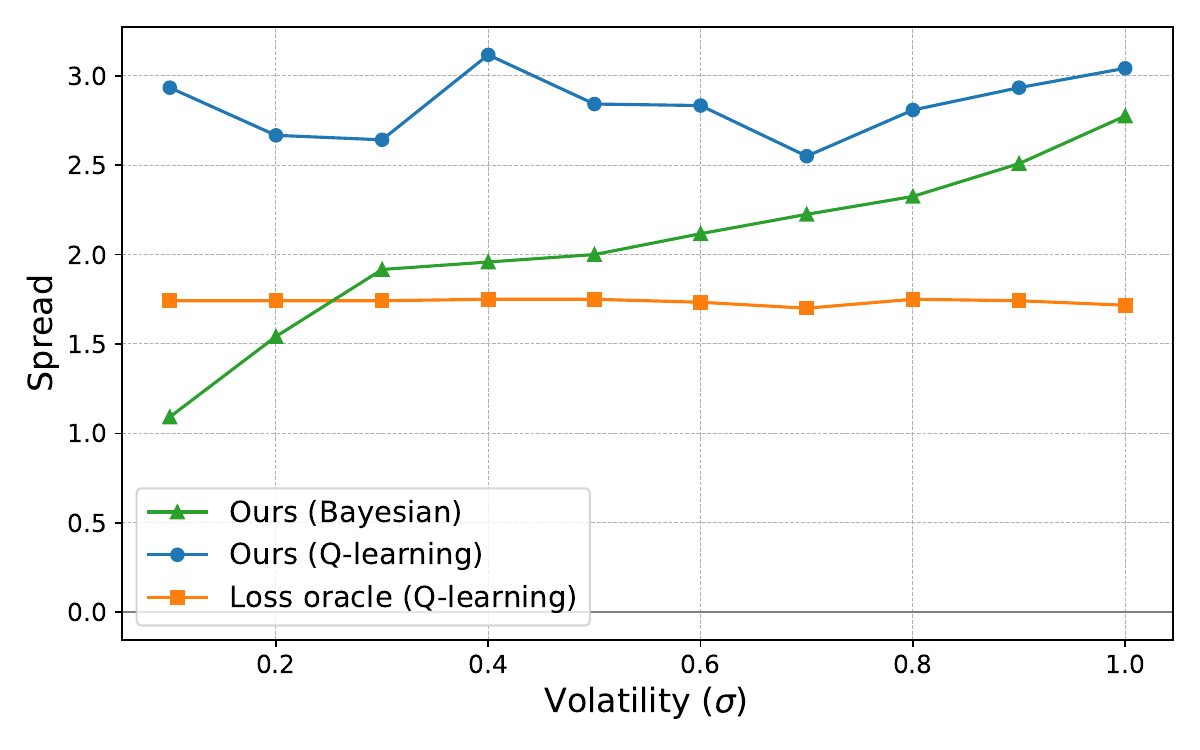}
  \caption{Larger spread observed without access to the loss oracle}
  \label{fig:spread_avg_vs_sigma2}
\end{subfigure}

\caption{Algorithm \prettyref{alg:qt} gives us comparable monetary loss per trade as running the algorithm with an oracle. The Bayesian algorithm gives loss close to zero, which is optimally efficient. The plots are against varying volatility for $\alpha=0.9$.}
\label{fig:monetary_loss2}
\end{figure}

\begin{figure}[hbt!]

\begin{subfigure}[t]{0.45\textwidth}
  \centering
%   \fbox{\rule[-.5cm]{0cm}{6cm} \rule[-.5cm]{6cm}{0cm}}
\hspace*{-0.25in}
 \includegraphics[width=1.2\textwidth]{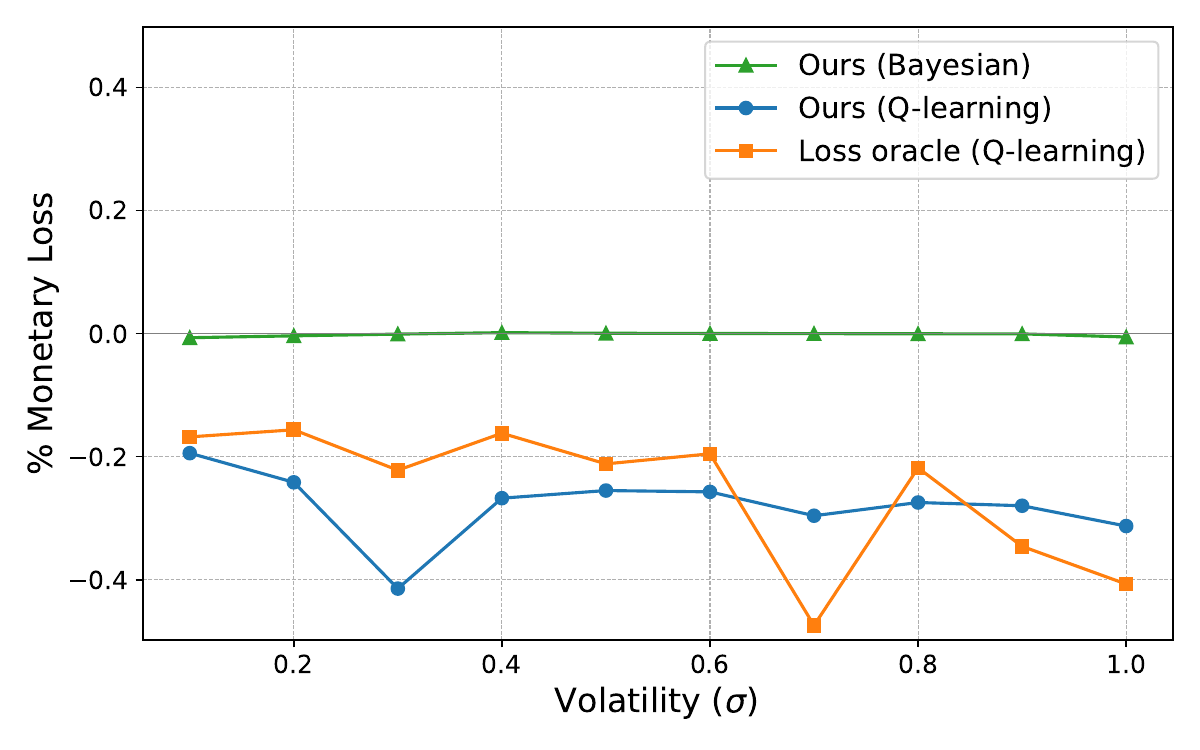}
  \caption{Percentage monetary loss per trade of our market maker is comparable with the algorithm which has access to the loss oracle}
  \label{fig:loss_avg_vs_sigma3}
\end{subfigure}
\hfill
\begin{subfigure}[t]{0.45\textwidth}
  \centering
%   \fbox{\rule[-.5cm]{0cm}{6cm} \rule[-.5cm]{6cm}{0cm}}
\hspace*{-0.25in}
 \includegraphics[width=1.2\textwidth]{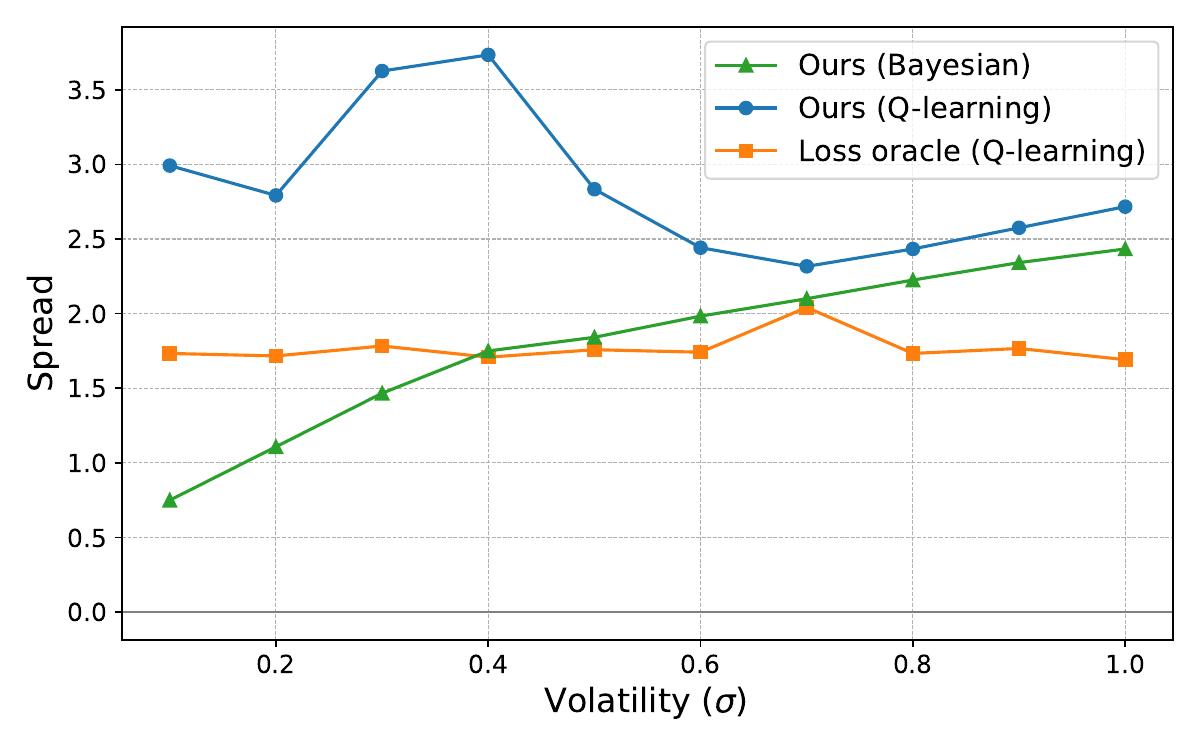}
  \caption{Larger spread observed without access to the loss oracle}
  \label{fig:spread_avg_vs_sigma3}
\end{subfigure}

\caption{Algorithm \prettyref{alg:qt} gives us comparable monetary loss per trade as running the algorithm with an oracle. The Bayesian algorithm gives loss close to zero, which is optimally efficient. The plots are against varying volatility for $\alpha=0.8$.}
\label{fig:monetary_loss3}
\end{figure}

\begin{figure}[hbt!]

\begin{subfigure}[t]{0.45\textwidth}
  \centering
%   \fbox{\rule[-.5cm]{0cm}{6cm} \rule[-.5cm]{6cm}{0cm}}
\hspace*{-0.25in}
 \includegraphics[width=1.2\textwidth]{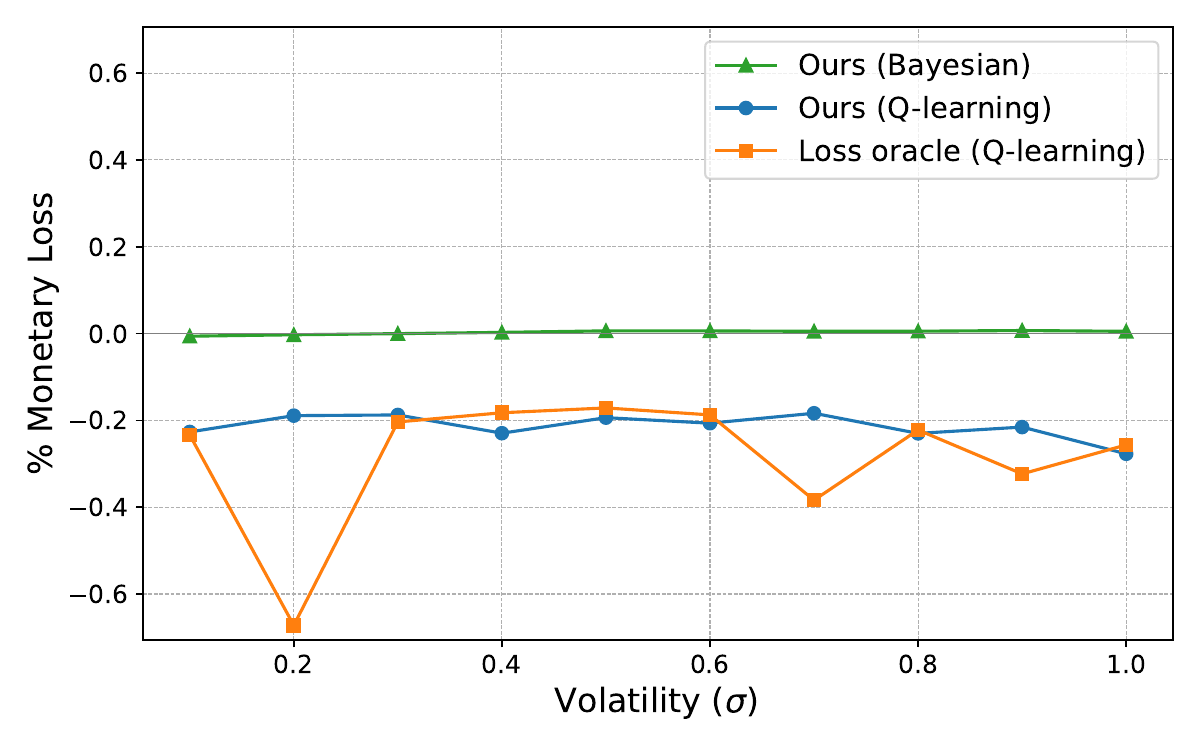}
  \caption{Percentage monetary loss per trade of our market maker is comparable with the algorithm which has access to the loss oracle}
  \label{fig:loss_avg_vs_sigma4}
\end{subfigure}
\hfill
\begin{subfigure}[t]{0.45\textwidth}
  \centering
%   \fbox{\rule[-.5cm]{0cm}{6cm} \rule[-.5cm]{6cm}{0cm}}
\hspace*{-0.25in}
 \includegraphics[width=1.2\textwidth]{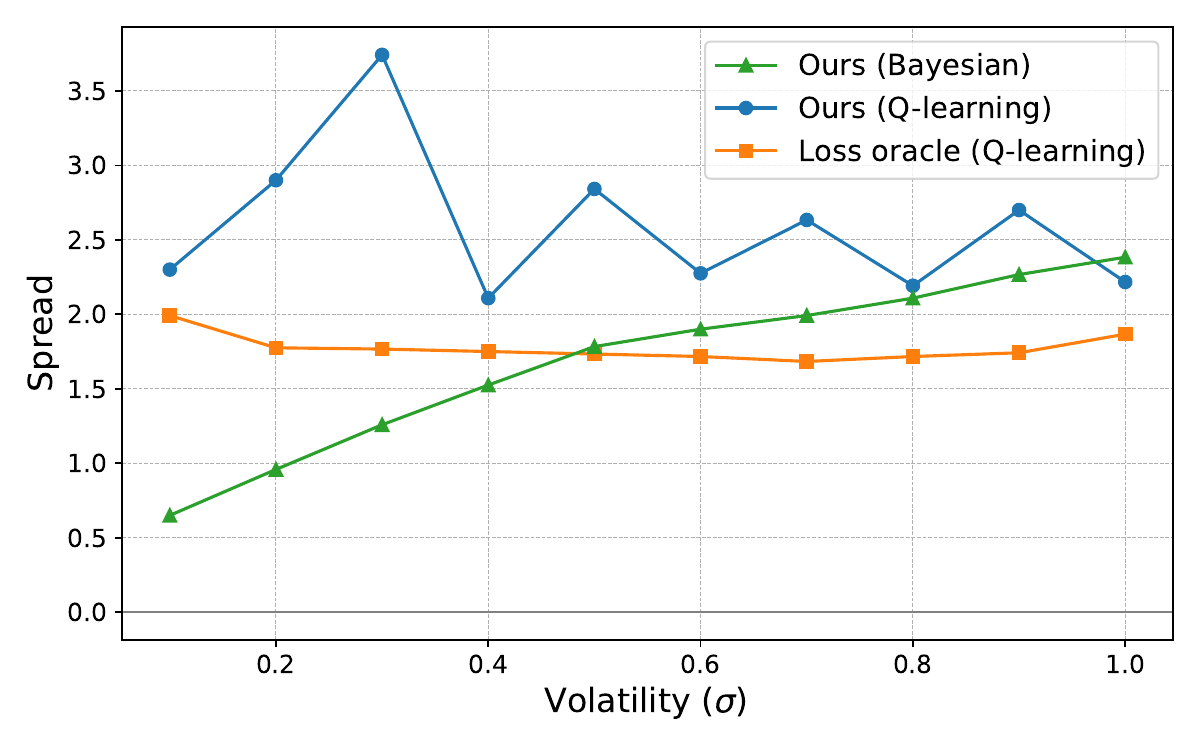}
  \caption{Larger spread observed without access to the loss oracle}
  \label{fig:spread_avg_vs_sigma4}
\end{subfigure}

\caption{Algorithm \prettyref{alg:qt} gives us comparable monetary loss per trade as running the algorithm with an oracle. The Bayesian algorithm gives loss close to zero, which is optimally efficient. The plots are against varying volatility for $\alpha=0.7$.}
\label{fig:monetary_loss4}
\end{figure}

\begin{figure}[hbt!]

\begin{subfigure}[t]{0.45\textwidth}
  \centering
%   \fbox{\rule[-.5cm]{0cm}{6cm} \rule[-.5cm]{6cm}{0cm}}
\hspace*{-0.25in}
 \includegraphics[width=1.2\textwidth]{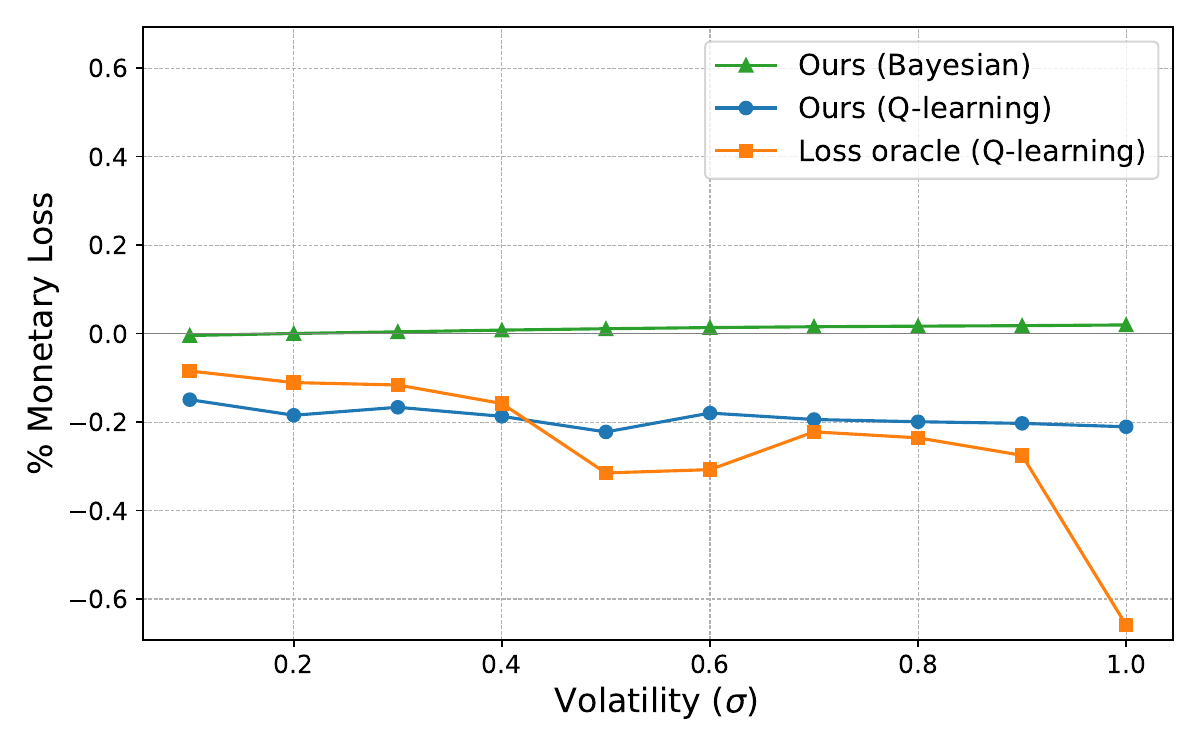}
  \caption{Percentage monetary loss per trade of our market maker is comparable with the algorithm which has access to the loss oracle}
  \label{fig:loss_avg_vs_sigma5}
\end{subfigure}
\hfill
\begin{subfigure}[t]{0.45\textwidth}
  \centering
%   \fbox{\rule[-.5cm]{0cm}{6cm} \rule[-.5cm]{6cm}{0cm}}
\hspace*{-0.25in}
 \includegraphics[width=1.2\textwidth]{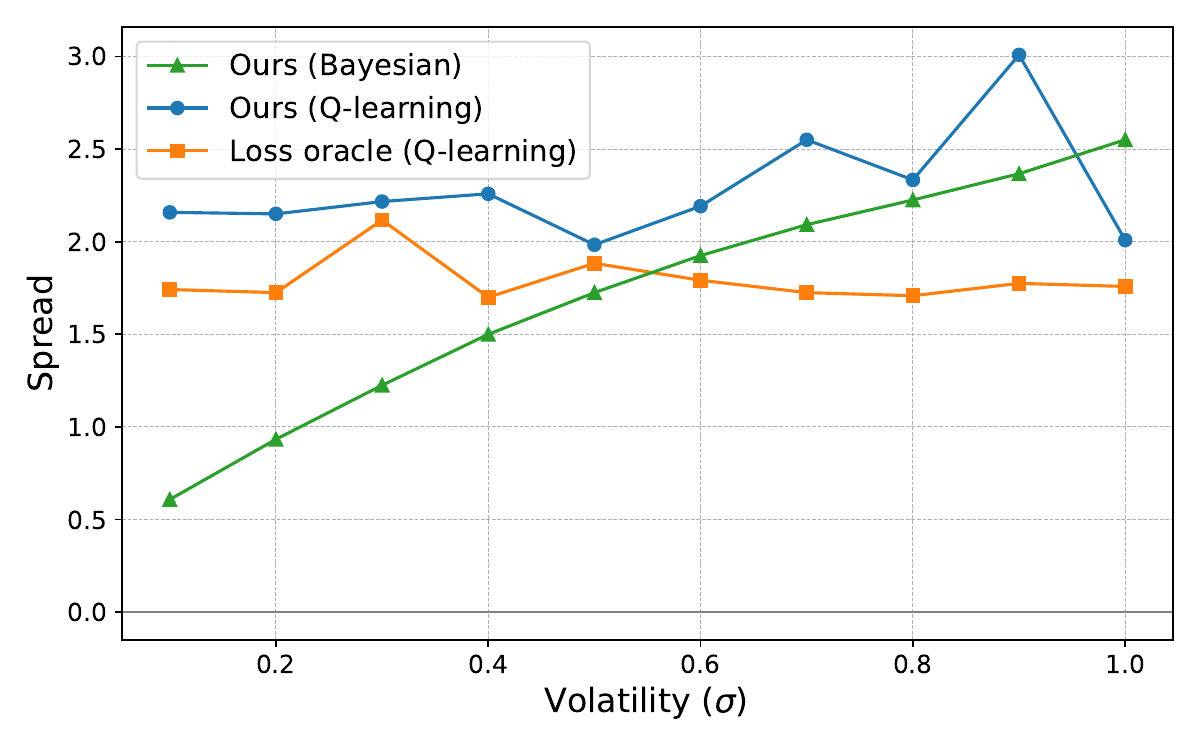}
  \caption{Larger spread observed without access to the loss oracle}
  \label{fig:spread_avg_vs_sigma5}
\end{subfigure}

\caption{Algorithm \prettyref{alg:qt} gives us comparable monetary loss per trade as running the algorithm with an oracle. The Bayesian algorithm gives loss close to zero, which is optimally efficient. The plots are against varying volatility for $\alpha=0.6$.}
\label{fig:monetary_loss5}
\end{figure}

\begin{figure}[hbt!]

\begin{subfigure}[t]{0.45\textwidth}
  \centering
%   \fbox{\rule[-.5cm]{0cm}{6cm} \rule[-.5cm]{6cm}{0cm}}
\hspace*{-0.25in}
 \includegraphics[width=1.2\textwidth]{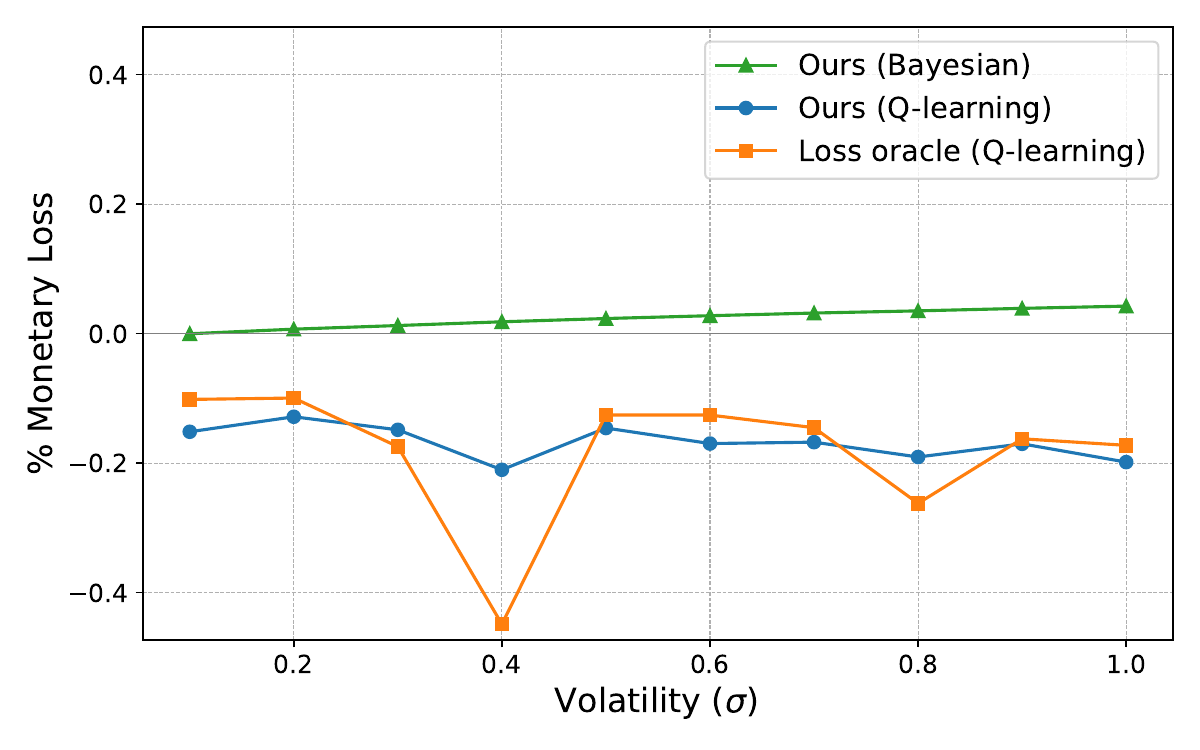}
  \caption{Percentage monetary loss per trade of our market maker is comparable with the algorithm which has access to the loss oracle}
  \label{fig:loss_avg_vs_sigma6}
\end{subfigure}
\hfill
\begin{subfigure}[t]{0.45\textwidth}
  \centering
%   \fbox{\rule[-.5cm]{0cm}{6cm} \rule[-.5cm]{6cm}{0cm}}
\hspace*{-0.25in}
 \includegraphics[width=1.2\textwidth]{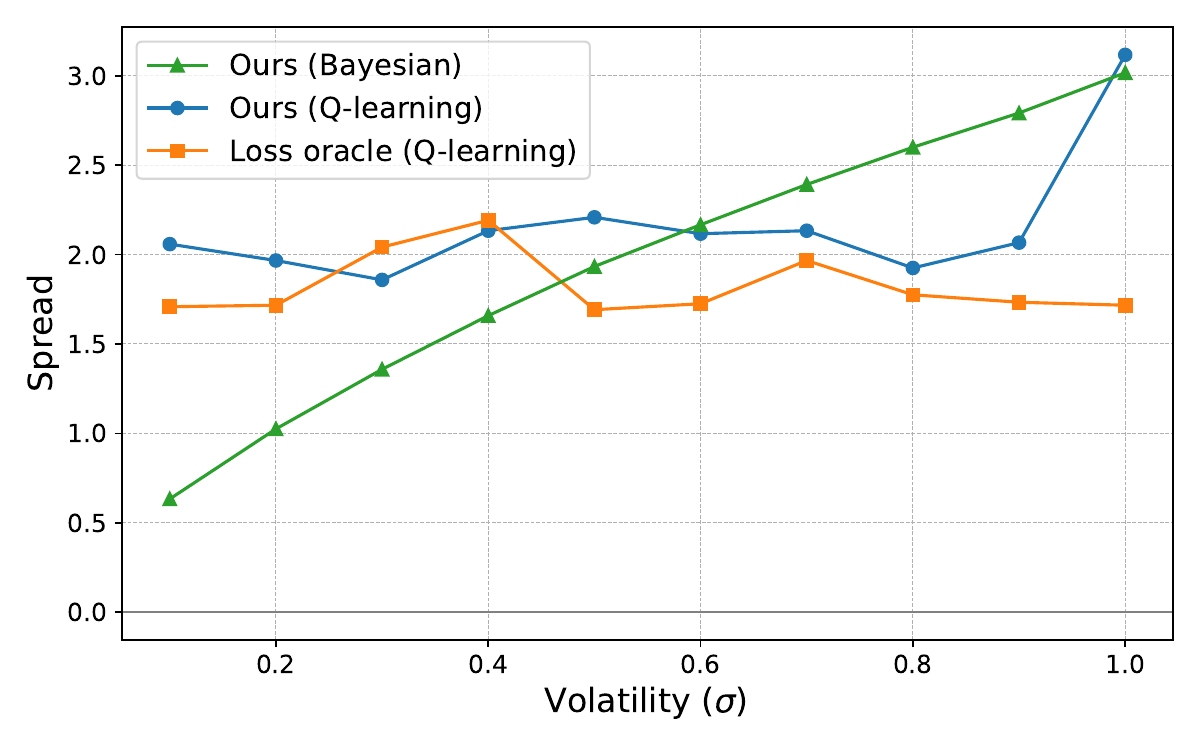}
  \caption{Larger spread observed without access to the loss oracle}
  \label{fig:spread_avg_vs_sigma6}
\end{subfigure}

\caption{Algorithm \prettyref{alg:qt} gives us comparable monetary loss per trade as running the algorithm with an oracle. The Bayesian algorithm gives loss close to zero, which is optimally efficient. The plots are against varying volatility for $\alpha=0.5$.}
\label{fig:monetary_loss6}
\end{figure}

\begin{figure}[hbt!]

\begin{subfigure}[t]{0.45\textwidth}
  \centering
%   \fbox{\rule[-.5cm]{0cm}{6cm} \rule[-.5cm]{6cm}{0cm}}
\hspace*{-0.25in}
 \includegraphics[width=1.2\textwidth]{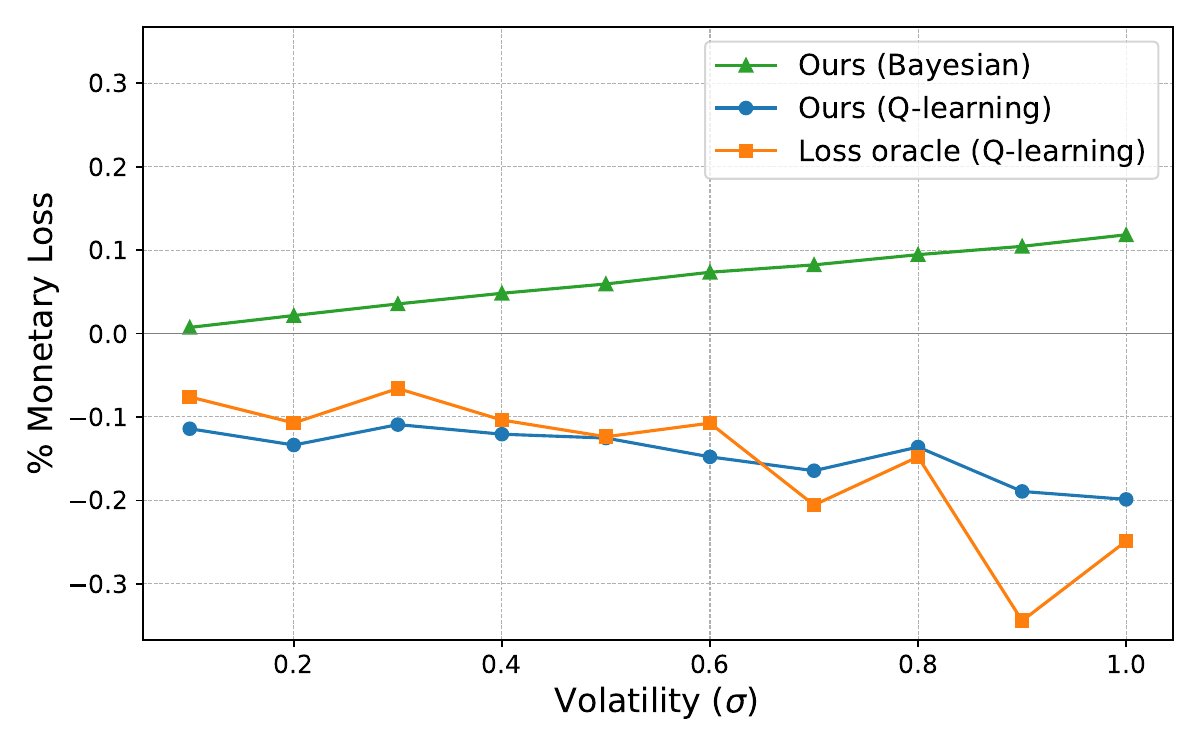}
  \caption{Percentage monetary loss per trade of our market maker is comparable with the algorithm which has access to the loss oracle}
  \label{fig:loss_avg_vs_sigma7}
\end{subfigure}
\hfill
\begin{subfigure}[t]{0.45\textwidth}
  \centering
%   \fbox{\rule[-.5cm]{0cm}{6cm} \rule[-.5cm]{6cm}{0cm}}
\hspace*{-0.25in}
 \includegraphics[width=1.2\textwidth]{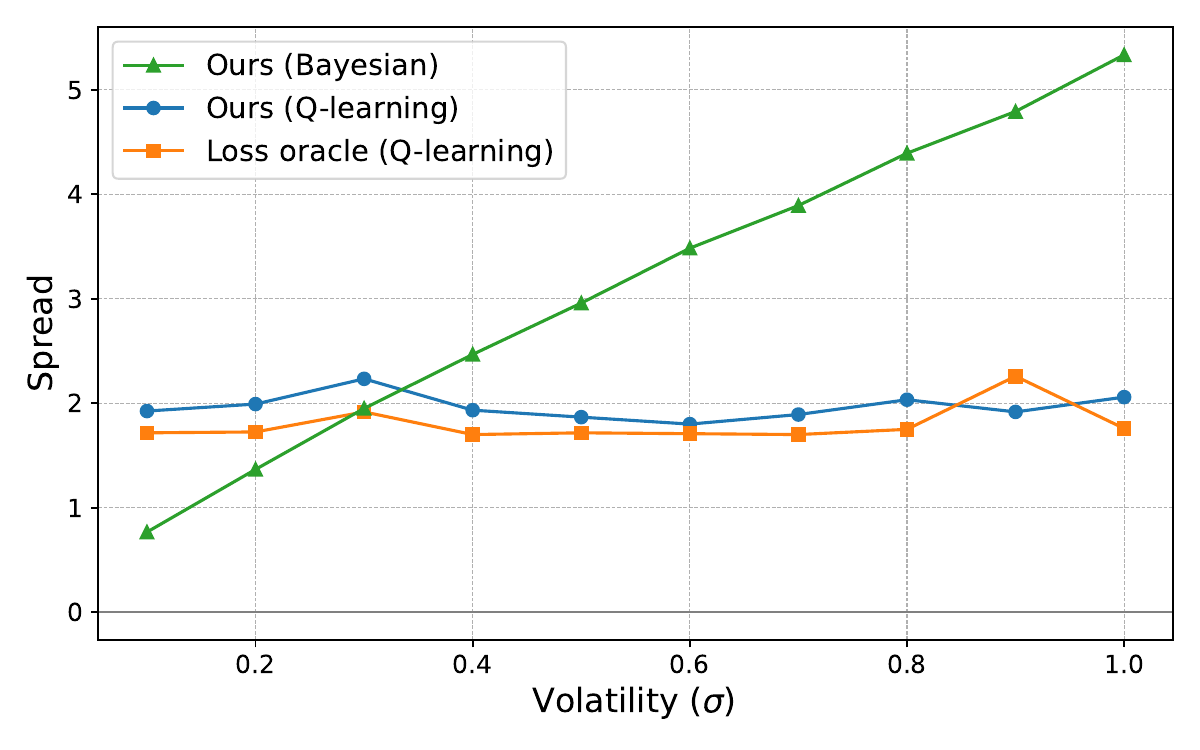}
  \caption{Larger spread observed without access to the loss oracle}
  \label{fig:spread_avg_vs_sigma7}
\end{subfigure}

\caption{Algorithm \prettyref{alg:qt} gives us comparable monetary loss per trade as running the algorithm with an oracle. The Bayesian algorithm gives loss close to zero, which is optimally efficient. The plots are against varying volatility for $\alpha=0.4$.}
\label{fig:monetary_loss7}
\end{figure}

\begin{figure}[hbt!]

\begin{subfigure}[t]{0.45\textwidth}
  \centering
%   \fbox{\rule[-.5cm]{0cm}{6cm} \rule[-.5cm]{6cm}{0cm}}
\hspace*{-0.25in}
 \includegraphics[width=1.2\textwidth]{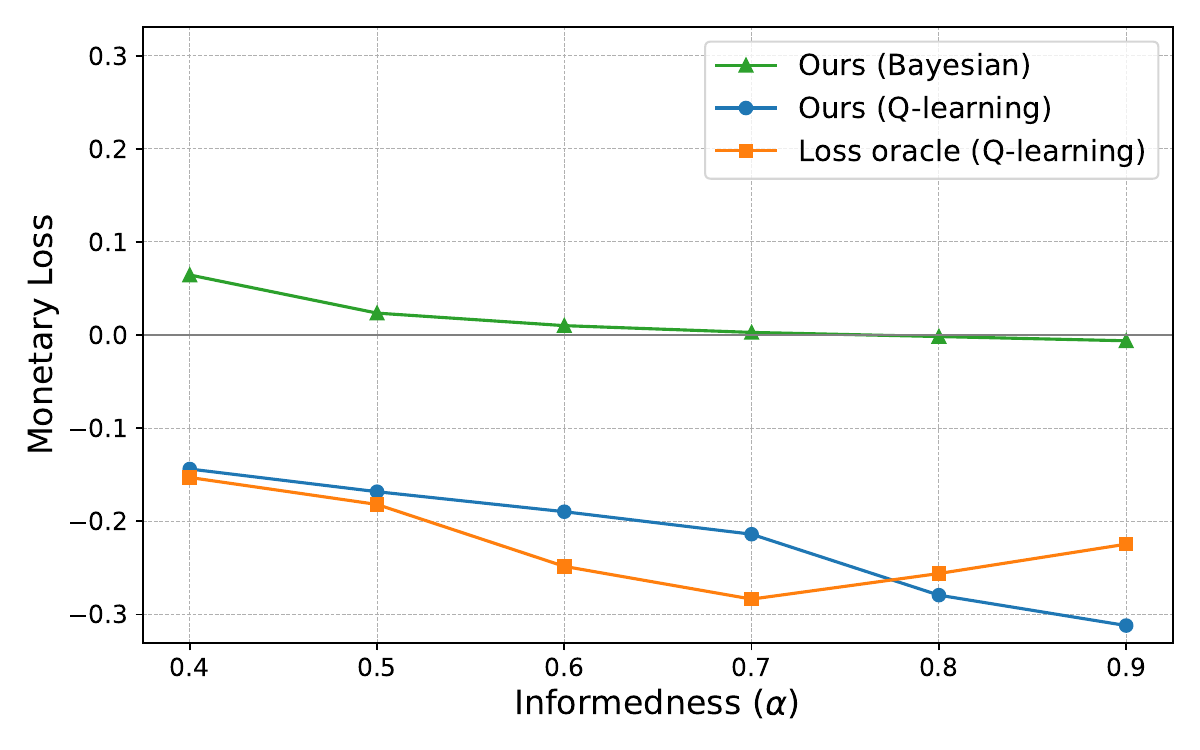}
  \caption{Percentage monetary loss per trade of our market maker is comparable with the algorithm which has access to the loss oracle}
  \label{fig:loss_avg_vs_sigma_alpha}
\end{subfigure}
\hfill
\begin{subfigure}[t]{0.45\textwidth}
  \centering
%   \fbox{\rule[-.5cm]{0cm}{6cm} \rule[-.5cm]{6cm}{0cm}}
\hspace*{-0.25in}
 \includegraphics[width=1.2\textwidth]{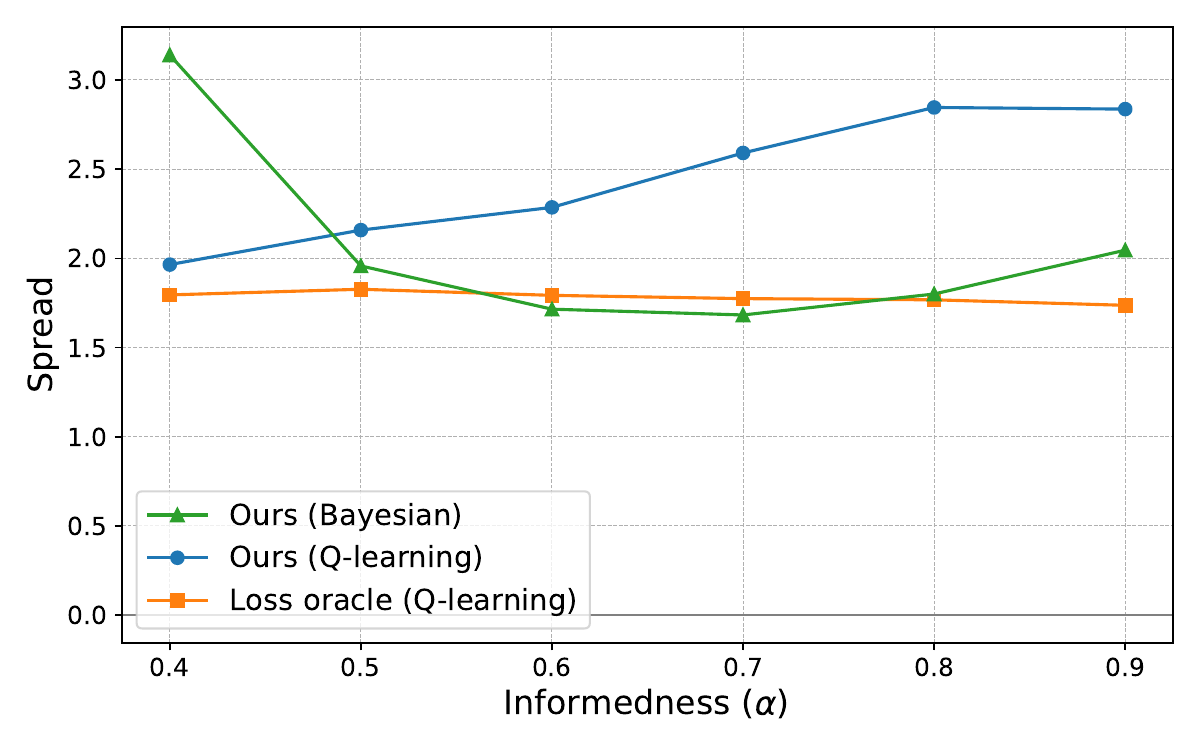}
  \caption{Larger spread observed without access to the loss oracle}
  \label{fig:spread_avg_vs_sigma_alpha}
\end{subfigure}

\caption{Algorithm \prettyref{alg:qt} gives us comparable monetary loss per trade as running the algorithm with an oracle. The Bayesian algorithm gives loss close to zero, which is optimally efficient. The plots are against varying informedness $\alpha$, and are averaged over values of volatility $\sigma$.}
\label{fig:monetary_loss_alpha}
\end{figure}

%%%%%%%%%%%%%%%%%%%%%%%%%%%%%%%%%%%%%%%%%%%%%%%%%%%%%%%%%%%%

\end{document}